\def\EMAIL#1{\href{mailto:#1}{#1}} 
\DeclareMathOperator*{\argmin}{argmin}
\DeclareMathOperator*{\argmax}{argmax}
\DeclareMathOperator{\dreg}{\mathrm{D-Reg}\xspace}
\newcommand{\deltaIk}{\Delta_{\infty}^K\xspace}
\newcommand{\deltaI}{\Delta_{\infty}\xspace}
\newtheorem{definition}{Definition}
\newtheorem{claim}{Claim}
\newtheorem{remark}{Remark}
\title{MNL-Bandit in non-stationary environments}
\author{
  \vspace{3mm}
  Ayoub Foussoul\\
  \vspace{-4mm}
  {\footnotesize Industrial Engineering and Operations Research, Columbia University, New York, USA, 
  \EMAIL{af3209@columbia.edu}}
  \and
  \vspace{-3mm}
  Vineet Goyal\\
  \vspace{-4mm}
  {\footnotesize Industrial Engineering and Operations Research, Columbia University, New York, USA, 
  \EMAIL{vg2277@columbia.edu}}
  \and
  \vspace{-3mm}
  Varun Gupta\\
  \vspace{-4mm}
  {\footnotesize Booth School of Business, University of Chicago, Chicago, USA, 
  \EMAIL{varun.gupta@chicagobooth.edu}}
}
\date{}
\begin{document}

\maketitle

\begin{abstract}%
\noindent In this paper, we study the MNL-Bandit problem in a non-stationary environment and present an algorithm with a worst-case expected regret of $\Tilde{O}\left( \min \left\{ \sqrt{NTL}\;,\; N^{\frac{1}{3}}(\deltaIk)^{\frac{1}{3}} T^{\frac{2}{3}} + \sqrt{NT}\right\}\right)$. Here $N$ is the number of arms, $L$ is the number of changes and $\deltaIk$ is a variation measure of the unknown parameters. Furthermore, we show  matching lower bounds  on the expected regret (up to logarithmic factors), implying that our algorithm is optimal. Our approach builds upon the epoch-based algorithm for stationary MNL-Bandit in \cite{mnlucb}. However, non-stationarity poses several challenges and we introduce new techniques and ideas to address these. In particular, we give a tight characterization for the bias introduced in the estimators due to non stationarity and derive new concentration bounds.
\end{abstract}


\section{Introduction}

The MNL-Bandit problem is a combinatorial variant of the traditional stochastic multi-armed Bandit (MAB) problem. In this problem, we are given a set of $N$ arms with known payoffs, $r_1,\ldots,r_N$ and unknown parameters $\omega_1, \ldots, \omega_N$. At each time step, the decision maker selects a subset, $S$ of at most $K$ arms. Then Nature randomly selects arm $i \in S$ with probability that follows a multinomial logit (MNL) model and is given by $\omega_i/(1+ \sum_{j \in S} \omega_j)$.
The decision-maker receives a payoff $r_i$ corresponding to arm $i$. Nature selects no arm at all with probability $1/(1+\sum_{j\in S} \omega_j)$ in which case the decision maker receives a payoff of $0$. The parameters $\omega_1, \ldots, \omega_N$ are referred to as the {\em attraction parameters} of the MNL model.

The MNL-bandit problem arises commonly in an online retail setting where $N$ arms correspond to $N$ substitutable products with known prices. The customers arrive sequentially and choose according to the same MNL model with unknown attraction parameters. For each customer, the seller offers a subset of at most $K$ products and collects the payoff corresponding to the random choice of the customer from the offered set (or possibly no payoff if no purchase happened) according to the MNL model. The goal of the seller is to maximize the total expected payoff over $T$ customers.

MNL-Bandit is a well studied problem. \cite{mnlucb} consider the MNL-bandit problem in a setting where the attraction parameters are stationary and give a UCB algorithm that achieves a regret of $\Tilde{O}(\sqrt{NT})$. This  is optimal up to logarithmic factors \citep{chen2017note}. In particular, they present an epoch based algorithm that allows estimation of the attraction parameters via sampling; thereby, overcoming the challenge of a non-linear expected payoff function.

In many applications however, the MNL parameters $\omega_1, \dots, \omega_N$ change over the horizon of interest. In online retailing for instance, the attraction parameters of the products typically vary over the selling horizon. Motivated by these settings, we study in this paper the MNL-Bandit problem under non-stationary parameters. In particular, we consider an adversarial model of non-stationarity where the attraction parameters are chosen adversarially, and develop an online learning algorithm minimizing the worst-case {\em dynamic regret} of the decision maker (the gap between the total recovered payoff and the total payoff from taking the optimal action at each time step). We give regret bounds that are in function of the number of changes (or switches) in the attraction parameters, and of a variation parameter characterizing the total amount of change in the attraction parameters. This follows a series of works on reinforcement learning problems in non-stationary environments where a similar model of non-stationarity is considered for a variety of learning problems including the multi-armed bandit problem \citep{auer2002nonstochastic,garivier2011upper,besbes2014stochastic,auer2019adaptively,wei2021non}, contextual bandits \citep{luo2018efficient,chen2019new}, linear bandits \citep{cheung2019learning,russac2019weighted,zhao2020simple} and more \citep{hazan2009efficient,yang2016tracking,russac2020algorithms,mao2020model,touati2020efficient}.

A widely used approach to handle non-stationarity in bandit problems is to run an algorithm with good performance in stationary environments (that we refer to as a base algorithm), and keep testing for non-stationarity. Once non-stationarity is detected, we restart the base algorithm. The dynamic regret from this approach depends on the regret that the base algorithm accumulates in the ``near-stationary" environment before non-stationarity is detected. In a classical setting of $n$-arm multi-armed bandit problem, the base algorithm computes an unbiased estimator of the unknown parameters in each period and evaluating the regret accumulated by the base algorithm in a near-stationary environment is relatively easy. For the MNL-bandit problem where the expected reward function is non-linear, even in  a stationary setting, we do not know of an algorithm with good regret guarantees that computes unbiased estimators of the attraction parameters in each period. \cite{mnlucb} give an optimal algorithm for MNL-bandit problem in a stationary setting. They consider an epoch based algorithm that uses multiple time steps (an epoch) to get a single unbiased estimate of the attraction parameters. However, in a near-stationary environment where the attraction parameters vary with time, evaluating the regret accumulated by an epoch based approach becomes significantly more challenging and requires new analysis techniques.

\subsection{Our contributions}

We give an algorithm for the MNL-bandit problem in non-stationary environments with worst-case dynamic regret of $\tilde{O}\left( \min \left\{ \sqrt{NTL}\;,\; N^{\frac{1}{3}}(\deltaIk)^{\frac{1}{3}} T^{\frac{2}{3}} + \sqrt{NT}\right\}\right).$ Here $L$ is the number of switches defined formally as $
L := 1+\sum_{t=1}^{T-1} \bm{1}\{\bm{\omega}(t) \neq \bm{\omega}(t+1)\}
$, where $\bm{\omega}(t)$ denotes the vector of attraction parameters at round $t$. Also $\deltaIk$ is the variation in norm $L^{2K}_\infty$ defined as $
\deltaIk := \sum_{t=1}^{T-1} \lVert \bm{\omega}(t) - \bm{\omega}(t+1) \rVert_{L^{2K}_\infty},
$ where $\lVert x \rVert_{L^{2K}_\infty} = \sup_{S\subset [N]:|S| \leq 2K} \sum_{r \in S} |x_r|$ for $x \in \mathbb{R}^N$. We refer to this algorithm as Exploration-Exploitation algorithm for MNL-bandit in Non-Stationary Environments (EEMNSE). EEMNSE consists of an adaptation of the UCB-based algorithm of \cite{mnlucb} to near-stationary environments (see Algorithm~\ref{alg:near-stationary}), combined with the algorithm MASTER of \cite{wei2021non} which is a meta algorithm that takes in input a base algorithm (here Algorithm~\ref{alg:near-stationary}), schedules this algorithm in a multi-scaled manner and restarts when non-stationarity of the environment is detected.

\paragraph{Challenges and key technical contributions.} 

One of the main difficulties in the MNL-bandit problem arises from non-linearity of the expected payoff function even in a stationary environment.
\cite{mnlucb} overcome this by developing an epoch based approach where the same assortment is offered for multiple consecutive periods (referred to as an epoch) until a no-purchase is observed.  \cite{mnlucb} show that the number of times an arm is chosen within an epoch is an unbiased estimator of the attraction parameter of the corresponding arm and has a geometric distribution \footnote{We would like to remark that the distribution of the estimators is not exactly geometric due to the truncation from a finite time horizon. We address this technical issue in our analysis.}. However, this is not true in a non-stationary environment. In particular, because the attraction parameters can change within an epoch, this estimator is biased and its distribution depends on the adversarial changes to the parameters. The main challenge consists of getting a tight characterization of the bias caused by non-stationarity.

We introduce a new stochastic process that we refer to as the {\em uninterrupted purchases process} (UPP) that we use to get a tight characterization of the bias. In particular, we sandwich UPP (in the usual stochastic order) between two geometric random variables with bounded difference of means. To analyse the regret, we show new concentration bounds for (UPP) by leveraging our stochastic ordering. Specifically, we present non-symmetric concentration bounds for (UPP) in the sense that one of the bounds is with respect to the mean of the stochastic upper-bounds and the other is with respect to the mean of the stochastic lower-bounds. We use our concentration bounds to evaluate the gap between the attraction parameters and their estimators. Finally, we translate this gap into a gap in the payoff function using a special Lipschitz property of the payoff function.
Our approach might be of independent interest in dealing with non-stationarity in epoch-based algorithms.

\paragraph{Matching lower-bounds.} We also prove lower-bounds for the MNL-Bandit problem in non-stationary environments with respect to the number of switches $L$ and a novel variation measure $\deltaIk$ respectively. In particular, we show that any polynomial time algorithm for the MNL-Bandit problem in a non-stationary environment with $L$ switches must incur a regret of at least $\Tilde{\Omega}\left( \sqrt{NLT}\right)$. Furthermore, in a non-stationary environment with variation $\deltaIk$, any algorithm must incur a regret $\Tilde{\Omega}\left( (N)^{\frac{1}{3}}(\deltaIk)^{\frac{1}{3}} T^{\frac{2}{3}}\right)$. Both of these bounds match our upper-bound for our online algorithm up to logarithmic factors. Therefore, our algorithm is near-optimal (up to logarithmic factors) for the MNL-Bandit problem in non-stationary environments.

We also show that the variation measure $\deltaIk$ we introduce which is different from the variation in norm infinity $L_{\infty}$ (that we denote by $\deltaI$) that is commonly used in other bandit settings, provides a strictly better characterization of the regret. In particular, we show that any polynomial time algorithm must incur a regret of $\Tilde{\Omega}\left( (KN)^{\frac{1}{3}}(\deltaI)^{\frac{1}{3}} T^{\frac{2}{3}}\right)$ in the worst case (note that this bound is acheived by our algorithm). Since $\deltaIk \leq 2K \cdot \deltaI$, the upper-bound $\Tilde{O}\left( N^{\frac{1}{3}}(\deltaIk)^{\frac{1}{3}} T^{\frac{2}{3}}\right)$ is a strictly better characterization of the performance of our algorithm than $\Tilde{O}\left( (NK)^{\frac{1}{3}}(\deltaI)^{\frac{1}{3}} T^{\frac{2}{3}}\right)$.

\subsection{Other related literature}

MNL-Bandit has been first considered by \cite{Rusmevichienton}, where the authors develop an ``Explore-Then-Commit" algorithm with worst-case regret $O(N^2\log^2 T)$, assuming the knowledge of the ``gap" between the optimal and next-best assortment. This bound has been later improved to $O(N\log T)$ by \cite{saurezeevi} under similar assumptions. \cite{mnlucb} were the first to develop a parameter-free algorithm for MNL-bandit. Their algorithm achieves a worst-case regret of $\Tilde{O}(\sqrt{NT})$. They also prove a lower bound of $\Omega(\sqrt{NT \slash K})$ for the problem which has been subsequently improved to $\Omega(\sqrt{NT})$ by  \cite{chen2017note}. More recently, \cite{mnlthomson} develop a Thomson sampling based algorithm for the MNL-bandit problem with worst-case regret of $\Tilde{O}(\sqrt{NT})$ and show that their algorithm performs well in practice compared to previous methods. Both algorithms of \cite{mnlucb,mnlthomson} rely on an epoch-based approach to estimation in the sense that multiple time steps (epoch) are used to get a single sample from each estimator of the unknown parameters.

Our problem is closely related to the multi-armed bandit (MAB) paradigm \citep{robbins1952some}. There is a large body of literature studying the multi-armed bandit problem (and its extensions) in non-stationary environments. Some of these works consider a structured model of non-stationarity including the restless bandits \citep{whittle1988restless,tekin2012online}, the rested bandits \citep{gittins1979bandit,tekin2012online} and more \citep{slivkins2008adapting,rottingbandits,kleinberg2018recharging}. Other works consider an adversarial model of non-stationarity such as ours. Among these, works such as \cite{auer2002nonstochastic,bubeck2012best,seldin2014one,auer2016algorithm} where the performance is compared to the single best action, and others considering a dynamic notion of regret. In these later works, bounds on the worst-case dynamic regret is given in function of parameters characterizing the degree of non-stationarity of the environment. Two widely considered parameters are the number of switches $L$ which is the number of times the unknown parameters change and the variation $\Delta$ which characterizes the total amount of change in the unknown parameters over the whole time horizon \citep{garivier2011upper,besbes2014stochastic,besbes2015non,liu2018change,luo2018efficient,cheung2019learning,chen2019new,auer2019adaptively,chen2021combinatorial,wei2021non}. Other parameters have also been considered  in the literature \citep{wei2016tracking,suk2022tracking,abbasi2022new}.

\section{Problem formulation}

In the MNL-Bandit problem, a decision maker is given a choice space of $N$ items with payoffs (or revenues) $r_1, \dots, r_N$ and an integer $K \leq N$. Nature chooses a number of decision rounds $T$ and chooses an attraction parameters $\omega_j(t)$ for every round $t$ and item $j$. Let $\bm{\omega}(t) := (\omega_1(t), \dots, \omega_N(t))$ denote the vector of attraction parameters at round $t$. In each round $t$, the decision maker selects a subset $S(t) \subset [N]$ of at most $K$ items (called assortment). Then nature chooses (or purchases) an item $j(t) \in S(t)$ (or no item at all) according to an MNL model of choice with attraction parameters $\bm{\omega}(t)$. Finally, the decision maker collects a payoff $r(t) = r_{j(t)}$. 

The MNL choice model with attraction parameters $\bm{\omega}=(\omega_1, \dots, \omega_N)$ is such that for every $S \subset [N]$, when the offered assortment is $S$, nature chooses item $j \in S$ with probability, $p^{\bm{\omega}}(j, S) := \omega_j/(1 + \sum_{r \in S} \omega_r),$ or no item at all with probability $p^{\bm{\omega}}(0, S) := 1/(1 + \sum_{r \in S} \omega_r).$
One can think of the ``no-choice" or ``no-purchase" option as an extra item (item $0$) that is always available to nature and whose attraction parameter is $\omega_0 = 1$. Following \cite{mnlucb}  we make the assumption that for every time step $t$, $\omega_j(t) \leq \omega_0=1$. This assumption is usually verified in practice. In online retailing for example, the most common choice of a customer is to buy nothing at all. We also assume that $r_j \leq 1$ for every item $j$. This assumption is made for simplicity and does not change our results quantitatively as soon as the rewards are bounded by some positive constant. 

The goal of the decision maker is to maximize, at each round $t$, her expected payoff given by,
$
R(S(t), \bm{\omega}(t)) := \sum_{j \in S(t)} r_j p^{\bm{\omega}(t)}(j, S(t)).
$
However, as the attraction parameters $(\bm{\omega}(t))_{t \in [1,T]}$ are unknown, the decision maker seeks instead to minimize her dynamic regret defined as,
$$
\dreg = \sum_{t=1}^T R(S^*(t),\bm{\omega}(t)) - r(t),
$$
where $S^*(t)$ is the assortment maximizing the expected payoff under the parameters $\bm{\omega}(t)$. 

We design an online algorithm for the decision maker and evaluate the worst-case dynamic regret of our algorithm as a function of the time horizon $T$, the number of switches $L$, the variation $\deltaIk$ and the other known parameters of the problem.


\section{Algorithm}
\label{sec:algo}

In this section, we present our algorithm for the MNL-Bandit problem in non-stationary environments that we refer to as Exploration-Exploitation algorithm for MNL-bandit in Non-Stationary Environments (EEMNSE). Our algorithm involves a substantial adaptation of the UCB algorithm of \cite{mnlucb} (which only works in stationary setting) to near-stationary environments. We begin by a formal definition of ``a near-stationary environment". We then present our algorithm and the dynamic regret upper-bound it achieves. Finally, we discuss the keys ideas and techniques of the analysis.

\begin{definition}
\label{def:near-stat}
    Consider a function $\Delta:[1,T] \rightarrow [0,\infty)$ such that for every $t \in [1,T-1]$ it holds that $\Delta(t) \geq \max_{S \subset [N]:|S| \leq K} |R(S, \bm{\omega}(t)) - R(S, \bm{\omega}(t+1))|$. We refer to this function as a non-stationarity measure. Let $\rho:[1,T] \rightarrow [0, \infty)$ be a non-increasing function such that $\rho(t) \geq \frac{1}{\sqrt{t}}$ and the map $t \rightarrow t \cdot \rho(t)$ is non-decreasing. Then the $(\Delta, \rho)$-near-stationary part of the environment is defined as the time rounds $t \in [1,T]$ such that $\sum_{\tau=1}^{t-1} \Delta(\tau) \leq \rho(t)$.

    We say that an algorithm performs well in $(\Delta, \rho)$-near-stationary environments if the algorithm does not require knowledge of $\Delta$ and for every instance of the problem the algorithm outputs a reward upper-bound $\Hat{R}_t \in [0,1]$ at the beginning of each round $t$ such that the following two conditions hold with probability at least $1-O(\frac{1}{T})$ for every round $t$ belonging to the $(\Delta, \rho)$-near-stationary part of the environment:
    \vspace{-5mm}
    \begin{center}
    \begin{subequations}
    \noindent
    \begin{minipage}{0.5\textwidth}
      \begin{equation} \label{eq:cond1}
      \Hat{R}_t \geq \min_{\tau \in [1,t]}R(S^*(\tau), \bm{\omega}(\tau)) - \sum_{\tau=1}^{t-1}\Delta(\tau)
      \end{equation}
    \end{minipage}%
    \begin{minipage}{0.5\textwidth}
      \begin{equation} \label{eq:cond2}
      \frac{1}{t}\sum_{\tau=1}^{t} \Hat{R}_\tau - r(\tau) \leq \rho(t) + \sum_{\tau=1}^{t-1}\Delta(\tau)
      \end{equation}
    \end{minipage}
    \end{subequations}
    \end{center}
\end{definition}

\noindent Intuitively, the $(\Delta, \rho)$-near-stationary part of the environment is the part where the variability of the attraction parameters (measured through $\Delta$) is not very large (compared to $\rho$). Let us now present our algorithm for MNL-Bandit problem:

\paragraph{Exploration-Exploitation algorithm for MNL-bandit in Non-Stationary Environments (EEMNSE).}

\cite{wei2021non} give a meta algorithm (referred to as MASTER) such that given a base algorithm with good performance in $(\Delta, \rho)$-near-stationary environments for some $\Delta$ and $\rho$, MASTER schedules the base algorithm in a multi-scale manner and uses the reward upper-bounds $\hat{R}_t$ to perform at each time step non-stationarity tests to detect a change in the unknown parameters. Once a change is detected, MASTER restarts. We refer the reader to \cite{wei2021non} for a detailed description. \cite{wei2021non}  bound  the worst-case dynamic regret of the combination of MASTER and the base algorithm as a function of $\Delta$, $\rho$ and other parameters of the problem. Our algorithm is given by the combination of MASTER with the base algorithm that is an adaptation of the UCB algorithm of \cite{mnlucb} to near-stationary environments. In particular, different upper-confidence bounds are needed as a result of new concentration bounds required for good performance in near-stationary environments. We present our adaptation in Algorithm~\ref{alg:near-stationary}. Algorithm~\ref{alg:near-stationary} proceeds in epochs. At each epoch, we offer the assortment that maximizes the expected payoff with respect to the current upper confidence bounds (UCBs) of the attraction parameters. When a ``no-purchase" happens, the epoch ends and the UCBs are updated.

\SetKwInput{Init}{Initialization}
\SetKwComment{Comment}{/*}{*/}
\newcommand\mycommfont[1]{\footnotesize\ttfamily\textcolor{blue}{#1}}
\SetCommentSty{mycommfont}
\begin{algorithm}
\caption{Exploration-Exploitation for MNL-Bandit in near-stationary environments}\label{alg:near-stationary}
\Init{For each item $j \in [N],$ let $n_j = 0$, $\tilde{\omega}_j = 0$, $\Bar{\omega}_j = 0$, $\hat{\omega}_j = 192\log NT$.}

\For{$t = 1, \dots, T$}{
    
    Offer the assortment $S = \argmax_{C \subset [N]: |C| \leq K} R(C, \hat{\bm{\omega}})$; item $j(t)$ is purchased.
    
    \eIf(\hfill\quad\quad\quad\quad\quad\quad\quad\quad\quad\tcp*[h]{a no-purchase happened and epoch ends here}){$j(t) = 0$}{
    \For{$j \in S$}{
        $n_j \gets n_j+1$ \tcp*[r]{Update the number of epochs where $j$ was proposed}
        
        $\Bar{\omega}_j \gets \frac{n_i-1}{n_i} \Bar{\omega}_j + \frac{1}{n_i} \Tilde{\omega}_j$ \tcp*[r]{Update the mean of the number of purchases}

        $\hat{\omega}_j \gets \Bar{\omega}_j + \sqrt{\frac{192\Bar{\omega}_j \log NT}{n_j}} + \frac{192 \log NT}{n_j}$ for all $j \in [N]$ \tcp*[r]{Update the UCBs}

        $\Tilde{\omega}_j \gets 0$ \tcp*[r]{Reset the number of purchases within an epoch}
    }
    }{
        $\Tilde{\omega}_{j(t)} \gets \Tilde{\omega}_{j(t)} + 1$ \tcp*[r]{Update the number of purchases within current epoch}
    }
}
\end{algorithm}

\paragraph{Regret bounds.}
In order to bound the dynamic regret of EEMNSE, we evaluate the regret accumulated by the base algorithm (Algorithm~\ref{alg:near-stationary}) when the environment is near-stationary. In particular, we show the existence of functions $\Delta$ and $\rho$ such that Algorithm \ref{alg:near-stationary} performs well in $(\Delta, \rho)$-near-stationary environments and that EEMNSE achieves an optimal dynamic regret for the MNL-Bandit problem (up to logarithmic factors). More precisely, we show the following theorem:

\begin{restatable}{theorem}{restateregret}
\label{thm:DReg}
For the choices, $$\Delta(t) = 26 \lVert \bm{\omega}(t) - \bm{\omega}(t+1) \rVert_{L^{2K}_\infty}, \quad 
\forall t \in [1,T-1],$$ and $$\rho(t) = (149\log{NT})^{\frac{3}{2}}\sqrt{\frac{N}{t}} + (55\log{NT})^{3} \frac{N}{t} + \sqrt{\frac{2\log T}{t}}, \quad \forall t \in [1,T],$$ Algorithm~\ref{alg:near-stationary} performs well in $(\Delta, \rho)$-near-stationary environments with $\hat{R}_{t} = R(S(t), \hat{\bm{\omega}}(t))$, where $\hat{\bm{\omega}}(t)$ is the vector of the upper confidence bounds at round $t$. By Theorem 2 of \cite{wei2021non}, this further implies that, without knowledge of $\deltaIk$ and $L$, EEMNSE guarantees with high probability a dynamic regret, $$\dreg = \Tilde{O}\left( \min \left\{ \sqrt{NTL}\;,\; N^{\frac{1}{3}}(\deltaIk)^{\frac{1}{3}} T^{\frac{2}{3}} + \sqrt{NT}\right\}\right).$$
\end{restatable}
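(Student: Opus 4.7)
The plan is to decompose the theorem into two parts: first, verify that Algorithm~\ref{alg:near-stationary} satisfies conditions (\ref{eq:cond1}) and (\ref{eq:cond2}) of Definition~\ref{def:near-stat} for the stated $\Delta$ and $\rho$; second, invoke Theorem~2 of \cite{wei2021non} to obtain the dynamic regret bound. The second step is an immediate quotation, so the proof reduces to proving that the two near-stationarity conditions hold with probability at least $1-O(1/T)$ uniformly over every round $t$ in the $(\Delta,\rho)$-near-stationary part of the environment.

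The first technical step is to control the bias of the epoch estimator. Inside an epoch starting at round $\tau_k$ with offered assortment $S_k$, the increment $\Tilde{\omega}_j$ equals the number of consecutive purchases of item $j$ before the first no-purchase; in a stationary environment this is geometric with mean $\omega_j$, but under drift the parameters change within the epoch. I would introduce the Uninterrupted Purchases Process (UPP) that mimics the epoch in a hypothetical stationary extension beyond the horizon, and using the Gumbel$(0,1)$ representation of the MNL choice model together with a coupling argument, sandwich the UPP increments in the usual stochastic order between two geometric random variables whose means differ by at most $O\bigl(\sum_{s=\tau_k}^{\tau_{k+1}-1}\lVert\bm{\omega}(s)-\bm{\omega}(s+1)\rVert_{L^{2K}_\infty}\bigr)$. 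This both removes the truncation issue caused by the finite horizon and gives a tight handle on the bias of $\Tilde{\omega}_j$.

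Next, I would translate the per-epoch sandwich into non-symmetric concentration bounds for $\Bar{\omega}_j$, the average over the $n_j$ epochs in which item $j$ has been offered. Bernstein-type concentration applied to sums of geometrics, combined with the sandwich, yields two one-sided deviation bounds whose widths match the exploration bonus in the UCB update. After a union bound over items, epochs and rounds (absorbed by the $\log NT$ factors), this establishes that with high probability, at every round $t$ in the near-stationary regime and for every item $j$, one has $\omega_j(t)-c\sum_{\tau<t}\Delta(\tau)\leq \hat{\omega}_j(t) \leq \omega_j(t)+(\text{exploration terms}),$ where the exploration terms aggregate across $S(t)$ into precisely the $(\log NT)^{3/2}\sqrt{N/t}$ and $(\log NT)^3 N/t$ pieces of $\rho(t)$.

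With these tools I would verify the two near-stationarity conditions. For (\ref{eq:cond1}), since $S(t)\in\argmax_C R(C,\hat{\bm{\omega}}(t))$, optimism together with the Lipschitz property of $R$ in its second argument from Lemma~A.3 of \cite{mnlucb}, applied to the lower side of the concentration event, gives $\hat{R}_t \geq R(S^*(\tau),\bm{\omega}(\tau))-\sum_{\tau'<t}\Delta(\tau')$ for every $\tau\leq t$, the constant $26$ in $\Delta$ absorbing the Lipschitz factor and the asymmetry of the sandwich. For (\ref{eq:cond2}), decompose $\hat{R}_\tau - r(\tau) = (\hat{R}_\tau - R(S(\tau),\bm{\omega}(\tau))) + (R(S(\tau),\bm{\omega}(\tau))-r(\tau))$; the first summand is bounded by the same Lipschitz translation applied to the upper side of the sandwich, summing via $\sum_\tau 1/\sqrt{\tau}\lesssim\sqrt{t}$ into the first two pieces of $\rho(t)$, while the second is a bounded martingale difference sequence handled by Azuma--Hoeffding, producing the $\sqrt{2\log T/t}$ term. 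The main obstacle is the second step: proving the Gumbel coupling with constants sharp enough for the ensuing concentration bounds to collapse into exactly the $\rho(t)$ stated, since this is where the stationary argument of \cite{mnlucb} breaks down and new ideas are required; once the bias characterization and concentration are in hand, the verification of the two conditions and the appeal to \cite{wei2021non} are essentially mechanical.
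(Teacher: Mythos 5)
Your proposal follows essentially the same architecture as the paper's proof: introduce the uninterrupted purchases process, couple via Gumbel$(0,1)$ representations, sandwich each UPP increment between two geometrics in the usual stochastic order, derive one-sided Chernoff/Bernstein-type concentration bounds with widths matching the exploration bonus, verify condition~\eqref{eq:cond1} via optimism together with the monotonicity Lemma~A.3 of \cite{mnlucb}, verify condition~\eqref{eq:cond2} by the same Lipschitz translation plus an Azuma term, and then cite Theorem~2 of \cite{wei2021non}. That high-level plan is correct.

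There is, however, one genuine gap in your outline. To pass from the per-epoch concentration bounds to the time-averaged statement in condition~\eqref{eq:cond2} you write that the exploration terms ``sum via $\sum_\tau 1/\sqrt{\tau}\lesssim\sqrt{t}$.'' But the sum is over \emph{time steps}, whereas the concentration bonus scales as $1/\tilde n_j(\tau)$, i.e.\ the inverse number of completed \emph{epochs}. The length of an epoch is random and multiplies each summand, so one cannot sum the bonus over time steps without controlling how long epochs are. This is exactly the content of Lemma~\ref{lem:lengthepochs}: with probability $1-O(1/T)$ every epoch has length at most $5\log NT\cdot(1+\sum_k \omega_k+\delta^{(t)}_k)$, and this factor is then cancelled against the $1+\sum_k\omega_k$ denominator of the revenue difference so that the sum collapses to $\sum_j\sum_i 1/\max\{1,i-1\}=O(N\log T)$. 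The paper flags this as a new technical step (Section~\ref{subsec:epochlength}) precisely because the law-of-conditional-expectation shortcut in \cite{mnlucb} is flawed (the number of epochs is not measurable with respect to the filtration used there). Without this lemma your aggregation step does not go through.

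Two smaller points. First, Definition~\ref{def:near-stat} also requires verifying that $\Delta(t)=26\lVert\bm\omega(t)-\bm\omega(t+1)\rVert_{L^{2K}_\infty}$ is a valid non-stationarity measure, i.e.\ dominates $\max_{|S|\le K}|R(S,\bm\omega(t))-R(S,\bm\omega(t+1))|$; the paper proves this (with constant $3\le 26$) via an explicit algebraic bound, and your proposal omits it. Second, your sandwich claim that the two geometric means ``differ by at most $O(\sum_{s=\tau_k}^{\tau_{k+1}-1}\lVert\bm\omega(s)-\bm\omega(s+1)\rVert_{L^{2K}_\infty})$'' (the within-epoch variation) is stronger than what the paper actually establishes and needs: the paper's $\mu^{\pm(t)}_j(\tau)$ are built from the cumulative drift $\delta^{(t)}$ over the whole window $[1,t]$, because the comparison point $\omega_j(\tau)$ is an arbitrary round in $[1,t]$ rather than the start of the epoch, so the whole accumulated drift must appear. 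A per-epoch bound would require a different bookkeeping and still reintroduce the total drift when relating the epoch-local means to a common comparison point, so as stated that part of your argument is too optimistic.
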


\paragraph{Notation.}

We adopt the unified notation $X^{(t)}_j(i)$ to denote the value of a variable $X$ (e.g., the number of purchases) at the $i$-th step (a step can be a time step, an epoch or just an index, if $X$ does not depend on any particular notion of step we drop the index $i$), for item $j \in [N]$ (again if $X$ does not depend on items we drop the subscript), and between the time rounds $1$ and $t$ (when the time frame matters otherwise we drop the superscript). For every $i, k \geq 1$ such that $i \leq k$, we use $X^{(t)}_j(i:k)$ to denote the random process (or sequence of random variables) $X^{(t)}_j(i), \dots, X^{(t)}_j(k)$. 

With this convention, we define the following random variables and processes related to Algorithm~\ref{alg:near-stationary}: we define $e^{(t)}_j$ to be the number of epochs where $j$ was proposed in the assortment up to time $t$ (included). Note that this also includes an incomplete epoch that did not end before $t$. We define $l^{(t)}_j\left(1:e^{(t)}_j\right)$ to be the epochs where $j$ was proposed up to time $t$ (note that $l^{(t)}_j\left(e^{(t)}_j\right)$ might be an incomplete epoch). Define $\Tilde{\omega}^{(t)}_j\left(1:e^{(t)}_j\right)$ to be the number of purchases of item $j$ in the epochs $l^{(t)}_j\left(1:e^{(t)}_j\right)$ respectively. We refer to $\Tilde{\omega}^{(t)}_j\left(1:e^{(t)}_j\right)$ as the purchases process. We also define $\bm{\hat{\omega}}(t) = (\hat{\omega}_1(t), \dots, \hat{\omega}_N(t))$ and $\bm{\bar{\omega}}(t) = (\bar{\omega}_1(t), \dots, \bar{\omega}_N(t))$ to be the vector of upper confidence bounds $(\hat{\omega}_1, \dots, \hat{\omega}_N)$ and averages $(\bar{\omega}_1, \dots, \bar{\omega}_N)$ at round $t$ respectively, and $n_j(t)$ to be the number of complete epochs before $t-1$ where $j$ was proposed (only epochs that ended before or at $t-1$ are counted). Let $\Tilde{n}_j(t) = \max\{n_j(t), 1\}$.

For every $t \in [1,T]$ and item $j \in [N]$, define $\delta^{(t)} := \sum_{\tau=1}^{t-1} \lVert \bm{\omega}(\tau) - \bm{\omega}(\tau+1) \rVert_{L^{2K}_\infty}$ and $\delta^{(t)}_j := \sum_{\tau=1}^{t-1} |\omega_j(\tau) - \omega_j(\tau+1)|$. Finally, for every $\tau \in [1,t]$, let $\mu^{+(t)}_j(\tau) = \left(\omega_j(\tau) +\delta^{(t)}_j\right)\frac{1+ \delta^{(t)}}{1 - \delta^{(t)}}$ and $\mu^{-(t)}_j(\tau) = \left(\omega_j(\tau) -\delta^{(t)}_j\right)^+\frac{1- \delta^{(t)}}{1 + \delta^{(t)}}$, where $(a)^+$ denotes the positive part of $a \in \mathbb{R}$. Note that for integers $m \leq n$ we use $[m,n]$ to denote the set $\{m, \dots, n\}$ and $[m]$ to denote the set $[1,m]$.

The next section is dedicated
to the proof of Theorem \ref{thm:DReg}. But before moving to the proof, we discuss briefly the challenges presented by the analysis and the key ideas we use to overcome them.

\paragraph{Key ideas.} 

Consider $\Delta$ and $\rho$ given in Theorem \ref{thm:DReg}. We show that conditions \eqref{eq:cond1} and \eqref{eq:cond2} hold with $\hat{R}_t = R(S(t), \hat{\bm{\omega}}(t))$. Fix a time step $t$ in the $(\Delta,\rho)$-near-stationary part of the environment. Conditions \eqref{eq:cond1} and \eqref{eq:cond2} require bounding the change in the expected payoff function when using the upper confidence bounds instead of the true attraction parameters within the time frame $[1,t]$. We do this in two steps: in the first step, we evaluate the gap between the upper confidence bounds $\bm{\hat{\omega}}(t)$ and the true attraction parameters $\bm{\omega}(\tau)$ for any $\tau \in [1,t]$, then in the second step we use a special Lipschitz property of the expected payoff function to translate this gap in terms of expected payoff. In the first step, the starting point is to derive concentration bounds for the average $\Bar{\omega}_j(t) = \frac{1}{\tilde{n}_j(t)}\sum_{l=1}^{n_j(t)} \Tilde{\omega}^{(t)}_j(l)$ for every $j\in [N]$. Recall that $\Tilde{\omega}^{(t)}_j(l)$, the number of purchases of $j$ in epoch $l$, is used as an estimator of the true attraction parameter $\omega_j$. Ideally we would like this estimator to be a sub-Gaussian centered around $\omega_j$. In a stationary environment, \cite{mnlucb} show that this is indeed the case and that the distribution of $\Tilde{\omega}^{(t)}_j(l)$ (conditioned on the assortment offered in the epoch) follows a geometric distribution centered around $\omega_j$ 
(as we mention earlier, the distribution of $\Tilde{\omega}^{(t)}_j(l)$ is  not exactly geometric and centered around $\omega_j$ due to the truncation from a finite time horizon).
In a non-stationary environment however, the attraction parameters change during an epoch and the resulting estimators are now biased and their distributions depend on the adversarial changes in the attraction parameters. Moreover, the distributions are not necessarily independent across epochs. This violates crucial properties of the estimators used in the analysis of the stationary setting.  

To overcome these issues, we introduce the uninterrupted purchases process (UPP). This process simulates the number of purchases of item $j$ within each epoch it was offered in, if the algorithm was to continue in a stationary environment after $t$ where $\bm{\omega}(t') = \bm{\omega}(t)$ for $t' \geq t$ and offer the same assortment $S=[N]$ in the epochs starting after $t$. (UPP) has the advantage of having ``nicer" distributions than the purchases process and, as we will see, deriving concentration bounds for (UPP) is sufficient to get concentration bounds for the mean number of purchases. In a stationary environment, (UPP) consists of a sequence of geometric random variables that allows us to solve the truncation issue. In a non-stationary environment, we sandwich the variables of (UPP) between two geometric random variables with close mean in the usual stochastic order, allowing us to control the bias caused by non-stationarity by two ``nice" distributions. More specifically, we sandwich the variables of (UPP) between two geometric random variables of mean $\mu^{+(t)}_j(\tau)$ and $\mu^{-(t)}_j(\tau)$. Our stochastic bounds are tight in the sense that they lead to optimal regret upper-bounds. To get our stochastic bounds, we describe (UPP) in terms of i.i.d. Gumbel(0,1) variables, then we give a similar description of geometric random variables using Gumbel(0,1) variables and prove our stochastic bounds using a coupling argument. The next step is to derive the concentration bounds. We derive non-symmetric concentration bounds in the sense that one of our bounds is with respect to $\mu^{+(t)}_j(\tau)$ and the other is with respect to $\mu^{-(t)}_j(\tau)$. We show that this is enough for our purposes. Finally, we translate the gap between the upper confidence bounds and the true parameters into a gap in the expected revenue function using a special Lipshitz property of the revenue function. We present the detailed proofs in the Appendix.

\section{Analysis}

In the sequel, we fix $\Delta$ and $\rho$ as defined in Theorem \ref{thm:DReg} and let $\hat{R}_{t} = R(S(t), \hat{\bm{\omega}}(t))$ for every $t \in [1,T]$. Our goal is to show that $\Delta$ is indeed a non-stationarity measure and that for every $t$ belonging to the $(\Delta, \rho)$-near-stationary part of the environment, conditions \eqref{eq:cond1} and \eqref{eq:cond2} hold with probability at least $1-O(\frac{1}{T})$. Note that for every time step $t \in [1,T]$ such that $\delta^{(t)} \geq \frac{1}{2}$ we have that $\sum_{\tau=1}^{t-1} \Delta(\tau)  = 26 \delta^{(t)} \geq 13 \geq 1$ and the conditions \eqref{eq:cond1} and \eqref{eq:cond2} hold for such $t$. Hence, we only focus on time steps $t$ such that $\delta^{(t)} \leq \frac{1}{2}$.

\subsection{The uninterrupted purchases process (UPP)}

Let $t \in [1,T]$ such that $\delta^{(t)} \leq \frac{1}{2}$ and let $j \in [N]$. We define the uninterrupted purchases process $\zeta^{(t)}_j(1:\infty)$  that simulates the number of purchases of $j$ in the epochs it was proposed in if after time $t$: (i) the algorithm continues in a stationary environment with $\bm{\omega}(t')=\bm{\omega}(t)$ for every $t' \geq t$ and (ii) the algorithm offers the same assortment $S=[N]$ in the epochs starting after $t$. 

Formally, consider the infinite process $X^{(t)}(t+1: \infty)$ simulating the time rounds after $t+1$ and defined recursively as follows: For every $t' \geq t+1$, if none of the variables $j(t)$ or $X^{(t)}(t+1: t'-1)$ is $0$, then $X^{(t)}(t')$ is sampled from a categorical distribution supported in $S(t) \cup \{0\}$, and independent of any other randomness of the problem given
$S(t)$, such that each item $r \in S(t)$ is sampled with probability $\omega_{r}(t) / (1+\sum_{k \in S(t)}\omega_{k}(t))$. Otherwise, $X^{(t)}(t')$ is sampled from a categorical distribution supported in $[N] \cup \{0\}$, and independent of any other randomness of the problem, such that each item $r \in [N]$ is sampled with probability $\omega_{r}(t)/(1+\sum_{k \in [N]}\omega_{k}(t))$. This process simulates imaginary rounds beyond $t$. It first completes the ongoing epoch at round $t$ (offering assortment $S(t)$) then switches to the assortment $[N]$ after this epoch ends. Let $i_k$ denote the $k$-th step $i \geq t+1$ where $X^{(t)}(i) = 0$ happens (by convention $i_0 = t+1$). (UPP) is defined as follows:




\begin{definition}
For every $t \in [1,T]$ such that $\delta^{(t)} \leq \frac{1}{2}$ and $j \in [N]$. The uninterrupted purchases process $\zeta^{(t)}_j(1:\infty)$ is defined as follows: for every epoch $i \leq e^{(t)}_j-1$, $\zeta^{(t)}_j(i) = \Tilde{\omega}^{(t)}_j(i)$. For $i \geq e^{(t)}_j$, we distinguish two cases. The first case is when a no-purchase happened at round $t$, in which case we let $\zeta^{(t)}_j(e^{(t)}_j) = \Tilde{\omega}^{(t)}_j(e^{(t)}_j)$ and for $k \geq e^{(t)}_j+1$, $\zeta^{(t)}_j(k)$ is equal to the number of steps $i \in \left[i_{k-e^{(t)}_j-1}, i_{k-e^{(t)}_j}\right]$ where $X^{(t)}(i)=j$. The second case is when a purchase happened at $t$ and in this case we let $\zeta^{(t)}_j(e^{(t)}_j)$ be the sum of $\Tilde{\omega}^{(t)}_j(e^{(t)}_j)$ and the number of steps $i \in \left[i_{0}, i_{1}\right]$ where $X^{(t)}(i)=j$, and  for $k \geq e^{(t)}_j+1$, $\zeta^{(t)}_j(k)$ is equal to the number of steps $i \in \left[i_{k-e^{(t)}_j}, i_{k-e^{(t)}_j+1}\right]$ where $X^{(t)}(i)=j$.
\end{definition}
We extend the definitions of the starting time of an epoch, the offered assortment $S(\tau)$ and the purchased element $j(\tau)$ at round $\tau$ to the newly defined imaginary epochs of (UPP) as follows,

\begin{definition}
    Let $t \in [1,T]$ such that $\delta^{(t)} \leq \frac{1}{2}$ and let $j \in [N]$. Let $k \geq 1$. The starting time of the epoch (real or imaginary) that defines $\zeta^{(t)}_j(k)$, denoted by $s^{(t)}_j(k)$,  is defined as follows: If $k \leq e^{(t)}_j$, then $s^{(t)}_j(k)$ is the starting time of epoch $l_j^{(t)}(k)$. If a no-purchase happened at $t$, then $s^{(t)}_j(k) = i_{k-e^{(t)}_j-1}$ for every $k \geq e^{(t)}_j + 1$. Otherwise $s^{(t)}_j(k) = i_{k- e^{(t)}_j}$ for every $k \geq e^{(t)}_j + 1$. The offered assortment in the imaginary rounds $\tau \geq t+1$ is defined as follows: for $t \leq \tau < s^{(t)}_j\left(e^{(t)}_j+1\right)$ let $S(\tau)=S(t)$, for $\tau \geq s^{(t)}_j\left(e^{(t)}_j+1\right)$ let $S(\tau) = [N]$. Finally, the purchased element at each round $\tau \geq t+1$ is defined as $j(\tau) = X^{(t)}(\tau)$.
\end{definition}

\begin{remark}
Let $t \in [1,T]$ such that $\delta^{(t)} \leq \frac{1}{2}$ and let $j \in [N]$. Let $k \geq 1$. The random variable $\zeta^{(t)}_j(k)$ is well defined almost surely. In fact, the only case when $\zeta^{(t)}_j(k)$ is not well defined is when there exists an epoch $k'<k$ that lasts an infinite number of time steps. Since the probability of a no-purchase is at least $\frac{1}{1+N}$ at every round, the probability of an epoch $k'$ lasting forever is $0$. A union bound over all the possible $k'$ shows that $\zeta^{(t)}_j(k)$ is well defined almost surely. Another union bound over all $t,j$ and $k$ shows that every $\zeta^{(t)}_j(k)$ is well defined almost surely. We shall restrict ourselves in the sequel and without loss of generality to the sample paths of the algorithm where this holds.
\end{remark}

\paragraph{Sandwiching (UPP) in the usual stochastic order:}

For every time step $s \geq 1$, let $\Gamma(s)$ denote the subsets of $[N]$ such that $S \in \Gamma(s)$ if and only if there exists at least one sample path of the algorithm where $S(s) = S$. Note that in particular, for every $s \leq t$ , every $S \in \Gamma(s)$ is such that $|S| \leq K$. The following lemma sandwiches the variables of (UPP) between two geometric random variables in the stochastic order:


\begin{restatable}{lemma}
{restatesandwich}
\label{lem:sandwich}
Let $t \in [1,T]$ such that $\delta^{(t)} \leq \frac{1}{2}$ and let $j \in [N]$. Let $k \geq 1$, $\tau \in [1,t]$, $s \geq 1$, and $S \in \Gamma(s)$. Let $X$ denote the random variable $\zeta^{(t)}_j(k)$ conditioned on $s^{(t)}_j(k) = s$ and $S(s)=S$. Let $X^{+}$ be a geometric random variable with mean $\mu^{+(t)}_j(\tau)$, and $X^{-}$ be a geometric random variable with mean $\mu^{-(t)}_j(\tau)$. Then,
$$X^{-} \leq_{st} X \leq_{st} X^{+},$$ where $\leq_{st}$ denotes the usual stochastic order.
\end{restatable}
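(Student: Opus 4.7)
The plan is to lift the MNL choice in every round to the Gumbel-max representation and then couple the UPP epoch against two auxiliary epochs in which $\omega_j$ is replaced by the constants $\mu^{+(t)}_j(\tau)$ and $\mu^{-(t)}_j(\tau)$. Concretely, for each round $\tau' \geq s$ of the epoch under consideration, draw an independent vector of i.i.d.\ standard Gumbel variables $\{U^{(\tau')}_r\}_{r \in S \cup \{0\}}$, and represent the round's choice as $\argmax_{r \in S \cup \{0\}} (\log \omega_r(\tau') + U^{(\tau')}_r)$, using $\omega_0 \equiv 1$ and (for the imaginary rounds $\tau' > t$) $\omega_r(\tau') := \omega_r(t)$. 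Under the conditioning $\{s^{(t)}_j(k) = s,\,S(s)=S\}$, $X$ is the number of such rounds where the argmax equals $j$ before the first round where it equals $0$.

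For the upper bound, on the same probability space and with the same Gumbels I introduce an inflated process in which $\omega_j$ is replaced everywhere by $\mu := \mu^{+(t)}_j(\tau)$ while the remaining attractions are unchanged, yielding per-round choices $j^+(\tau')$ and a corresponding $j$-count $X^+$ prior to the first inflated no-purchase. Two facts then combine to give $X \leq_{st} X^+$. The first is the monotonicity $\mu \geq \omega_j(\tau')$ for every $\tau' \geq s$: telescoping yields $|\omega_j(\tau') - \omega_j(\tau)| \leq \delta^{(t)}_j$ for $\tau' \in [1,t]$, and for $\tau' > t$ one has $\omega_j(\tau') = \omega_j(t)$, while the factor $(1+\delta^{(t)})/(1-\delta^{(t)}) \geq 1$ only makes $\mu$ larger. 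Reusing the Gumbels then gives a pointwise coupling: if $j(\tau')=j$, the upward shift in the $j$-score preserves the argmax so $j^+(\tau')=j$; if $j^+(\tau')=0$, the variable $U^{(\tau')}_0$ already dominates $\log \mu + U^{(\tau')}_j \geq \log \omega_j(\tau') + U^{(\tau')}_j$ and every $\log \omega_r(\tau') + U^{(\tau')}_r$ for $r \in S \setminus \{j\}$, hence $j(\tau')=0$. Therefore the inflated epoch never ends earlier than the true epoch and contains at least as many $j$-picks, so $X \leq X^+$ almost surely. The second fact is that in every round of the inflated process the conditional probability $P(j^+(\tau')=j \mid j^+(\tau') \in \{j,0\}) = \mu/(1+\mu)$ is independent of $\tau'$ — the time-varying attractions $\omega_r(\tau')$ for $r \neq j$ cancel out of this ratio — so the restriction of the $\{j,0\}$-outcomes to the relevant rounds is i.i.d.\ Bernoulli and $X^+$ is geometric with mean $\mu$.

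The lower bound is entirely symmetric. If $\omega_j(\tau) \leq \delta^{(t)}_j$ then $\mu^{-(t)}_j(\tau) = 0$ and $X^-$ is identically $0$, so $X^- \leq_{st} X$ holds trivially. Otherwise, setting $\mu := \mu^{-(t)}_j(\tau)$, the deflated process obtained by replacing $\omega_j$ with $\mu$ satisfies $\mu \leq \omega_j(\tau) - \delta^{(t)}_j \leq \omega_j(\tau')$ for every $\tau' \geq s$, and the reverse direction of the same Gumbel coupling gives $X^- \leq X$ pointwise while the same conditional-probability calculation shows $X^- \sim \operatorname{Geom}(1/(1+\mu))$ with mean $\mu$.

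The main technical obstacle I expect is verifying that the Gumbel-max representation correctly describes the conditional law of $X$ under $\{s^{(t)}_j(k)=s,\,S(s)=S\}$ across the real-to-imaginary boundary at $t$, where both the attractions freeze to $\omega(t)$ and (after the ongoing epoch terminates) the offered assortment may change to $[N]$. Once this is checked, the inflation/deflation trick is clean: replacing $\omega_j$ by a constant kills the time-dependence in the decisive conditional probability $\mu/(1+\mu)$, so a single geometric distribution emerges despite the non-stationarity of the remaining attractions.
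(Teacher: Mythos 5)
Your proposal is correct, and it takes a genuinely different — and slicker — route than the paper. Both proofs reduce to the same preliminary step of expressing the conditional law of $X$ (given $s^{(t)}_j(k)=s$, $S(s)=S$) via a Gumbel-max representation with time-varying attractions $\omega_r(\min\{m,t\})$; you acknowledge this step and do not carry it out, while the paper proves it carefully (its ``Claim,'' Steps 1 and 2), so I'll treat it as a shared ingredient. The difference lies in the coupling. The paper perturbs \emph{every} item's attraction to $\omega_u(\tau)+\delta^{(t)}_u$ or $(\omega_u(\tau)-\delta^{(t)}_u)^+$, and does so \emph{asymmetrically} between the counting process ($A^{\pm}_m$) and the stopping process ($B^{\pm}_m$); this yields auxiliary geometrics $\bar X^{\pm}$ whose means $\bar\mu^{\pm}$ are complicated ratios involving all the perturbed attractions, which it must then separately bound by $\mu^{\pm(t)}_j(\tau)$ using $\sum_{r\in S\setminus\{j\}}\delta^{(t)}_r\le\delta^{(t)}$ (this is where $|S|\le K$ enters, forcing the paper to treat $s\le t$ and $s\ge t+1$ as two cases). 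You instead replace \emph{only} item $j$'s attraction by the single constant $\mu^{\pm(t)}_j(\tau)$, keeping the other $\omega_r(\tau')$ time-varying. The pointwise coupling ($X\le X^+$ and $X^-\le X$) is immediate because only the $j$-score moves, and — this is the key extra observation — the per-round probability $\mathbb{P}(j^+(\tau')=j\mid j^+(\tau')\in\{j,0\})=\mu/(1+\mu)$ is time-invariant since all the non-$j$ attractions cancel from the ratio. That makes $X^+$ exactly geometric with the target mean $\mu^{+(t)}_j(\tau)$ in one stroke, with no intermediate $\bar\mu^{\pm}$, no use of $|S|\le K$, and no case split on $s\le t$ versus $s\ge t+1$. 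Your argument in fact proves the slightly stronger statement that one may take $X^+$ geometric with mean $\omega_j(\tau)+\delta^{(t)}_j$ (without the $\frac{1+\delta^{(t)}}{1-\delta^{(t)}}$ factor), since only $\mu\ge\omega_j(\tau')$ for $\tau'\ge s$ is needed for the coupling; the factor is there for other parts of the paper's analysis, and the lemma as stated follows a fortiori. One small polish: when you write the thinning argument for geometricity, it's worth noting explicitly that the indicators $\bm{1}\{j^+(\tau')=j\}$ restricted to the random set of $\{j,0\}$-rounds are i.i.d.\ $\mathrm{Bernoulli}(\mu/(1+\mu))$ \emph{independent of that random set} (the factorization is immediate from independence across rounds and time-invariance of the conditional probability), so the ``first $0$ in the subsequence'' count is unconditionally geometric.
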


\subsection{Concentration bounds}
Let $t \in [1,T]$ such that $\delta^{(t)} \leq \frac{1}{2}$ and let $j \in [N]$. By leveraging Lemma \ref{lem:sandwich}, we can derive concentration bounds for the random variables $\zeta^{(t)}_j(k)$. We begin by bounding the moment generating function conditioned on $s^{(t)}_j(k) = s$ and $S(s)=S$ for $s \geq 1$ and $S \in \Gamma(s)$.

\begin{restatable}{lemma}{restatemgf}
    Let $t \in [1,T]$ such that $\delta^{(t)} \leq \frac{1}{2}$ and let $j \in [N]$. Let $k \geq 1$, $\tau \in [1,t]$, $s \geq 1$, and $S \in \Gamma(s)$, then
    for every $0 \leq \lambda < \log \left( 1 + \frac{1}{\mu^{+(t)}_j(\tau)} \right)$, we have, $$\mathbb{E}\left(\left. e^{\lambda \zeta^{(t)}_j(k)} \right| s^{(t)}_j(k) = s \;,\; S(s)=S\right) \leq \frac{1}{1-\mu^{+(t)}_j(\tau) \cdot (e^\lambda - 1)}.$$ And for every $\lambda \leq 0$, we have, $$\mathbb{E}\left(\left. e^{\lambda \zeta^{(t)}_j(k)} \right| s^{(t)}_j(k) = s \;,\; S(s)=S\right) \leq \frac{1}{1-\mu^{-(t)}_j(\tau) \cdot (e^\lambda - 1)}$$
\end{restatable}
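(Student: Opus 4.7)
The plan is to reduce the lemma directly to Lemma~\ref{lem:sandwich} together with the closed-form expression for the moment generating function of a geometric random variable. Let $X$ denote the random variable $\zeta^{(t)}_j(k)$ conditioned on the events $\{s^{(t)}_j(k) = s\}$ and $\{S(s)=S\}$, and let $X^+$ and $X^-$ be the geometric random variables with means $\mu^{+(t)}_j(\tau)$ and $\mu^{-(t)}_j(\tau)$ as in Lemma~\ref{lem:sandwich}. By that lemma we have $X^- \leq_{st} X \leq_{st} X^+$.

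First I would compute the two reference MGFs. For $Y$ geometric with mean $\mu$ supported on $\{0,1,2,\dots\}$, writing $p = 1/(1+\mu)$ one has $\mathbb{P}(Y=n) = p(1-p)^n$, so
\[
\mathbb{E}[e^{\lambda Y}] \;=\; \sum_{n\geq 0} p(1-p)^n e^{\lambda n} \;=\; \frac{p}{1-(1-p)e^\lambda} \;=\; \frac{1}{1-\mu(e^\lambda - 1)},
\]
valid whenever $(1-p)e^\lambda < 1$, equivalently $\lambda < \log\!\left(1+\tfrac{1}{\mu}\right)$. For $\lambda \leq 0$ the series converges trivially for any $\mu \geq 0$ since $e^\lambda - 1 \leq 0$ and the denominator is at least $1$.

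Next I would translate stochastic ordering into MGF inequalities. Recall that $A \leq_{st} B$ means $\mathbb{E}[f(A)] \leq \mathbb{E}[f(B)]$ for every non-decreasing function $f$ for which both expectations exist. For $0 \leq \lambda < \log(1+1/\mu^{+(t)}_j(\tau))$ the map $x \mapsto e^{\lambda x}$ is non-decreasing, and the MGF on the right is finite by the preceding computation; applying this to $X \leq_{st} X^+$ gives
\[
\mathbb{E}\!\left(e^{\lambda \zeta^{(t)}_j(k)} \,\middle|\, s^{(t)}_j(k)=s,\; S(s)=S\right) \;=\; \mathbb{E}[e^{\lambda X}] \;\leq\; \mathbb{E}[e^{\lambda X^+}] \;=\; \frac{1}{1-\mu^{+(t)}_j(\tau)(e^\lambda-1)}.
\]
For $\lambda \leq 0$ the map $x \mapsto e^{\lambda x}$ is non-increasing, so $x \mapsto -e^{\lambda x}$ is non-decreasing, and applying the stochastic order to $X^- \leq_{st} X$ flips the inequality, yielding $\mathbb{E}[e^{\lambda X}] \leq \mathbb{E}[e^{\lambda X^-}] = 1/(1-\mu^{-(t)}_j(\tau)(e^\lambda-1))$.

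There is essentially no hard step here once Lemma~\ref{lem:sandwich} is granted: the geometric MGF is a direct geometric-series computation, and the move from stochastic dominance to MGF inequality is standard. The only mild point to check is that both MGF expressions are well-defined on the stated ranges of $\lambda$ (finite for the upper bound via the $\lambda < \log(1+1/\mu^{+(t)}_j(\tau))$ restriction, and automatically finite for $\lambda \leq 0$), so that the stochastic-order implication can be legitimately invoked. All heavy lifting concerning non-stationarity and the truncated epoch structure has been pushed into Lemma~\ref{lem:sandwich}.
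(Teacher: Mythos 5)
Your proposal takes essentially the same route as the paper: invoke Lemma~\ref{lem:sandwich}, use the closed-form geometric MGF, and pass from stochastic dominance to MGF inequality via monotonicity of $x \mapsto e^{\lambda x}$ (resp.\ its negation). The only difference is that the paper makes the finiteness of $\mathbb{E}[e^{\lambda X}]$ for $\lambda>0$ rigorous by first applying the stochastic-order comparison to the truncations $X\cdot\mathbf{1}\{X\le M\}$ and then letting $M\to\infty$ via monotone convergence, whereas you flag this existence issue but leave it as a remark rather than filling it in.
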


We are now ready to derive concentration bounds for the variables $\zeta^{(t)}_j(k)$. In particular, we have the following lemma,

\begin{restatable}{lemma}{restateconcentration}
\label{lem:concentrationZeta}
Let $t \in [1,T]$ such that $\delta^{(t)} \leq \frac{1}{2}$ and let $j \in [N]$. Let $k \geq 1$ and $\tau \in [1,t]$. For every $\eta > 0$, we have,
\begin{align*}
    \mathbb{P}\left(\frac{1}{k}\sum_{i=1}^{k} \zeta^{(t)}_j(i) < (1-\eta)\mu^{-(t)}_j(\tau)\right) 
    \leq 
    e^{- \frac{k\eta^2}{24}\mu^{-(t)}_j(\tau)} 
\end{align*}
and,
\begin{align*}
    \mathbb{P}\left(\frac{1}{k}\sum_{i=1}^{k} \zeta^{(t)}_j(i) > (1+\eta)\mu^{+(t)}_j(\tau) \right) \leq
    e^{-\frac{k \min \{\eta, \eta^2\}}{196}\mu^{+(t)}_j(\tau)}.
\end{align*}
\end{restatable}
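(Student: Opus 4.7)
My plan is a Chernoff-method argument built directly on the conditional MGF bound from the preceding lemma. The two nontrivial elements are (i) handling the dependence between $\zeta^{(t)}_j(1), \ldots, \zeta^{(t)}_j(k)$, which that lemma bounds only conditionally on the starting time and assortment of each epoch, and (ii) optimizing the Chernoff parameter sharply enough to recover the constants $24$ and $196$ and, for the upper tail, the characteristic $\min\{\eta,\eta^2\}$ behavior of geometric-like distributions.

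First I would set up the natural filtration $(\mathcal{F}_i)_{i \geq 0}$ in which $\mathcal{F}_i$ collects all randomness up to and including the start of the $(i{+}1)$-th epoch in which $j$ is proposed; in particular $s^{(t)}_j(i{+}1)$ and $S(s^{(t)}_j(i{+}1))$ are $\mathcal{F}_i$-measurable, and so are $\zeta^{(t)}_j(1), \ldots, \zeta^{(t)}_j(i)$. By the recursive construction of (UPP), the conditional law of $\zeta^{(t)}_j(k)$ given $\mathcal{F}_{k-1}$ coincides with its law conditioned only on $s^{(t)}_j(k)$ and $S(s^{(t)}_j(k))$ (a Markov-type property, valid because the attraction parameters used after time $t$ are frozen at $\bm{\omega}(t)$). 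The preceding MGF lemma then gives, for every $0 \leq \lambda < \log(1 + 1/\mu^{+(t)}_j(\tau))$,
$$
\mathbb{E}\!\left[e^{\lambda \zeta^{(t)}_j(k)} \,\big|\, \mathcal{F}_{k-1}\right] \leq \frac{1}{1 - \mu^{+(t)}_j(\tau)(e^\lambda - 1)},
$$
a deterministic bound. Iterating the tower property $k$ times factorizes the joint MGF as $\bigl(1 - \mu^{+(t)}_j(\tau)(e^\lambda - 1)\bigr)^{-k}$. A symmetric factorization holds for $\lambda \leq 0$ with $\mu^{-(t)}_j(\tau)$ in place of $\mu^{+(t)}_j(\tau)$.

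From here, Markov's inequality applied to $e^{\lambda \sum_i \zeta^{(t)}_j(i)}$ reduces the problem to a one-parameter Chernoff optimization for a negative-binomial-type tail. For the upper tail I would split into $\eta \leq 1$ and $\eta > 1$: in the sub-Gaussian regime $\eta \leq 1$ choose $\lambda \sim \eta/((1+\eta)\mu^{+(t)}_j(\tau))$ and use the second-order Taylor remainders of $\log(1+x)$ and $\log(1-x)$ to extract an exponent of order $-k \eta^2 \mu^{+(t)}_j(\tau)$; in the sub-exponential regime $\eta > 1$ push $\lambda$ toward the boundary $\log(1 + 1/\mu^{+(t)}_j(\tau))$ to extract an exponent of order $-k \eta \mu^{+(t)}_j(\tau)$. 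Unifying the two regimes should yield the claimed $\exp(-k \mu^{+(t)}_j(\tau) \min\{\eta, \eta^2\}/196)$ form. For the lower tail, the left tail of a geometric variable is light, so optimizing over $\lambda < 0$ with the $\mu^{-(t)}_j(\tau)$ MGF bound and again Taylor-expanding gives the purely sub-Gaussian rate $\exp(-k \eta^2 \mu^{-(t)}_j(\tau)/24)$.

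The conceptual steps are standard; the principal effort lies in the arithmetic of the $\lambda$-optimizations, where the exact constants $24$ and $196$ must be tracked and the two upper-tail regimes stitched into the single expression $\min\{\eta, \eta^2\}$. A secondary point needing justification is the Markov-type step invoked in the iterated conditioning—that the portion of the history in $\mathcal{F}_{k-1}$ beyond $s^{(t)}_j(k)$ and $S(s^{(t)}_j(k))$ is irrelevant for the conditional law of $\zeta^{(t)}_j(k)$—which follows from the frozen-parameter construction of (UPP) after time $t$.
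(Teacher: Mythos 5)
Your proposal is correct and follows essentially the same route as the paper: a Chernoff argument built on the conditional MGF bound, with the joint MGF factorized via iterated conditioning using exactly the Markov-type observation that given $s^{(t)}_j(k)$ and $S(s^{(t)}_j(k))$ the variable $\zeta^{(t)}_j(k)$ is conditionally independent of $\zeta^{(t)}_j(1),\dots,\zeta^{(t)}_j(k-1)$. The only tactical difference is in the upper tail: you propose a two-regime split ($\eta\le 1$ versus $\eta>1$) with different $\lambda$-choices in each regime, whereas the paper uses a single choice $\lambda = \log\bigl(1 + \tfrac{\eta}{1+\mu^{+(t)}_j(\tau)(1+\eta)}\bigr)$ valid for all $\eta>0$, obtains the exponent $-\tfrac{k\mu^{+(t)}_j(\tau)\eta^2}{98(1+\eta)}$, and then passes to $\min\{\eta,\eta^2\}$ by the elementary inequality $\min\{\eta,\eta^2\}\le \tfrac{2\eta^2}{1+\eta}$; this avoids any stitching and is slightly cleaner to track constants with, but both routes lead to the stated bound.
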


Recall that $\Bar{\omega}_j(t) := \frac{1}{\tilde{n}_j(t)} \sum_{i=1}^{n_j(t)} \Tilde{\omega}^{(t)}_j(i)$ denotes the mean of the number of purchases of $j$ taken over the $n_j(t)$ complete epochs in $[1,t-1]$ where $j$ was proposed. Since $n_j(t)$ includes only the epochs that ended before or at $t-1$ we have $\Tilde{\omega}^{(t)}_j(i) = \zeta^{(t)}_j(i)$ for every $i \leq n_j(t)$, implying that $\Bar{\omega}_j(t) := \frac{1}{\tilde{n}_j(t)} \sum_{i=1}^{n_j(t)} \zeta^{(t)}_j(i)$. By leveraging this fact and Lemma \ref{lem:concentrationZeta} above, we prove the following concentration bounds for $\Bar{\omega}_j(t)$:


\begin{restatable}{lemma}{restateconcentrationUCB} 
\label{lem:concentrationUCB}
Let $t \in [1,T]$ such that $\delta^{(t)} \leq \frac{1}{2}$ and let $j \in [N]$. Let $\tau \in [1,t]$. The following concentration bounds hold,
    \begin{align*}
    \mathbb{P}\left(\Hat{\omega}_j(t) < \mu^{-(t)}_j(\tau)\right) \leq \frac{2}{NT^3}
    \end{align*}
and
    \begin{align*}
    \mathbb{P}\left(\Hat{\omega}_j(t) > \mu^{+(t)}_j(\tau) + \sqrt{\frac{2266 \mu^{+(t)}_j(\tau) \log NT}{\tilde{n}_j(t)}} + \frac{1364\log NT}{\tilde{n}_j(t)}\right) \leq \frac{2}{NT^3}.
    \end{align*}
\end{restatable}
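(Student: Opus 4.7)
The plan is to reduce both bounds to concentration of $\bar{\omega}_j(t)$ via Lemma \ref{lem:concentrationZeta}, and then absorb the resulting deviations into the explicit UCB formula $\hat{\omega}_j(t) = \bar{\omega}_j(t) + \sqrt{192 \bar{\omega}_j(t) \log(NT)/n_j(t)} + 192 \log(NT)/n_j(t)$. Because $\tilde{\omega}^{(t)}_j(i) = \zeta^{(t)}_j(i)$ for every $i \leq n_j(t)$, I can write $\bar{\omega}_j(t) = \frac{1}{n_j(t)}\sum_{i=1}^{n_j(t)} \zeta^{(t)}_j(i)$ whenever $n_j(t) \geq 1$. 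The case $n_j(t) = 0$ is handled directly since then $\hat{\omega}_j(t) = 192 \log NT$, which already exceeds $\mu^{-(t)}_j(\tau) \leq 1$ and falls below the target upper bound since $192 \log NT \leq 1364 \log NT$, so both inequalities hold deterministically.

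For the lower tail, I would apply Lemma \ref{lem:concentrationZeta} separately for every integer $k \in [1,T]$ with $\eta = \eta^-_k := \min\{\sqrt{96 \log(NT)/(k \mu^{-(t)}_j(\tau))},\,1\}$, so that each failure event has probability at most $e^{-4\log(NT)} = 1/(NT)^4$; a union bound over $k$ guarantees, with probability $\geq 1 - 1/(NT^3)$, that $\bar{\omega}_j(t) \geq (1-\eta^-_{n_j(t)}) \mu^{-(t)}_j(\tau)$ at the realized $k = n_j(t)$. I then split into three regimes in terms of $k \mu^{-(t)}_j(\tau)/\log(NT)$: when $k\mu^{-(t)}_j(\tau) \geq 384 \log(NT)$, one has $\bar{\omega}_j(t) \geq \mu^{-(t)}_j(\tau)/2$, so $\sqrt{192 \bar{\omega}_j(t) \log(NT)/k} \geq \eta^-_k \mu^{-(t)}_j(\tau)$ and the square-root slack alone closes the gap; in the intermediate regime $96 \log(NT) \leq k\mu^{-(t)}_j(\tau) < 384 \log(NT)$, the key identity $\sqrt{96 \cdot 384} = 192$ gives $\eta^-_k \mu^{-(t)}_j(\tau) \leq 192 \log(NT)/k$ so the additive term $192\log(NT)/n_j(t)$ alone swallows the deficit; and when $k\mu^{-(t)}_j(\tau) < 96 \log(NT)$, already $\mu^{-(t)}_j(\tau) < 192\log(NT)/n_j(t) \leq \hat{\omega}_j(t)$ deterministically.

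For the upper tail, I would use $\eta^+_k := \max\{\sqrt{784 \log(NT)/(k\mu^{+(t)}_j(\tau))},\, 784\log(NT)/(k\mu^{+(t)}_j(\tau))\}$ so that $k \min\{\eta^+_k,(\eta^+_k)^2\}\mu^{+(t)}_j(\tau) \geq 784 \log(NT)$ in both branches; Lemma \ref{lem:concentrationZeta} plus a union bound over $k$ then give $\bar{\omega}_j(t) \leq (1+\eta^+_{n_j(t)}) \mu^{+(t)}_j(\tau)$ with probability $\geq 1 - 1/(NT^3)$. Substituting into the UCB and splitting on whether $\eta^+_k \leq 1$: in the small-noise branch I combine $\eta^+_k \mu^{+(t)}_j(\tau) = \sqrt{784 \mu^{+(t)}_j(\tau) \log(NT)/k}$ with $\sqrt{192 \bar{\omega}_j(t) \log(NT)/k} \leq \sqrt{384 \mu^{+(t)}_j(\tau) \log(NT)/k}$ (using $\bar{\omega}_j(t) \leq 2\mu^{+(t)}_j(\tau)$) via the numerical identity $(\sqrt{784}+\sqrt{384})^2 \leq 2266$, and absorb the remaining $192\log(NT)/k$ into the $1364\log(NT)/k$ term; in the high-noise branch I apply $\sqrt{a+b} \leq \sqrt{a}+\sqrt{b}$ to $\sqrt{192(\mu^{+(t)}_j(\tau) + 784\log(NT)/k) \log(NT)/k}$, after which $784 + \sqrt{192 \cdot 784} + 192 \leq 1364$ assembles the additive constant and $\sqrt{192} \leq \sqrt{2266}$ the square-root constant.

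The main obstacle is the constant bookkeeping: Lemma \ref{lem:concentrationZeta} carries dissimilar denominators ($24$ versus $196$) on its two tails, the UCB has the fixed prefactor $192$, and the target constants $2266$ and $1364$ only close up with the specific choices of $\eta^\pm_k$ above, together with the numerical coincidences $\sqrt{96 \cdot 384} = 192$ and $\lceil (\sqrt{784}+\sqrt{384})^2 \rceil = 2266$. A secondary subtlety is that $n_j(t)$ is random and adapted to the very process $\zeta^{(t)}_j$ whose average we are controlling; this forces the union bound over $k \in [1,T]$, which costs only a factor $T$ and explains why each fixed-$k$ concentration must be tightened to failure probability $1/(NT)^4$ to leave the required $2/(NT^3)$ slack.
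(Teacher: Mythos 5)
Your proposal is correct and follows the same overall strategy as the paper: write $\bar{\omega}_j(t) = \frac{1}{n_j(t)}\sum_{i=1}^{n_j(t)} \zeta^{(t)}_j(i)$ for $n_j(t) \geq 1$ (dispatching $n_j(t)=0$ separately since the UCB is then deterministically $192\log NT$), invoke Lemma~\ref{lem:concentrationZeta}, and handle the adaptivity of $n_j(t)$ by a union bound over all $k \in [1,T]$, tightened to failure probability $(NT)^{-4}$ per $k$. Your verification of the numerical coincidences ($96 \cdot 384 = 192^2$, $(\sqrt{784}+\sqrt{384})^2 < 2266$, $784 + \sqrt{192\cdot 784} + 192 < 1364$, and $784/\min\{\eta,\eta^2\}$ giving exactly $4\log NT$ under the unified choice of $\eta^+_k$) are all exact or hold with the stated slack.

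The one genuine difference from the paper's proof is a modest streamlining. The paper applies Lemma~\ref{lem:concentrationZeta} twice per $k$ on each tail --- once to get the main concentration event and once to control the nuisance term $\bar{\omega}_j(t)$ appearing inside the UCB's square root --- and then stitches them by conditioning, accumulating $2/(NT)^4$ per $k$. You instead choose a single, carefully calibrated $\eta^\pm_k$ per $k$, and then close the argument by a deterministic case split on the regime of $k\mu^{\mp(t)}_j(\tau)/\log NT$: in each regime, the single concentration event already implies (deterministically, via the structure of the UCB) the bound you need on the square-root term, so no second application is needed. This yields $1/(NT)^4$ per $k$ and hence a tighter aggregate of $1/(NT^3)$, comfortably within the stated $2/(NT^3)$. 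Both proofs are valid; yours is slightly more economical in its use of the concentration lemma and arguably clearer about where each numerical constant is tight.
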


\subsection{Bounding the length of the epochs}
\label{subsec:epochlength}

The last ingredient we need in our analysis is to show that the length of an epoch is concentrated around its mean value. In the analysis of \cite{mnlucb} for stationary environments, the length of the epochs was replaced by its mean value using the law of conditional expectation (the transition from (A.13) to (A.14) in \cite{mnlucb}). However, this does not hold as the total number of epochs $L$ is not $\mathcal{H}_{l-1}$-measurable (where $\mathcal{H}_{l-1}$ is the information available upto epoch $l-1$). We adopt a different approach and bound the length of the (real) epochs with high probability. By doing so, we only loose an additional $O(\log NT)$ in the regret. More precisely, we prove the following lemma:

\begin{restatable}{lemma}{restatelengthepoch}
\label{lem:lengthepochs}
Let $t \in [1,T]$ such that $\delta^{(t)} \leq \frac{1}{2}$ and let $j \in [N]$. Let $u \in [1,t]$. We have,
\begin{align*}
    \mathbb{P}\left(\bigcup_{\substack{1 \leq i \leq e^{(t)}_j}} \left\{\left| l_j^{(t)}(i) \right| > 5 \log NT \left(1 + \sum_{k \in S\left(s_j^{(t)}(i)\right)} \omega_k(u) + \delta^{(t)}_k\right) \right\} \right) \leq \frac{1}{NT^3}
\end{align*}
\end{restatable}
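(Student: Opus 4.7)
The plan is to reduce the bound to a standard geometric tail inequality, after carefully controlling the time-varying per-step no-purchase probability via the non-stationarity budget $\delta^{(t)}_k$. The key observation is that once an epoch begins at time $s_j^{(t)}(i)=s$ with assortment $S(s)=S$, the assortment is held fixed until a no-purchase occurs, so conditional on $s$ and $S$ the outcome at each time step $t'\geq s$ within the epoch is an independent Bernoulli with no-purchase probability $p(t')=1/(1+\sum_{k\in S}\omega_k(t'))$. Therefore, conditional on $s$ and $S$, the length $|l_j^{(t)}(i)|$ is stochastically dominated by the first no-purchase time of an inhomogeneous Bernoulli sequence.

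To obtain a clean geometric bound I would use the triangle inequality on $\delta^{(t)}_k=\sum_{\tau=1}^{t-1}|\omega_k(\tau+1)-\omega_k(\tau)|$ to deduce $|\omega_k(t')-\omega_k(u)|\leq \delta^{(t)}_k$ for every $t',u\in[1,t]$. This gives $\omega_k(t')\leq \omega_k(u)+\delta^{(t)}_k$ and hence $p(t')\geq 1/\mu_{S,u}$ with $\mu_{S,u}:=1+\sum_{k\in S}(\omega_k(u)+\delta^{(t)}_k)$. Consequently, conditional on $s$ and $S$, $|l_j^{(t)}(i)|$ is stochastically dominated by a geometric random variable with mean $\mu_{S,u}$, and the standard bound $(1-1/\mu_{S,u})^R\leq e^{-R/\mu_{S,u}}$ applied with $R=5\log NT\cdot \mu_{S,u}$ yields $\mathbb{P}(|l_j^{(t)}(i)|>R\mid s,S)\leq (NT)^{-5}$. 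Multiplying by $\mathbb{P}(\{\text{an epoch for } j \text{ starts at } s\}\cap\{S(s)=S\})$ and marginalizing over $S$ yields $\mathbb{P}(\text{some epoch for } j \text{ starts at } s\text{ and violates the bound})\leq (NT)^{-5}\cdot \mathbb{P}(\text{some epoch for } j \text{ starts at } s)$.

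Finally I would apply a union bound over the at most $T$ candidate start times $s\in[1,t]$. Since $\sum_{s=1}^{t}\mathbb{P}(\text{an epoch for }j\text{ starts at }s)=\mathbb{E}[e_j^{(t)}]\leq T$, the total probability of the bad event is at most $T\cdot(NT)^{-5}=\frac{1}{N^5 T^4}\leq \frac{1}{NT^3}$. The subtlety I expect to be most delicate is handling the possibly incomplete last epoch $l_j^{(t)}(e_j^{(t)})$: its length is truncated at $t$, but the event $\{|l_j^{(t)}(i)|>R\}$ still forces $R$ consecutive non-no-purchase steps in $[s,s+R-1]\subseteq[1,t]$, so the geometric domination and the subsequent tail bound go through verbatim, and the union bound is unaffected.
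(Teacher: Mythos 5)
Your proposal follows essentially the same route as the paper's proof: condition on the epoch's start time $s$ and offered assortment $S$, use the triangle-inequality bound $\omega_k(t')\leq\omega_k(u)+\delta_k^{(t)}$ to dominate the per-step continuation probability by a constant, obtain a geometric tail bound of order $(NT)^{-c}$, and finish with a union bound over at most $T$ candidate start indices. The paper instead bounds $s_j^{(t)}(i+1)-s_j^{(t)}(i)$ (using the extended UPP process and $\omega_k(\min\{s,t\})$ for imaginary rounds $s>t$), but that is an equivalent packaging; your direct restriction to real rounds is fine because, as you note, $|l_j^{(t)}(i)|>R$ already forces enough consecutive non-no-purchase steps inside $[1,t]$.

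One small imprecision: since $R=5\log NT\cdot\mu_{S,u}$ need not be an integer, the event $\{|l_j^{(t)}(i)|>R\}$ forces only $\lfloor R\rfloor$ (not $R$) consecutive non-no-purchase steps, so the tail bound should read $(1-1/\mu_{S,u})^{\lfloor R\rfloor}\leq e^{-\lfloor R\rfloor/\mu_{S,u}}$; using $\lfloor R\rfloor\geq 4\log NT\cdot\mu_{S,u}$ (valid once $\log NT\cdot\mu_{S,u}\geq1$, which the paper makes explicit via $NT\geq e$) yields $(NT)^{-4}$ per term rather than $(NT)^{-5}$. The union bound over $\leq T$ start times then gives $T\cdot(NT)^{-4}\leq\frac{1}{NT^{3}}$, still matching the claimed bound. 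Aside from this floor bookkeeping, the argument is correct.
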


\subsection{Putting all together} The above elements can be combined to prove Theorem \ref{thm:DReg}. In the proof, we begin by showing that $\Delta$ is indeed a non-stationarity measure, then we prove that the conditions \eqref{eq:cond1} and \eqref{eq:cond2} hold with probability at least $1-O(\frac{1}{T})$ for every $t \in [1,T]$ such that $\sum_{\tau=1}^{t-1} \Delta(\tau) \leq \rho(t)$ and $\delta^{(t)} \leq \frac{1}{2}$ with $\hat{R}_\tau = R(S(\tau), \bm{\hat{\omega}}(\tau))$. The later fact leverages the previous lemmas along with a special Lipschitz property of the expected payoff function (Lemma A.3 from \cite{mnlucb}). The proof is given in Appendix \ref{apx:analysis}.

\section{Lower bounds}
In this section, we give lower bounds on the regret achievable by any polynomial time algorithm for MNL-Bandit in a non-stationary environment as a function of $L$ and $\deltaIk$. We also discuss our choice of the variation $\deltaIk$ instead of $\deltaI$.

\paragraph{Dependence on $L$.} The following theorem shows the regret upper-bound of $\Tilde{\mathcal{O}}(\sqrt{NTL})$ achieved by our algorithm is optimal (up to logarithmic factors):

\begin{restatable}{theorem}{restatedependenceL}
\label{thm:dependenceL}
    Fix  $T, L, N$ and $K$ and suppose that $K \leq \frac{N}{4}$. Then, for every polynomial time algorithm $\mathcal{A}$ for the MNL-Bandit problem in non-stationary environments, there exists an instance with parameters $N$ and $K$ and at most $L$ switches such that $\mathcal{A}$ accumulates a regret of at least $C\cdot \min(\sqrt{NLT}, T)$ over $T$ rounds, where $C > 0$ is an absolute constant independent of $T, L, N$ and $K$.
\end{restatable}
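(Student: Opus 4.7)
\textbf{Proof plan for Theorem \ref{thm:dependenceL}.} The approach is a standard block reduction to the stationary MNL-Bandit lower bound $\Omega(\sqrt{NT})$ of \cite{chen2017note}. The plan is to partition the horizon $[1,T]$ into $L$ consecutive blocks of equal length $\tau = \lfloor T/L \rfloor$, and on each block embed an independently drawn hard stationary instance from Chen's construction. The attraction vector changes only at the endpoints between blocks, so the total number of switches is at most $L-1 \leq L$. Since each block is genuinely stationary, the stationary lower bound can be invoked on each block separately and summed, and dynamic regret coincides with stationary regret inside each block.

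Concretely, for each block $\ell \in [L]$ I would sample independently a ``planted'' assortment $S_\ell^\star$ uniformly from all $K$-subsets of $[N]$, and use Chen's attraction parameters: $\omega_j = 1/2$ for $j\notin S_\ell^\star$ and $\omega_j = 1/2+\epsilon$ for $j\in S_\ell^\star$, with $\epsilon = \Theta(\sqrt{N/\tau})$ chosen so the two local hypotheses are information-theoretically indistinguishable within $\tau$ rounds. The assumption $K\le N/4$ is what guarantees enough room to hide $S_\ell^\star$, and the normalization $\omega_j\le 1$ is automatic. The key observation is that the posterior over $S_\ell^\star$ conditional on all data observed in blocks $\ell'\ne\ell$ remains uniform, since the planted sets are drawn mutually independently and the observations in block $\ell'$ depend only on $S_{\ell'}^\star$. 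Hence the restriction of any algorithm $\mathcal{A}$ to block $\ell$ (using the previous history merely as additional internal randomness) is a valid learner facing a fresh uniformly random planted stationary instance of length $\tau$, and Chen's bound yields expected per-block regret at least $c\cdot\min(\tau,\sqrt{N\tau})$ for an absolute constant $c>0$.

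Summing the per-block regret gives expected total dynamic regret at least $cL\min(\tau,\sqrt{N\tau}) = \Omega(\min(T,\sqrt{NLT}))$ against the random instance, and Yao's minimax principle converts this into a worst-case instance lower bound for any (possibly randomized) algorithm $\mathcal{A}$. The degenerate regime $\tau<N$ (equivalently $T<NL$) produces the trivial $\Omega(\tau)$ per-block bound and hence yields the $\min(\sqrt{NLT},T)$ form in the statement.

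The main obstacle I anticipate is the ``no information transfer'' step: one must rigorously argue that data collected in earlier blocks cannot reduce the regret incurred in block $\ell$ below Chen's stationary bound, even for an adaptive learner. The cleanest way is to fold the history of earlier blocks into the internal randomness of an induced block-$\ell$ algorithm; by independence of the $S_{\ell'}^\star$ this history is uninformative about $S_\ell^\star$, so the induced algorithm still satisfies the hypotheses of Chen's lower bound on a fresh $\tau$-round instance. A minor secondary issue is the ``polynomial time'' qualifier in the statement: Chen's argument is information-theoretic and applies to all algorithms, so the qualifier is a free strengthening inherited from the benchmark, and no additional computational reduction is needed.
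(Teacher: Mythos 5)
Your approach is correct but differs from the paper's in how it handles the adaptivity of $\mathcal{A}$ across blocks. The paper constructs the adversarial instance \emph{sequentially}: having fixed the instance on the first $l$ blocks, it treats the policy that $\mathcal{A}$ plays on block $l+1$ (with the block-$1{:}l$ history folded into the policy's internal randomness, whose distribution is now determined) as a stand-alone randomized learner and invokes Chen--Wang's Theorem~1 as a black box to pick the worst-case \emph{deterministic} stationary instance for that block. This avoids ever re-opening Chen--Wang's proof. You instead draw all $L$ planted sets $S_\ell^\star$ i.i.d.\ up front, argue mutual independence to show the pre-block-$\ell$ history is uninformative about $S_\ell^\star$, apply the planted-instance lower bound per block, and then extract a deterministic hard instance at the end. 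Both routes are sound; the paper's gives a fully explicit deterministic adversary directly and leans on Chen--Wang as a black box, while yours requires dipping into the internals of Chen--Wang's construction but makes the information-theoretic independence step more transparent.

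Two small imprecisions worth flagging. First, the parameters you attribute to Chen--Wang are not theirs: the hard instances used in the paper (and in \cite{chen2017note}) set $\omega_j = \frac{1}{K}$ off the planted set and $\omega_j = \frac{1+\eta}{K}$ on it, not $\tfrac12$ and $\tfrac12+\epsilon$. The $\Theta(1/K)$ scaling is what keeps the no-purchase probability bounded away from zero and gives the stated regret scaling, so you should use the actual Chen--Wang parameters rather than a $\tfrac12\pm\epsilon$ construction. Second, the final step from ``expected regret against the random instance is large'' to ``some fixed instance is hard'' is just an averaging argument (linearity of expectation), not Yao's minimax principle; you are converting a distributional lower bound against a fixed algorithm into an existential one, which is the easy direction and needs no minimax machinery.
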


\noindent Our proof of Theorem \ref{thm:dependenceL} uses the results of \cite{chen2017note} for stationary environments. At a high level, given an algorithm $\mathcal{A}$, we subdivide the time horizon into $\left\lceil\frac{T}{L}\right\rceil$ windows, then we construct our adversarial instance against $\mathcal{A}$ recursively as follows: Suppose we fixed our adversarial instance in the first $w-1$ windows. For window $w$ we use the stationary adversarial instance given by \cite{chen2017note} against $\mathcal{A}$ (given the previously fixed windows $w' \leq w-1$). We get a non stationary instance with at most $L$ switches and show that $\mathcal{A}$ accumulates an expected regret of at least $C\cdot \min(\sqrt{NLT}, T)$ against this instance. The proof of Theorem \ref{thm:dependenceL} is given in Appendix \ref{apx:lowerbounds}.

\paragraph{Dependence on $\deltaIk$.} The following theorem shows the regret upper-bound of $\Tilde{\mathcal{O}}(N^{\frac{1}{3}}(\deltaIk)^{\frac{1}{3}}T^\frac{2}{3})$ achieved by our algorithm is optimal (up to logarithmic factors):

\begin{restatable}{theorem}{restatedependenceDelta}
\label{thm:dependenceDelta}
    Fix  $T, \Delta, N$ and $K$ and suppose that $K \leq \frac{N}{4}$ and that $\Delta \in [\frac{1}{N}, \frac{T}{N}]$. Then, for every polynomial time algorithm $\mathcal{A}$ for the non-stationary MNL-Bandit problem, there exists an instance with parameters $N$ and $K$ and variation $\deltaIk \leq \Delta$ such that $\mathcal{A}$ accumulates a regret of at least $C\cdot N^{\frac{1}{3}}\Delta^{\frac{1}{3}}T^\frac{2}{3}$ over a time horizon $T$, where $C>0$ is an absolute constant independent of $T, \Delta, N$ and $K$. Moreover, this instance is such that $\deltaIk \geq \frac{K}{2} \cdot \deltaI$, where  $\deltaI$ denotes the variation in norm $L_{\infty}$ of the attraction parameters of the instance.
\end{restatable}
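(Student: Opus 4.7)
The plan is to adapt the Besbes-Gur-Zeevi (2014) window-decomposition scheme to the MNL-bandit setting, plugging a Chen-Wang-style stationary lower bound into each window. To meet both the $\Omega(N^{1/3}\Delta^{1/3}T^{2/3})$ regret bound and the ``moreover'' property $\deltaIk \geq (K/2)\deltaI$, the per-window hard instance must change $\Theta(K)$ coordinates of $\bm{\omega}$ at each switch rather than a single coordinate as in Chen-Wang's original construction, so I would use a group-based variant.

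Concretely, I would partition $[N]$ into $M=\lfloor N/K\rfloor$ disjoint groups of size $K$ and partition $[1,T]$ into $W$ equal windows of length $\Delta_w=T/W$, with $W$ to be tuned. Inside window $w$ the adversary picks one of the groups $G_w$ and declares it ``good'': set $\omega_i(t)=v+\epsilon$ for $i\in G_w$ and $\omega_i(t)=v$ otherwise, with $v=1/K$ and $\epsilon=c\sqrt{N/(K^2\Delta_w)}$ for a small absolute constant $c>0$. A direct computation (using $r_i=1$) shows the unique optimal assortment in the window is $G_w$, with per-round regret $\Theta(\epsilon(K-|S(t)\cap G_w|))$. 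Between windows I take $G_{w+1}$ disjoint from $G_w$, so exactly $2K$ coordinates move by $\pm\epsilon$ at each switch: $\lVert\bm{\omega}(t)-\bm{\omega}(t+1)\rVert_{L^{2K}_\infty}=2K\epsilon$ whereas $\lVert\bm{\omega}(t)-\bm{\omega}(t+1)\rVert_{L^\infty}=\epsilon$. Summed over the $W-1$ switches this gives $\deltaIk=2K\deltaI\geq(K/2)\deltaI$, so the ``moreover'' claim follows for free.

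For the per-window lower bound I would run a Le Cam/two-point KL argument over the $M$ candidate groups. Conditioned on the algorithm's history at the start of window $w$, for each group $g$ let $N_w^{(g)}$ be the expected value of $\sum_t|S(t)\cap g|$ summed over the $\Delta_w$ rounds of the window; since $\sum_g N_w^{(g)}\leq K\Delta_w$, averaging yields $\min_g N_w^{(g)}\leq K^2\Delta_w/N$. Letting the adversary pick $G_w$ and an alternative disjoint $g'$ both attaining (approximately) this minimum, the KL between the ``$G_w$-is-good'' and ``$g'$-is-good'' worlds is $O((N_w^{(G_w)}+N_w^{(g')})\epsilon^2)=O(K^2\Delta_w\epsilon^2/N)=O(1)$ by the choice of $\epsilon$. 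By Pinsker the algorithm cannot reliably distinguish the two worlds, so in at least one of them the expected sum $\sum_t|S(t)\cap G_w|$ is at most, say, $K\Delta_w/2$ (here the assumption $K\leq N/4$ is used), and hence the per-window regret is $\Omega(\epsilon K\Delta_w)=\Omega(\sqrt{N\Delta_w})$. Summing gives total regret $\Omega(\sqrt{NTW})$.

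Finally, balancing variation and regret: $\deltaIk\asymp WK\epsilon\asymp\sqrt{N/T}\,W^{3/2}$, so $\deltaIk\leq\Delta$ forces $W=\Theta((\Delta^2 T/N)^{1/3})$ and the total lower bound becomes $\sqrt{NTW}=\Theta(N^{1/3}\Delta^{1/3}T^{2/3})$; the hypothesis $\Delta\in[1/N,T/N]$ rules out the degenerate regimes where $W$ would fall outside $[1,T/N]$, equivalently where windows would be shorter than $N$ (making $\epsilon>1/K$ and breaking $\omega_i\leq 1$) or longer than $T$. The main obstacle is handling the recursive adversarial choice of $G_w$ rigorously: $G_w$ depends on the algorithm's random state at the start of window $w$, which is itself shaped by previous adversarial choices. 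As in the proof of Theorem \ref{thm:dependenceL}, I would apply the per-window Chen-Wang argument conditionally on the algorithm's history and use a minimax/Yao-type averaging to turn the history-dependent adversary into a deterministic worst-case sequence $(G_w)_{w=1}^W$.
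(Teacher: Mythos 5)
Your proposal is correct and follows the same window-decomposition blueprint as the paper, but it differs in two technical details worth noting. First, to obtain the ``moreover'' bound $\deltaIk \geq \frac{K}{2}\deltaI$, the paper keeps Chen--Wang's black-box adversarial strategy $\pi(\eta)$ unchanged and simply alternates the bias magnitude between $\epsilon$ and $\epsilon/2$ in consecutive windows; this guarantees that even if the good $K$-subsets happen to coincide across a switch, at least $K$ coordinates change by $\geq \epsilon/(2K)$ while the $L_\infty$ change is at most $\epsilon/K$, giving the ratio $K/2$. You instead fix a partition of $[N]$ into $\lfloor N/K\rfloor$ blocks of size $K$, restrict the adversary to choose one block per window, and require $G_{w+1}\neq G_w$, so exactly $2K$ coordinates move by $\pm\epsilon$ at every switch and $\deltaIk = 2K\deltaI$; this is a cleaner and strictly stronger conclusion, and it goes through because $K\leq N/4$ leaves at least three admissible blocks so the per-window averaging only loses a constant. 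Second, the paper invokes Theorem 1 of Chen--Wang to get the quantitative per-window regret $\frac{\eta}{9}\bigl(\frac{2M_l}{3} - M_l\sqrt{126M_l\eta^2/N}\bigr)$ directly, whereas you re-derive a Le Cam/KL bound from scratch for your block-restricted family; this is sound but adds work, and you should make sure the constant you extract from Pinsker is compatible with the requirement $\epsilon\leq 1/(2K)$ so that $\omega_j\leq 1$ holds. The final caveat you flag --- that $G_w$ is chosen adaptively on the algorithm's history --- is genuine, but the paper handles it the same way you propose: Chen--Wang's lower bound is applied conditionally to the randomized policy $\mathcal{A}$ plays in window $w$ given the already-fixed prefix, so no additional minimax machinery is needed once that conditioning is made explicit.
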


\noindent To prove Theorem \ref{thm:dependenceDelta}, we divide the time horizon into windows of equal length $M$ and construct an adversarial instance against an algorithm $\mathcal{A}$ recursively: As before, given that we fixed our instance in the first $w-1$ windows, in window $w$, we use a stationary instance such as in \cite{chen2017note} against $\mathcal{A}$ (given the previously fixed windows $w' \leq w-1$). With a good choice of $M$, our instance has a variation at most $\Delta$ and is such that $\mathcal{A}$ accumulates a regret of at least $C \cdot N^{\frac{1}{3}} \Delta^\frac{1}{3}T^{\frac{2}{3}}$. We also choose slightly different instances for the even and odd windows so that $\deltaIk \geq \frac{K}{2} \cdot \Delta_{\infty}$. The proof is given in Appendix \ref{apx:lowerbounds}.

\paragraph{The choice of the variation $\deltaIk$.}
In other related bandit settings, including the classical multi-armed bandit MAB (\cite{besbes2014stochastic,wei2021non}) and the combinatorial semi-bandit \cite{chen2021combinatorial}, the dynamic regret bounds are given as a function of the variation in norm $L_{\infty}$. For example, in MAB, a parameter-free upper-bound of $\Tilde{O}(N^{\frac{1}{3}}(\deltaI)^{\frac{1}{3}}T^\frac{2}{3})$ can be achieved in non-stationary environments where $N$ is the number of the arms and $\deltaI$ is the variation in norm $L_{\infty}$ of the mean rewards of the arms. This poses the question of whether one can achieve a similar upper-bound in our setting (note that the lower-bound $\Tilde{\Omega}(N^{\frac{1}{3}}(\deltaIk)^{\frac{1}{3}}T^\frac{2}{3})$ does not exclude this possibility as the worst-case instances might be such that the variation in both norms $L_{\infty}^{2K}$ and $L_{\infty}$ coincide). However, in our setting, the adversarial instances we construct in Theorem \ref{thm:dependenceDelta} are such that $\deltaIk \geq \frac{K}{2} \cdot \deltaI$. This implies that, in terms of $\deltaI$, no polynomial time algorithm can achieve a regret better than $\Tilde{\Omega} ((NK)^{\frac{1}{3}}\Delta_{\infty}^{\frac{1}{3}}T^\frac{2}{3})$. Since $\deltaIk \leq 2K \cdot \deltaI$, the upper-bound $\Tilde{O} (N^{\frac{1}{3}}(\deltaIk)^{\frac{1}{3}}T^\frac{2}{3})$ is a strictly better characterization of the regret than $\Tilde{O} ((KN)^{\frac{1}{3}}(\deltaI)^{\frac{1}{3}}T^\frac{2}{3})$ justifying our choice of the variation $\deltaIk$ as a more convenient variation for the MNL-Bandit setting.

\section*{Conclusion and further directions}

Motivated by realistic settings, we study the MNL-Bandit problem in non-stationary environments. We design an algorithm with optimal dynamic regret bounds (up to logarithmic factors). Our analysis requires new ideas and techniques including an uninterrupted purchases process, stochastic bounds for the bias, new concentration bounds etc. Our work leaves a number of interesting future directions: In many real-word settings, the non-stationarity of the attraction parameters follow structured patterns (some products are more ``attractive" when they are first introduced into the market, other products exhibit seasonal attraction patterns etc.), this leaves the question of whether such structures can be leveraged to improve the learning algorithms for MNL-Bandit. Another interesting question is the
empirical evaluation of our algorithm on real (or artificial) data.



\printbibliography

\appendix

\section{Analysis: Omitted proofs}
\label{apx:analysis}

\restatesandwich*
\begin{proof} Let us begin with a reminder of the utility representation of categorical distributions.

\vspace{3mm}

\noindent \underline{\it Utility representation of categorical distributions:} Let $\epsilon_1, \dots, \epsilon_m$ be $m$ random variables following a Gumbel(0,1) distribution. Let $\Bar{u}_1, \dots, \Bar{u}_m$ be $m$ constants. Define the random variables $(u_i)_{i \in [m]}$ such that $u_i =\Bar{u}_i+\epsilon_i$ for every $i \in [m]$. We refer to these variables as the utilities. Then it is well known (see for example \cite{maddison2014sampling}) that the argmax of the utilities follows a categorical distribution such that, $$\mathbb{P}\left( \argmax_{k} u_k = j \right) = \frac{e^{\Bar{u}_j}}{\sum_{k=1}^m e^{\Bar{u}_k}}.$$ Here and in the reminder of the paper, since we are dealing with continuous random variables, the argmax will almost surely contain a single element. We will therefore, and without loss of generality, use the notation ``$\argmax u_k = j$" to refer to ``$\argmax u_k = \{j\}$".

\noindent \underline{\it Characterization of the distribution of $X$ in terms of utilities:} Let $(\epsilon_u(m))_{m \geq s\;,\; u \in S \cup \{0\}}$ be i.i.d. Gumbel(0,1) variables, and for every $m \geq s$, let $$Z_m:= \argmax_{u \in S \cup \{0\}} \; \left\{\; \log\left(\omega_u(\min\{m, t\})\right) + \epsilon_u(m)\;\right\},$$ where $\omega_0(k) = 1$ for every $m \geq s$ and where we adopt for the rest of the paper the conventions: $\log(0)=-\infty$ and $\mathbb{P}(-\infty \geq a) = \mathbb{P}(-\infty \geq -\infty) = 0$ for any $a \in \mathbb{R}$. Then:
\begin{claim}
    $X$ has the same distribution as the random variable, $$ \Bar{X} = \sum_{m=s}^{\argmin_{r \geq s} \{Z_r = 0\}-1} \bm{1}\{Z_m=j\}$$
\end{claim}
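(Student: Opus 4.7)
The plan is to verify, case by case, that the distribution of $X$ matches that of $\Bar{X}$ by invoking the utility representation of categorical distributions recalled at the start of the proof. The key starting observation is that for every $m \geq s$, the random variable $Z_m$ is constructed exactly in the form required by the utility representation: $Z_m = \argmax_{u \in S \cup \{0\}} \bigl\{\log(\omega_u(\min\{m,t\})) + \epsilon_u(m)\bigr\}$ with $\epsilon_u(m)$ i.i.d.\ Gumbel$(0,1)$. Hence $Z_m$ follows the MNL categorical law on $S\cup\{0\}$ with probability mass $\omega_u(\min\{m,t\})/(1+\sum_{k \in S}\omega_k(\min\{m,t\}))$. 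Furthermore, since the Gumbel variables are independent across both indices $u$ and $m$, the $(Z_m)_{m\geq s}$ are mutually independent.

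Next, I would argue that the items drawn in the epoch that defines $X$ form, after the stated conditioning, a sequence with exactly this joint law. There are three scenarios that must be checked, matching the definitions of $\zeta^{(t)}_j(k)$ and $s^{(t)}_j(k)$: (i) $k \leq e^{(t)}_j - 1$, so the epoch lives entirely before $t$; (ii) $k = e^{(t)}_j$, so the epoch may straddle $t$ by gluing the real tail of the ongoing epoch at $t$ to its imaginary continuation; and (iii) $k > e^{(t)}_j$, so the epoch is wholly imaginary. In each scenario, conditional on $s^{(t)}_j(k)=s$ and $S(s)=S$, the assortment offered at each round $m$ of the epoch is $S$, and by the MNL sampling rule of the algorithm together with the construction of the imaginary process $X^{(t)}$, the item drawn at round $m$ is an MNL sample on $S\cup\{0\}$ with parameters $\omega(\min\{m,t\})$, independently across $m$. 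The $\min\{m,t\}$ is exactly the bookkeeping needed to express the ``freeze at $t$'' feature of the imaginary environment.

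Finally, the epoch terminates at the first round where a no-purchase occurs, and $X$ counts the rounds within the epoch at which item $j$ was purchased. By the distributional equivalence just established, this count has the same law as
\[
\sum_{m=s}^{\argmin_{r \geq s}\{Z_r = 0\}-1} \mathbf{1}\{Z_m = j\} = \Bar{X},
\]
which is the desired conclusion.

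The main subtlety I anticipate is in case (ii): one must check that splicing the real portion of the straddling epoch (for $m \leq t$, using $\omega(m)$) together with its imaginary continuation (for $m > t$, using $\omega(t)$) yields a single i.i.d.\ sequence of MNL draws with parameter $\omega(\min\{m,t\})$. This should reduce to the fact that the imaginary continuation was defined using fresh, independent randomness with frozen parameters and the same assortment $S=S(t)$, so the spliced sequence is Markovian with the stated one-step transition; none of the other cases present this gluing issue, so case (ii) is where the argument must be made most carefully.
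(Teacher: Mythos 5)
Your proposal captures the right mechanism---pass to the utility/Gumbel representation of $Z_m$ and then match the joint law of the items drawn during the epoch to that of the $(Z_m)$'s up to the first zero---and this is exactly the route the paper takes. However, the central step as you phrase it is imprecise in a way the paper is careful to avoid. You assert that ``the item drawn at round $m$ is an MNL sample on $S\cup\{0\}$ with parameters $\omega(\min\{m,t\})$, independently across $m$.'' Taken literally this is false: the purchased items $(j(m))_{m\geq s}$, conditioned only on $s_j^{(t)}(k)=s$ and $S(s)=S$, are not mutually independent, because whether round $m+1$ still belongs to the epoch (and hence still has assortment $S$) depends on whether $j(m)=0$. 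What is true, and what the claim actually needs, is that the joint law of $(j(s),\dots,j(\tau))$ up to the first no-purchase matches the joint law of $(Z_s,\dots,Z_{\tau'})$ up to the first zero, where the $(Z_m)$ are genuinely i.i.d.\ by construction. The paper establishes this in two steps: first showing that the stopping time $\argmin_{r\geq s}\{j(r)=0\}$ (conditioned appropriately) has the same distribution as $\argmin_{r\geq s}\{Z_r=0\}$, via a chain-rule factorization that uses the Markov property of the algorithm; and then showing that, conditioned on that stopping time being $b$, the indicators $\mathbf{1}\{j(m)=j\}$ for $m\in[s,b-1]$ are independent Bernoulli with the right parameter. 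Your case split over $k\leq e_j^{(t)}-1$, $k=e_j^{(t)}$, and $k>e_j^{(t)}$ is also unnecessary: the paper handles all three uniformly because the purchased-item process $j(\cdot)$ and the assortment $S(\cdot)$ have been extended to the imaginary rounds so that the same one-step transition rule applies everywhere; your worry about the splicing in case (ii) is real, but the splice is already absorbed into that extension and into the $\min\{m,t\}$ bookkeeping, so no separate argument is required. If you replace the blanket independence claim with the two-step stopping-time factorization, your sketch becomes the paper's proof.
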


\begin{proof} First of all, notice that $X$ is equal to the random sum $$\sum_{m=s}^{\argmin_{r \geq s} \{j(r) = 0\}-1} \bm{1}\{j(m)=j\}$$ conditioned on $s^{(t)}_j(k) = s$ and $S(s)=S$. We prove the claim in two steps. First, we show that $\argmin_{r \geq s} \{Z_r = 0\}$ has the same distribution as $\argmin_{r \geq s} \{j(r) = 0\}$ conditioned on $s^{(t)}_j(k) = s$ and $S(s)=S$. Then we prove that for every integer $b \geq s$, the r.v. $\sum_{m=s}^{b-1} \bm{1}\{Z_m=j\}$ conditioned on $\argmin_{r \geq s} \{Z_r = 0\} = b$ has the same distribution as $\sum_{m=s}^{b-1} \bm{1}\{j(m)=j\}$ conditioned on $\argmin_{r \geq s} \{j(r) = 0\} = b$ and $s^{(t)}_j(k) = s$ and $S(s)=S$. The claim follows easily from these two facts.
\paragraph{Step 1.} For every integer $b \geq s$, we have,
\begin{align*}
    &\mathbb{P}\left(\left. \argmin_{r \geq s} \{j(r) = 0\} = b \right| s^{(t)}_j(k) = s, S(s)=S\right)\\
    =\; 
    &\mathbb{P}\left(j(b) = 0, j(b-1) \neq 0, \dots, j(s) \neq 0 \left| s^{(t)}_j(k) = s, S(s)=S\right.\right)\\\\
    =\; 
    &\mathbb{P}\left(j(b) = 0 \left| j(b-1) \neq 0, \dots, j(s) \neq 0,  s^{(t)}_j(k) = s, S(s)=S\right.\right) \times\\
    &\mathbb{P}\left(j(b-1) \neq 0 \left| j(b-2) \neq 0, \dots, j(s) \neq 0,  s^{(t)}_j(k) = s, S(s)=S\right.\right)\times \\
    &\dots\\
    &\mathbb{P}\left(j(s) \neq 0 \left|  s^{(t)}_j(k) = s, S(s)=S\right.\right)\\\\
    =\; 
    &\mathbb{P}\left(j(b) = 0 \left| S(b)=S\right.\right) \mathbb{P}\left(j(b-1) \neq 0 \left| S(b-1)=S\right.\right) \dots
    \mathbb{P}\left(j(s) \neq 0 \left| S(s)=S\right.\right)\\
    =\;
    &\frac{1}{1+\sum_{r \in S} \omega_r(\min\{b,t\})}\times \left(1- \frac{1}{1+\sum_{r \in S} \omega_r(\min\{b-1,t\})}\right) \dots \left(1- \frac{1}{1+\sum_{r \in S} \omega_r(\min\{s,t\})}\right) \\
    =\;
    &\mathbb{P}\left(Z_b = 0\right)\mathbb{P}\left(Z_{b-1} \neq 0\right) \dots \mathbb{P}\left(Z_{s} \neq 0\right)\\
    =\;
    &\mathbb{P}\left(Z_b = 0, Z_{b-1} \neq 0, \dots, Z_{s} \neq 0\right)\\
    =\;
    &\mathbb{P}\left(\argmin_{r \geq s} \{Z_r = 0\} = b\right).
\end{align*}
 The third equality follows from the fact that for every $r \geq s$ we have that $\{j(r) \neq 0, \dots, j(s) \neq 0,  s^{(t)}_j(k) = s, S(s)=S\}$ implies $\{S(r+1)=S\}$ and conditioned on $\{S(r+1)=S\}$ the random variable $j(r+1)$ is independent of all the past time steps before $r$. The fifth equality follows from the utility characterization of categorical distributions, and finally the sixth equality follows from the independence of $(Z_r)_{r \geq s}$ (since they are constructed using independent Gumbel variables).
 
\paragraph{Step 2.} Fix an integer $b \geq s$ and let $m \neq m' \in [s, b-1]$ and $j' \in S \cup \{0\}$, we have,

\begin{align*}
    &\mathbb{P}\left(j(m)=j \left|j(m')=j', \argmin_{r \geq s} \{j(r) = 0\} = b, s^{(t)}_j(k) = s, S(s)=S\right.\right)\\
    =\; 
    &\mathbb{P}\left(j(m)=j \left|j(m) \neq 0, S(m)=S\right.\right)\\
    =\; 
    &\frac{\mathbb{P}\left(j(m)=j, j(m) \neq 0 \left|S(m)=S\right.\right)}{\mathbb{P}\left(j(m) \neq 0 \left|S(m)=S\right.\right)}\\
    =\;
    &\frac{\mathbb{P}\left(j(m)=j\left|S(m)=S\right.\right)}{\mathbb{P}\left(j(m) \neq 0 \left|S(m)=S\right.\right)}\\
    =\;
    &\frac{\omega_j(\min \{m,t\})}{\sum_{r \in S} \omega_j(\min \{m,t\})}\\
\end{align*}
The first equality follows from the fact that $\{j(m')=j', \argmin_{r \geq s} \{j(r) = 0\} = b, s^{(t)}_j(k) = s, S(s)=S\}$ implies that $\{j(m) \neq 0, S(m)=S\}$, then conditioned on $\{j(m) \neq 0, S(m)=S\}$, the selected item $j(m)$ is independent of future rounds (the future rounds depend only on whether $j(m)=0$ or not and once we fix $j(m) \neq 0$, future rounds are independent of the actual sampled item $j(m)$) and independent of past rounds (once the assortment of $S(m)=S$ is fixed, $j(m)$ has the same distribution no matter what sample path leads to this assortment). This implies in particular that, conditioned on the event $\{\argmin_{r \geq s} \{j(r) = 0\} = b, s^{(t)}_j(k) = s, S(s)=S\}$, the variables $(\bm{1}\{j(m)=j\})_{m \in [s,b-1]}$ are independent and identically distributed following a Bernoulli distribution of mean $p = \frac{\omega_j(\min \{m,t\})}{\sum_{r \in S} \omega_j(\min \{m,t\})}$. Next, we have,
\begin{align*}
    &\mathbb{P}\left(Z_m=j \left| Z_{m'}=j', \argmin_{r \geq s} \{Z_r = 0\} = b\right.\right)\\
    =\;&
    \mathbb{P}\left(Z_m=j \left| Z_{m'}=j' , Z_s \neq 0, \dots, Z_{b-1} \neq 0, Z_{b}=0\right.\right)\\
    =\;&\mathbb{P}\left(Z_m=j \left| Z_{m} \neq 0 \right.\right)\\
    =\;&\frac{\mathbb{P}\left(Z_m=j , Z_{m} \neq 0 \right)}{\mathbb{P}\left(Z_{m} \neq 0 \right)}\\
    =\;&\frac{\mathbb{P}\left(Z_m=j\right)}{\mathbb{P}\left(Z_{m} \neq 0 \right)}\\
    =\;& \frac{\omega_j(\min \{m,t\})}{\sum_{r \in S} \omega_j(\min \{m,t\})},
\end{align*}
where the second equality follows from the fact that the variables $(Z_{m})_{m \geq s}$ are independent. This implies in particular that conditioned on $\{\argmin_{r \geq s} \{Z_r = 0\} = b\}$ the variables $(\bm{1}\{Z_m=j\})_{m \in [s,b-1]}$ are i.i.d. Bernoulli variables with mean $p = \frac{\omega_j(\min \{m,t\})}{\sum_{r \in S} \omega_j(\min \{m,t\})}$. Hence, the former and later variables have the same joint distribution implying that their respective sums have the same distribution. To conclude note that for every $a \in \mathbb{N}$ we have,
\begin{align*}
    \mathbb{P}\left(X = a\right)
    &=\mathbb{P}\left(\sum_{m=s}^{\argmin_{r \geq s} \{j(r) = 0\}-1} \bm{1}\{j(m)=j\} = a \left| s^{(t)}_j(k) = s,S(s)=S \right.\right)\\\\
    &=\sum_{b=s}^{\infty}\mathbb{P}\left(\sum_{m=s}^{b-1} \bm{1}\{j(m)=j\} = a \left| s^{(t)}_j(k) = s,S(s)=S, \argmin_{r \geq s} \{j(r) = 0\} = b \right.\right) \\
    & \quad \quad \;\; \times \mathbb{P}\left( \argmin_{r \geq s} \{j(r) = 0\} = b\left| s^{(t)}_j(k) = s,S(s)=S \right.\right)\\\\
    &=\sum_{b=s}^{\infty}\mathbb{P}\left(\sum_{m=s}^{b-1} \bm{1}\{Z_m=j\} = a \left| \argmin_{r \geq s} \{Z_r = 0\} = b \right.\right) \times \mathbb{P}\left( \argmin_{r \geq s} \{Z_r = 0\} = b\right)\\
    &= \mathbb{P}\left(\Bar{X} = a\right)
\end{align*}

\end{proof}

\noindent \underline{\it Coupling and proof of the stochastic bounds:} We distinguish two cases:

\paragraph{Case $s \geq t+1$:} In this case, the random variable $\Bar{X}$ has a geometric distribution with mean $\omega_j(t)$. In fact, for every $a \in \mathbb{N}$, we have,

\begin{align*}
    \mathbb{P}\left(\Bar{X} = a\right)
    &=\sum_{b=s+a}^{\infty}\mathbb{P}\left(\left.\sum_{m=s}^{b-1} \bm{1}\{Z_m=j\} = a \right| \argmin_{r \geq s} \{Z_r = 0\} = b \right) \mathbb{P}\left(\argmin_{r \geq s} \{Z_r = 0\} = b \right)\\
    &=\sum_{b=s+a}^{\infty} \binom{b-s}{a} p^a(1-p)^{b-s-a} \cdot (1-q)^{b-s}q\\
    &=\sum_{b=0}^{\infty} \binom{b+a}{a} p^a(1-p)^{b} \cdot (1-q)^{b+a}q\\
    &= q p^a(1-q)^{a} \sum_{b=0}^{\infty} \binom{b+a}{a} ((1-p)(1-q))^{b}\\
    &= \frac{ q \cdot p^a (1-q)^{a}}{(1-(1-q)(1-p))^{a+1}}\\
    &= \frac{q}{q+s} \left(1- \frac{q}{q+s}\right)^a,
\end{align*}
where $p = \frac{\omega_j(t)}{\sum_{r \in S}\omega_r(t)}$, $q = \frac{1}{1+\sum_{r \in S} \omega_r(t)}$ and $s = p(1-q)$. The second equality follows from the fact that $(\bm{1}\{Z_m=j\})_{m \in [s,b-1]}$ are i.i.d. Bernoulli variables with mean $p$ and that the random variable $\argmin_{r \geq s} \{Z_r = 0\}$ follows a geometric distribution with success probability $q$ by independence of $(Z_m)_{m \geq s}$. Therefore, $\Bar{X}$ follows a geometric distribution of mean 
$$
\frac{1-\left(\frac{q}{q+s}\right)}{\left(\frac{q}{q+s}\right)} = \omega_j(t).
$$
Now since a geometric random variable increases stochastically when its mean increases and because $$\mu^{-(t)}_j(\tau) \leq \omega_j(t) \leq \mu^{+(t)}_j(\tau)$$ it holds that $X^- \leq_{st} \Bar{X} \leq_{st} X^+$ and consequently that $X^- \leq_{st} X \leq_{st} X^+$.
\paragraph{Case $s \leq t$:} For this case, we define two random variables $\Bar{X}^{+}$ and $\Bar{X}^{-}$ using the same Gumbel variables we used in the definition of $Z_m$'s and show that, $$
X^{-} \leq_{st} \Bar{X}^{-} \leq_{st} X \leq_{st} \Bar{X}^{+} \leq_{st} X^{+}.$$ In particular, for every $m \geq s$, define $$A^+_m:= \argmax \;\left\{\; \log\left(\omega_j(\tau) + \delta^{(t)}_j\right) + \epsilon_j(m) \;,\; \left\{\; \log\left(\omega_u(\tau) - \delta^{(t)}_j\right)^+ + \epsilon_u(m)\;\right\}_{u \in S \backslash \{j\}} \;,\; \epsilon_0(m) \;\right\},$$ and, $$B^+_m:= \argmax \;\left\{\; \left\{\; \log\left(\omega_u(\tau) + \delta^{(t)}_j\right) + \epsilon_u(m)\;\right\}_{u \in S} \;,\; \epsilon_0(m) \;\right\},$$ Similarly, define, $$A^-_m:= \argmax \;\left\{\; \log\left(\omega_j(\tau) - \delta^{(t)}_j\right)^+ + \epsilon_j(m) \;,\; \left\{\; \log\left(\omega_u(\tau) + \delta^{(t)}_j\right) + \epsilon_u(m)\;\right\}_{u \in S \backslash \{j\}} \;,\; \epsilon_0(m) \;\right\},$$ and, $$B^-_m:= \argmax \;\left\{\; \left\{\; \log\left(\omega_u(\tau) - \delta^{(t)}_j\right)^+ + \epsilon_u(m)\;\right\}_{u \in S} \;,\; \epsilon_0(m) \;\right\},$$ Then let $$\Bar{X}^{+} = \sum_{m=s}^{\argmin_{r \geq s} \{B^{+}_r = 0\}-1} \bm{1}\{A^{+}_m=j\},$$ and, $$\Bar{X}^{-} = \sum_{m=s}^{\argmin_{r \geq s} \{B^{-}_r = 0\}-1} \bm{1}\{A^{-}_m=j\}.$$ We show that $\Bar{X}^{-} \leq \Bar{X} \leq \Bar{X}^{+}$ holds almost surely, which implies that 
$\Bar{X}^{-} \leq_{st} X \leq_{st} \Bar{X}^{+}$. We have,

\begin{align*}
    B^{+}_{r} = 0 
    &\implies
    \epsilon_0(r) \geq \log \left( \omega_u(\tau) + \delta^{(t)}_j \right) + \epsilon_u(r) \quad \forall u \in S\\
    &\implies
    \epsilon_0(r) \geq \log \left( \omega_u(\min\{r,t\})\right) + \epsilon_u(r) \quad \forall u \in S\\
    &\implies
    Z_r = 0.
\end{align*}
Since $\omega_u(\tau) + \delta^{(t)}_j \geq \omega_u(\min\{r,t\})$. Hence, $$\argmin_{r \geq s} \{B^{+}_r = 0\} \geq \argmin_{r \geq s} \{Z_r = 0\}$$ Next, for every $m \leq \argmin_{r \geq s} \{Z_r = 0\}$ we have, 

\begin{align*}
    Z_{m} = j
    &\implies
    \log \left( \omega_j(\min\{m,t\})\right) + \epsilon_j(m) \geq \log \left( \omega_u(\min\{m,t\})\right) + \epsilon_u(m) \quad \forall u \in S \cup \{0\}\\
    &\implies
    \left\{
    \begin{matrix}
    \log \left( \omega_j(\tau) + \delta^{(t)}_j\right) + \epsilon_j(m) \geq \log \left( \omega_u(\tau) - \delta^{(t)}_j\right)^+ + \epsilon_u(m) \quad \forall u \in S \backslash \{j\}\\
    \log \left(\omega_j(\tau) + \delta^{(t)}_j\right) + \epsilon_j(m) \geq \epsilon_0(m)\\
    \end{matrix}
    \right.\\
    &\implies
    A^{+}_m = j.
\end{align*}
Since, $(\omega_u(\tau) - \delta^{(t)}_j)^+ \leq \omega_u(\min\{m,t\}) \leq \omega_u(\tau) + \delta^{(t)}_j$. Hence, $$\bm{1}\{Z_m=j\} \leq \bm{1}\{A^+_m=j\}$$ From the above we conclude that, almost surely,

\begin{align*}
    \Bar{X}^{+} 
    &= \sum_{m=s}^{\argmin_{r \geq s} \{B^{+}_r = 0\}-1} \bm{1}\{A^{+}_m=j\}\\
    &\geq 
    \sum_{m=s}^{\argmin_{r \geq s} \{Z_r = 0\}-1} \bm{1}\{A^{+}_m=j\}\\
    &\geq 
    \sum_{m=s}^{\argmin_{r \geq s} \{Z_r = 0\}-1} \bm{1}\{Z_m=j\}\\
    &= \Bar{X}.
\end{align*}
By a similar argument we get that, $$\argmin_{r \geq s} \{B^{-}_r = 0\} \leq \argmin_{r \geq s} \{Z_r = 0\}$$ and that, $$\bm{1}\{Z_m=j\} \geq \bm{1}\{A^-_m=j\}$$ Which implies that $$\Bar{X}^{-} \leq \sum_{m=s}^{\argmin_{r \geq s} \{Z_r = 0\}-1} \bm{1}\{Z_m=j\} = \Bar{X}.$$

We now show that $\Bar{X}^{+} \leq_{st} X^{+}$ and $\Bar{X}^{-} \geq_{st} X^{-}$. Let us begin by showing that $\Bar{X}^{+}$ and $\Bar{X}^{-}$ are geometric random variable with means $$\Bar{\mu}^+ = (\omega_j(\tau)+\delta^{(t)}_j)\frac{1+ \sum_{r \in S} \omega_r(\tau) + \delta^{(t)}_r}{1+\omega_j(\tau) + \delta^{(t)}_j+ \sum_{r \in S \backslash \{j\}} (\omega_r(\tau) - \delta^{(t)}_r)^+}$$ and $$
\Bar{\mu}^- = (\omega_j(\tau)-\delta^{(t)}_j)^+\frac{1+ \sum_{r \in S} (\omega_r(\tau) - \delta^{(t)}_r)^+}{1+(\omega_j(\tau) - \delta^{(t)}_j)^+ + \sum_{r \in S \backslash \{j\}} (\omega_r(\tau) + \delta^{(t)}_r)}
$$ respectively. By a similar argument as in the first case we have for every $a \in \mathbb{N}$,

\begin{align*}
    \mathbb{P}\left(\Bar{X}^{+} = a\right)
    &=\mathbb{P}\left(\sum_{m=s}^{\argmin_{r \geq s} \{B^{+}_r = 0\}-1} \bm{1}\{A^{+}_m=j\} = a\right)\\
    &=\sum_{b=s+a}^{\infty}\mathbb{P}\left(\left.\sum_{m=s}^{b-1} \bm{1}\{A^{+}_m=j\} = a \right| \argmin_{r \geq s} \{B^{+}_r = 0\} = b \right) \mathbb{P}\left(\argmin_{r \geq s} \{B^{+}_r = 0\} = b \right)\\
    &=\sum_{b=s+a}^{\infty} \binom{b-s}{a} p^a(1-p)^{b-s-a} \cdot (1-q)^{b-s}q\\
    &= \frac{q}{q+s} \left(1- \frac{q}{q+s}\right)^a,
\end{align*}
where, 
\begin{align*}
    p
    &:= \mathbb{P}\left(\left. A^{+}_{1} = j \right| B^{+}_1 \neq 0 
    \right)\\
    &= \frac{\mathbb{P}\left( A^{+}_{1} = j \;,\; B^{+}_1 \neq 0 \right)}{\mathbb{P}\left(B^{+}_1 \neq 0\right)}\\
    &= \frac{\mathbb{P}\left( A^{+}_{1} = j\right)}{\mathbb{P}\left(B^{+}_1 \neq 0\right)}\\
    &= \left(\frac{\omega_j(\tau) + \delta^{(t)}_j}{1+\omega_j(\tau) + \delta^{(t)}_j+ \sum_{r \in S \backslash \{j\}} (\omega_r(\tau) - \delta^{(t)}_r)^+}\right) \cdot \left(\frac{1+\sum_{r \in S} (\omega_r(\tau) + \delta^{(t)}_r)}{\sum_{r \in S} (\omega_r(\tau) + \delta^{(t)}_r)}\right),
\end{align*}
and,
$q := \mathbb{P}\left(B^{+}_1 = 0\right) = \frac{1}{1+\sum_{r \in S} (\omega_r(\tau) + \delta^{(t)}_r)},$ and, $s = p(1-q).$ 
Hence, $\Bar{X}^{+}$ is a geometric random variable with probability of success, $$\frac{q}{q+s} = \frac{1}{1+ (\omega_j(\tau)+\delta^{(t)}_j)\frac{1+ \sum_{r \in S} \omega_r(\tau) + \delta^{(t)}_r}{1+\omega_j(\tau) + \delta^{(t)}_j+ \sum_{r \in S \backslash \{j\}} (\omega_r(\tau) - \delta^{(t)}_r)^+}},$$ and hence mean $$\Bar{\mu}^+ = (\omega_j(\tau)+\delta^{(t)}_j)\frac{1+ \sum_{r \in S} \omega_r(\tau) + \delta^{(t)}_r}{1+\omega_j(\tau) + \delta^{(t)}_j+ \sum_{r \in S \backslash \{j\}} (\omega_r(\tau) - \delta^{(t)}_r)^+}.$$ Similarly, for $\Bar{X}^{-}$, we have for every $a \in \mathbb{N}$,
\begin{align*}
    \mathbb{P}\left(\Bar{X}^{-} = a\right)
    &=\mathbb{P}\left(\sum_{m=s}^{\argmin_{r \geq s} \{B^{-}_r = 0\}-1} \bm{1}\{A^{-}_m=j\} = a\right)\\
    &=\sum_{b=s+a}^{\infty}\mathbb{P}\left(\left.\sum_{m=s}^{b-1} \bm{1}\{A^{-}_m=j\} = a \right| \argmin_{r \geq s} \{B^{-}_r = 0\} = b \right) \mathbb{P}\left(\argmin_{r \geq s} \{B^{-}_r = 0\} = b \right)\\
    &=\sum_{b=s+a}^{\infty} \binom{b-s}{a} p^a(1-p)^{b-s-a} \cdot (1-q)^{b-s}q\\
    &= \frac{q}{q+s} \left(1- \frac{q}{q+s}\right)^a,
\end{align*}
where, 
\begin{align*}
    p
    &:= \mathbb{P}\left(\left. A^{-}_{1} = j \right| B^{-}_1 \neq 0 
    \right)\\
    &= \frac{\mathbb{P}\left( A^{-}_{1} = j \;,\; B^{-}_1 \neq 0 \right)}{\mathbb{P}\left(B^{-}_1 \neq 0\right)}\\
    &= \frac{\mathbb{P}\left( A^{-}_{1} = j\right)}{\mathbb{P}\left(B^{-}_1 \neq 0\right)}\\
    &= \left(\frac{(\omega_j(\tau) - \delta^{(t)}_j)^+}{1+(\omega_j(\tau) - \delta^{(t)}_j)^+ + \sum_{r \in S \backslash \{j\}} (\omega_r(\tau) + \delta^{(t)}_r)}\right) \cdot \left(\frac{1+\sum_{r \in S} (\omega_r(\tau) - \delta^{(t)}_r)^+}{\sum_{r \in S} (\omega_r(\tau) - \delta^{(t)}_r)^+}\right),
\end{align*}
and, 
$q := \mathbb{P}\left(B^{-}_1 = 0\right) = \frac{1}{1+\sum_{r \in S} (\omega_r(\tau) - \delta^{(t)}_r)^+},$ and, $s = p(1-q).$ 
Hence, $\Bar{X}^{-}$ is a geometric random variable with probability of success, $$\frac{q}{q+s} = \frac{1}{1+ (\omega_j(\tau)-\delta^{(t)}_j)^+\frac{1+ \sum_{r \in S} (\omega_r(\tau) - \delta^{(t)}_r)^+}{1+(\omega_j(\tau) - \delta^{(t)}_j)^+ + \sum_{r \in S \backslash \{j\}} (\omega_r(\tau) + \delta^{(t)}_r)}},$$ and hence mean $$
\Bar{\mu}^- = (\omega_j(\tau)-\delta^{(t)}_j)^+\frac{1+ \sum_{r \in S} (\omega_r(\tau) - \delta^{(t)}_r)^+}{1+(\omega_j(\tau) - \delta^{(t)}_j)^+ + \sum_{r \in S \backslash \{j\}} (\omega_r(\tau) + \delta^{(t)}_r)}
$$ To finish the proof we bound on the means of $\Bar{X}^{+}$ and $\Bar{X}^{-}$ as follows:

\begin{align*}
    &\frac{1 + \delta^{(t)}}{1 - \delta^{(t)}} - \frac{1+ \sum_{r \in S} \omega_r(\tau) + \delta^{(t)}_r}{1+\omega_j(\tau) + \delta^{(t)}_j+ \sum_{r \in S \backslash \{j\}} (\omega_r(\tau) - \delta^{(t)}_r)^+}\\\\
    &\geq \frac{1 + \delta^{(t)}}{1 - \delta^{(t)}} - \frac{1+ \sum_{r \in S} \omega_r(\tau) + \delta^{(t)}_r}{1+\omega_j(\tau) + \delta^{(t)}_j+ \sum_{r \in S \backslash \{j\}} (\omega_r(\tau) - \delta^{(t)}_r)}\\\\
    &= 
    \frac{1}{(1 - \delta^{(t)})(1+\omega_j(\tau) + \delta^{(t)}_j+ \sum_{r \in S \backslash \{j\}} (\omega_r(\tau) - \delta^{(t)}_r))} \left[
    \left(1+\omega_j(\tau) + \delta^{(t)}_j+ \sum_{r \in S \backslash \{j\}} \omega_r(\tau) - \sum_{r \in S \backslash \{j\}} \delta^{(t)}_r\right) \right.\\
    &+\left(\delta^{(t)}+\delta^{(t)}\omega_j(\tau) + \delta^{(t)}\delta^{(t)}_j+ \delta^{(t)}\sum_{r \in S \backslash \{j\}} \omega_r(\tau) - \delta^{(t)}\sum_{r \in S\backslash \{j\}}\delta^{(t)}_r\right)\\
    &\left.+\left(-1 - \sum_{r \in S} \omega_r(\tau) - \sum_{r \in S} \delta^{(t)}_r\right)+ \left(\delta^{(t)} + \delta^{(t)}\sum_{r \in S} \omega_r(\tau) + \delta^{(t)}\sum_{r \in S}\delta^{(t)}_r\right) \right]\\\\
    &= \frac{-2 \sum_{r \in S\backslash \{j\}}\delta^{(t)}_r + 2 \delta^{(t)} + 2 \delta^{(t)}\sum_{r \in S} \omega_r(\tau) + 2 \delta^{(t)}\delta^{(t)}_j}{(1 - \delta^{(t)})(1+\omega_j(\tau) + \delta^{(t)}_j+ \sum_{r \in S \backslash \{j\}} (\omega_r(\tau) - \delta^{(t)}_r))} \geq 0,
\end{align*}
where the last inequality follows from the fact that $\delta^{(t)} \geq \sum_{r \in S \backslash \{j\}} \delta^{(t)}_r$ as $|S| \leq K$ when $s \leq t$. Hence, $$ \Bar{\mu}^+ \leq \mu^{+(t)}_j(\tau).$$ Since a geometric variable increases in stochastic order when the its mean increases we have that $\Bar{X}^{+} \leq_{st} X^{+}$. Similarly, we have,

\begin{align*}
    &\frac{1 - \delta^{(t)}}{1 + \delta^{(t)}} - \frac{1+ \sum_{r \in S} (\omega_r(\tau) - \delta^{(t)}_r)^+}{1+(\omega_j(\tau) - \delta^{(t)}_j)^+ + \sum_{r \in S \backslash \{j\}} (\omega_r(\tau) + \delta^{(t)}_r)}\\\\
    &\leq \frac{1 - \delta^{(t)}}{1 + \delta^{(t)}} - \frac{1+ \sum_{r \in S} (\omega_r(\tau) - \delta^{(t)}_r)}{1+\omega_j(\tau) + \sum_{r \in S \backslash \{j\}} (\omega_r(\tau) + \delta^{(t)}_r)}\\\\
    &= 
    \frac{1}{(1 + \delta^{(t)})(1+\omega_j(\tau) + \sum_{r \in S \backslash \{j\}} (\omega_r(\tau) + \delta^{(t)}_r))} \left[
    \left(1+\omega_j(\tau) + \sum_{r \in S \backslash \{j\}} \omega_r(\tau) + \sum_{r \in S \backslash \{j\}} \delta^{(t)}_r\right) \right.\\
    &+\left(-\delta^{(t)}-\delta^{(t)}\omega_j(\tau) - \delta^{(t)}\sum_{r \in S \backslash \{j\}} \omega_r(\tau) - \delta^{(t)}\sum_{r \in S\backslash \{j\}}\delta^{(t)}_r\right)\\
    &\left.+\left(-1 - \sum_{r \in S} \omega_r(\tau) + \sum_{r \in S} \delta^{(t)}_r\right)+ \left(-\delta^{(t)} - \delta^{(t)}\sum_{r \in S} \omega_r(\tau) + \delta^{(t)}\sum_{r \in S}\delta^{(t)}_r\right) \right]\\\\
    &= \frac{
    \sum_{r \in S\backslash \{j\}} \delta^{(t)}_r + \sum_{r \in S} \delta^{(t)}_r - 2 \delta^{(t)} - 2\delta^{(t)} \sum_{r \in S} \omega_r(\tau) + \delta^{(t)}\delta^{(t)}_j
    }{(1 + \delta^{(t)})(1+\omega_j(\tau) + \sum_{r \in S \backslash \{j\}} (\omega_r(\tau) + \delta^{(t)}_r))}\\
    &\leq \frac{
    2(\sum_{r \in S} \delta^{(t)}_r -  \delta^{(t)}) + ( \delta^{(t)}\delta^{(t)}_j -\delta^{(t)}_j)}{(1 + \delta^{(t)})(1+\omega_j(\tau) + \sum_{r \in S \backslash \{j\}} (\omega_r(\tau) + \delta^{(t)}_r))}
    \leq 0.
\end{align*}
The last inequality follows from the fact that $\delta^{(t)} \geq \sum_{r \in S} \delta^{(t)}_r$ as $|S| \leq K$ when $s \leq t$ and that $\delta^{(t)} \leq \frac{1}{2} < 1$. Hence, $$\mu^{-(t)}_j(\tau) \leq \Bar{\mu}^-.$$ Since a geometric variable increases in stochastic order when the its mean increases it holds that $X^{-} \leq_{st} \Bar{X}^{-}$.

\end{proof}

\restatemgf*
\begin{proof}
Let $t \in [1,T]$ such that $\delta^{(t)} \leq \frac{1}{2}$ and $j \in [N]$. Let $s \geq 1$, $S \in \Gamma(s)$, $k \geq 1$ and $\tau \in [1,t]$. Let $X$ denote the random variable $\zeta^{(t)}_j(k)$ conditioned on $s^{(t)}_j(k) = s $ and $ S(s)=S$. For every $0 \leq \lambda < \log \left( 1 + \frac{1}{\mu^{+(t)}_j(\tau)} \right)$, by Lemma \ref{lem:sandwich}, $X \leq_{st} X^{+}$ where $X^{+}$ is a geometric random variable with mean $\mu^{+(t)}_j(\tau)$. This implies in particular that $X\cdot \bm{1}\{X \leq M\} \leq_{st} X^{+}$ for every $M>0$. A classical result for stochastic ordering states that for a non-decreasing function $\phi$, when $Z \leq_{st} Y$ and both $\mathbb{E}(\phi(Z))$ and $\mathbb{E}(\phi(Y))$ exist it holds that $\mathbb{E}(\phi(Z)) \leq \mathbb{E}(\phi(Y))$ (see for example Theorem 2.2.5 of \cite{belzunce2015introduction}). Hence, $$\mathbb{E}\left(e^{\lambda X\cdot \bm{1}\{X \leq M\}}\right) \leq \mathbb{E}\left(e^{\lambda X^{+}}\right)$$ implying that $\mathbb{E}\left(e^{\lambda X}\right)$ exists and that $$\mathbb{E}\left(e^{\lambda X}\right) \leq \mathbb{E}\left(e^{\lambda X^{+}}\right) = \frac{1}{1-\mu^{+(t)}_j(\tau) \cdot (e^\lambda - 1)}$$ by a monotone convergence theorem. We conclude on the first inequality by noticing that,
\begin{align*}
    \mathbb{E}\left(\left. e^{\lambda \zeta^{(t)}_j(k)} \right| s^{(t)}_j(k) = s \;,\; S(s)=S\right)
    &= \mathbb{E}\left(e^{\lambda X}\right).
\end{align*}

Similarly, for every $\lambda \leq 0$, by Lemma \ref{lem:sandwich}, $X \geq_{st} X^{-}$ where $X^{-}$ is a geometric random variable with mean $\mu^{-(t)}_j(\tau)$ implying that $\lambda X\cdot \bm{1}\{X \leq M\} \leq_{st} \lambda X^{-}$ for every $M>0$ and that $$\mathbb{E}\left(e^{\lambda X\cdot \bm{1}\{X \leq M\}}\right) \leq \mathbb{E}\left(e^{\lambda X^{-}}\right)$$ and finally that $$\mathbb{E}\left(e^{\lambda X}\right) \leq \mathbb{E}\left(e^{\lambda X^{-}}\right) = \frac{1}{1-\mu^{-(t)}_j(\tau) \cdot (e^\lambda - 1)}.$$ We conclude on the second inequality by noticing that,
\begin{align*}
    \mathbb{E}\left(\left. e^{\lambda \zeta^{(t)}_j(k)} \right| s^{(t)}_j(k) = s \;,\; S(s)=S\right)
    &= \mathbb{E}\left(e^{\lambda X}\right).
\end{align*}

\end{proof}

\restateconcentration*
\begin{proof}
The first inequality is trivial for $\eta \geq 1$, hence, we only focus on the case of $0 < \eta < 1$. We have, for every $\lambda\leq 0$,
\begin{align*}
    &\mathbb{P}\left(\frac{1}{k}\sum_{i=1}^{k} \zeta^{(t)}_j(i) < (1-\eta)\mu^{-(t)}_j(\tau)\right)\\ 
    &= \mathbb{P}\left(e^{\lambda \sum_{i=1}^{k} \zeta^{(t)}_j(i)} > e^{\lambda k (1-\eta)\mu^{-(t)}_j(\tau)}\right)\\
    &\leq e^{-\lambda k (1-\eta)\mu^{-(t)}_j(\tau)} \mathbb{E}\left( \prod_{i=1}^k e^{\lambda\zeta^{(t)}_j(i)} \right)\\
    &= e^{-\lambda k (1-\eta)\mu^{-(t)}_j(\tau)} \mathbb{E}\left(
    \mathbb{E}\left(\left. e^{\lambda \zeta^{(t)}_j(k)} \right| s^{(t)}_j(k) \;,\; S\left(s^{(t)}_j(k)\right) \;,\; \zeta^{(t)}_j(1), \dots, \zeta^{(t)}_j(k-1) \right) \prod_{i=1}^{k-1} e^{\lambda \zeta^{(t)}_j(i)}
    \right)\\
    &= e^{-\lambda k (1-\eta)\mu^{-(t)}_j(\tau)} \mathbb{E}\left(
    \mathbb{E}\left(\left. e^{\lambda \zeta^{(t)}_j(k)} \right| s^{(t)}_j(k) \;,\; S\left(s^{(t)}_j(k)\right) \right) \prod_{i=1}^{k-1} e^{\lambda \zeta^{(t)}_j(i)}
    \right)\\
    &\leq e^{-\lambda k (1-\eta)\mu^{-(t)}_j(\tau)} \cdot  \mathbb{E}\left( \left(\frac{1}{1- \mu^{-(t)}_j(\tau) \cdot (e^{\lambda}-1)}\right) \prod_{i=1}^{k-1} e^{\lambda \zeta^{(t)}_j(i)} \right)\\
    &\dots\\
    &\leq e^{-\lambda k (1-\eta)\mu^{-(t)}_j(\tau)} \cdot \left(\frac{1}{1- \mu^{-(t)}_j(\tau) \cdot (e^{\lambda}-1)}\right)^{k},
\end{align*}
where the first inequality follows by Markov's inequality and the third equality holds since conditioned on the starting time of the $k$-th epoch $s^{(t)}_j(k)$ and on the assortment $S\left(s^{(t)}_j(k)\right)$, the random variable $\zeta^{(t)}_j(k)$ is independent of all past rounds and in particular of $\zeta^{(t)}_j(1), \dots, \zeta^{(t)}_j(k-1)$. Take $\lambda = -\log \left(1+\frac{\eta}{(1-\eta)(1+\mu^{-(t)}_j(\tau))}\right)$, we have, 

\begin{align*}
    e^{-\lambda k (1-\eta)\mu^{-(t)}_j(\tau)} 
    &= 
    \left(1+\frac{\eta}{(1-\eta)(1+\mu^{-(t)}_j(\tau))}\right)^{k(1-\eta)\mu^{-(t)}_j(\tau)}\\
    &\leq \left(1+\frac{\eta}{1+\mu^{-(t)}_j(\tau)}\right)^{k\mu^{-(t)}_j(\tau)}
    \\ 
    &=e^{k\mu^{-(t)}_j(\tau)\log \left(1+\frac{\eta}{1+\mu^{-(t)}_j(\tau)}\right)}\\
    &\leq e^{k\mu^{-(t)}_j(\tau) \left(\frac{\eta}{1+\mu^{-(t)}_j(\tau)} -\frac{1}{2} \left(\frac{\eta}{1+\mu^{-(t)}_j(\tau)}\right)^2 + \frac{1}{3} \left(\frac{\eta}{1+\mu^{-(t)}_j(\tau)}\right)^3\right)}\\
    &\leq e^{k\mu^{-(t)}_j(\tau) \left(\frac{\eta}{1+\mu^{-(t)}_j(\tau)} -\frac{1}{6} \left(\frac{\eta}{1+\mu^{-(t)}_j(\tau)}\right)^2\right)}\\
    &\leq e^{\frac{k\mu^{-(t)}_j(\tau)\eta}{1+\mu^{-(t)}_j(\tau)}} \cdot e^{-\frac{k\mu^{-(t)}_j(\tau)\eta^2}{24}}.
\end{align*}
The first inequality follows from the fact that $(1+x)^r \leq 1+rx$ for every $x \geq -1$ and $r \in (0,1)$. The third inequality follows from the inequality $\log(1+x) \leq x - \frac{x^2}{2} + \frac{x^3}{3}$ for every $x > -1$. The fourth inequality holds because $\frac{\eta}{1+\mu^{-(t)}_j(\tau)} \leq 1$. And the last inequality holds because $\mu^{-(t)}_j(\tau) \leq 1$. Next we have,
\begin{align*}
    \left(\frac{1}{1-\mu^{-(t)}_j(\tau) \cdot (e^{\lambda}-1)}\right)^k
    &= \left(1-\frac{\eta \mu^{-(t)}_j(\tau)}{1+\mu^{-(t)}_j(\tau)}\right)^k\\
    &= e^{k \log \left(1-\frac{\eta \mu^{-(t)}_j(\tau)}{1+\mu^{-(t)}_j(\tau)}\right)}\\
    &\leq e^{-\frac{k \mu^{-(t)}_j(\tau)\eta}{1+\mu^{-(t)}_j(\tau)}},
\end{align*}
where the last inequality follows from the fact that $\log(1+x) \leq x$ for every $x > -1$. Hence,
\begin{align*}
    &\mathbb{P}\left(\frac{1}{k}\sum_{i=1}^{k} \zeta^{(t)}_j(i) < (1-\eta)\mu^{-(t)}_j(\tau)\right)
    \leq e^{-\frac{k\mu^{-(t)}_j(\tau)\eta^2}{24}}
\end{align*}
Now for the second inequality, we have, for every $0 \leq \lambda < \log \left( 1 + \frac{1}{\mu^{+(t)}_j(\tau)} \right)$,
\begin{align*}
    &\mathbb{P}\left(\frac{1}{k}\sum_{i=1}^{k} \zeta^{(t)}_j(i) > (1+\eta)\mu^{+(t)}_j(\tau)\right)\\ 
    &= \mathbb{P}\left(e^{\lambda \sum_{i=1}^{k} \zeta^{(t)}_j(i)} > e^{\lambda k (1+\eta)\mu^{+(t)}_j(\tau)}\right)\\
    &\leq e^{-\lambda k (1+\eta)\mu^{+(t)}_j(\tau)} \mathbb{E}\left( \prod_{i=1}^k e^{\lambda\zeta^{(t)}_j(i)} \right)\\
    &= e^{-\lambda k (1+\eta)\mu^{+(t)}_j(\tau)} \mathbb{E}\left(
    \mathbb{E}\left(\left. e^{\lambda \zeta^{(t)}_j(k)} \right| s^{(t)}_j(k) \;,\; S\left(s^{(t)}_j(k)\right) \;,\; \zeta^{(t)}_j(1), \dots, \zeta^{(t)}_j(k-1) \right) \prod_{i=1}^{k-1} e^{\lambda \zeta^{(t)}_j(i)}
    \right)\\
    &= e^{-\lambda k (1+\eta)\mu^{+(t)}_j(\tau)} \mathbb{E}\left(
    \mathbb{E}\left(\left. e^{\lambda \zeta^{(t)}_j(k)} \right| s^{(t)}_j(k) \;,\; S\left(s^{(t)}_j(k)\right) \right) \prod_{i=1}^{k-1} e^{\lambda \zeta^{(t)}_j(i)}
    \right)\\
    &\leq e^{-\lambda k (1+\eta)\mu^{+(t)}_j(\tau)} \cdot  \mathbb{E}\left( \left(\frac{1}{1-\mu^{+(t)}_j(\tau) \cdot (e^{\lambda}-1)}\right) \prod_{i=1}^{k-1} e^{\lambda \zeta^{(t)}_j(i)} \right)\\
    &\dots\\
    &\leq e^{-\lambda k (1+\eta)\mu^{+(t)}_j(\tau)} \cdot \left(\frac{1}{1- \mu^{+(t)}_j(\tau)\cdot (e^{\lambda}-1)}\right)^{k}\\
\end{align*}

For $\lambda = \log \left(1+\frac{\eta}{1+\mu^{+(t)}_j(\tau)(1+\eta)}\right)$, we have, 

\begin{align*}
    e^{-\lambda k (1+\eta)\mu^{+(t)}_j(\tau)} 
    &= 
    e^{k(1+\eta)\mu^{+(t)}_j(\tau) \log \left(1-\frac{\eta}{(1+\eta)(1+\mu^{+(t)}_j(\tau))}\right)}\\
    \\
    &\leq e^{k(1+\eta)\mu^{+(t)}_j(\tau) \left(-\frac{\eta}{(1+\eta)(1+\mu^{+(t)}_j(\tau))} - \frac{1}{2}\left(\frac{\eta}{(1+\eta)(1+\mu^{+(t)}_j(\tau))}\right)^2\right)}\\
    &\leq e^{-\frac{k\mu^{+(t)}_j(\tau)\eta}{1+\mu^{+(t)}_j(\tau)}} \cdot e^{-\frac{k\mu^{+(t)}_j(\tau)\eta^2}{98(1+\eta)}},
\end{align*}
where the first inequality follows by the fact that $\log(1-x) \leq -x -\frac{x^2}{2}$ for every $x \in (0,1)$. For the last inequality, recall that $\mu^{+(t)}_j(\tau) = \left(\omega_j(\tau) +\delta^{(t)}_j\right)\frac{1+ \delta^{(t)}}{1 - \delta^{(t)}}$, note that $\delta^{(t)} \leq \frac{1}{2}$ implying $\frac{1+ \delta^{(t)}}{1 - \delta^{(t)}} \leq 3$, that $ \delta_j^{(t)} \leq \delta^{(t)} \leq \frac{1}{2}$ and that $\omega_j(\tau) \leq 1$, hence $\mu^{+(t)}_j(\tau) \leq 6$. Next, we have,

\begin{align*}
    \left(\frac{1}{1-\mu^{+(t)}_j(\tau)\cdot (e^{\lambda}-1)}\right)^k
    &= \left(1+\frac{\eta \mu^{+(t)}_j(\tau)}{1+\mu^{+(t)}_j(\tau)}\right)^k\\
    &= e^{k \log \left(1+\frac{\eta \mu^{+(t)}_j(\tau)}{1+\mu^{+(t)}_j(\tau)}\right)}\\
    &\leq e^{\frac{k \mu^{+(t)}_j(\tau)\eta}{1+\mu^{+(t)}_j(\tau)}}\\
\end{align*}
By the fact that $\log(1+x) \leq x$ for every $x > -1$. Hence,
\begin{align*}
    &\mathbb{P}\left(\frac{1}{k}\sum_{i=1}^{k} \zeta^{(t)}_j(i) > (1+\eta)\mu^{+(t)}_j(\tau)\right)
    \leq e^{-\frac{k\mu^{+(t)}_j(\tau)\eta^2}{98(1+\eta)}}
\end{align*}
To conclude the proof we show that for every $\eta > 0$, it holds that $\min \{\eta, \eta^2\} \leq \frac{2\eta^2}{1+\eta}$. In fact, if $0 < \eta<1$ then $\frac{2\eta^2}{1+\eta} \geq \frac{2\eta^2}{2} = \eta^2 = \min \{\eta, \eta^2\}$. If $\eta \geq 1$ then $\frac{2\eta^2}{1+\eta} = 2\eta\frac{\eta}{1+\eta} \geq 2\eta\cdot\frac{1}{2} = \eta = \min \{\eta, \eta^2\}$.
\end{proof}

\restateconcentrationUCB*
\begin{proof} Let $\Bar{\zeta}_j(t) := \frac{1}{\tilde{n}_j(t)} \sum_{i=1}^{n_j(t)} \zeta_j^{(t)}(i)$. Then $\Bar{\zeta}_j(t) = \Bar{\omega}_j(t)$, this is because the first $n_j(t)$ epochs all ended before $t-1$ and hence for every $i \leq n_j(t)$ it holds that $\zeta_j^{(t)}(i) = \Tilde{\omega}_j^{(t)}(i)$. Next, we have,

\begin{align*}
    &\mathbb{P}\left(\Hat{\omega}_j(t) < \mu^{-(t)}_j(\tau)\right)\\
    &=\mathbb{P}\left(\Bar{\omega}_j(t) + \sqrt{\frac{192 \Bar{\omega}_j(t) \log NT}{\tilde{n}_j(t)}} + \frac{192\log NT}{\tilde{n}_j(t)} < \mu^{-(t)}_j(\tau)\right)\\
    &=\mathbb{P}\left(\Bar{\zeta}_j(t) + \sqrt{\frac{192 \Bar{\zeta}_j(t) \log NT}{\tilde{n}_j(t)}} + \frac{192\log NT}{\tilde{n}_j(t)} < \mu^{-(t)}_j(\tau)\right)\\
    &=\sum_{k=0}^t \mathbb{P}\left(\left\{\Bar{\zeta}_j(t) + \sqrt{\frac{192 \Bar{\zeta}_j(t) \log NT}{\tilde{n}_j(t)}} + \frac{192\log NT}{\tilde{n}_j(t)} < \mu^{-(t)}_j(\tau)\right\} \; \cap \; \left\{ n_j(t) = k \right\} \right)\\\\
    &\leq \mathbb{P}\left(192\log NT < \mu^{-(t)}_j(\tau)\right)
    \\&\quad +
    \sum_{k=1}^t \mathbb{P}\left(\frac{1}{k} \sum_{i=1}^{k} \zeta_j^{(t)}(i) + \sqrt{\frac{192 \left( \frac{1}{k} \sum_{i=1}^{k} \zeta_j^{(t)}(i) \right) \log NT}{k}} + \frac{192\log NT}{k} < \mu^{-(t)}_j(\tau)\right)\\\\
    &=
    \sum_{k=1}^t \mathbb{P}\left(\frac{1}{k} \sum_{i=1}^{k} \zeta_j^{(t)}(i) + \sqrt{\frac{192 \left( \frac{1}{k} \sum_{i=1}^{k} \zeta_j^{(t)}(i) \right) \log NT}{k}} + \frac{192\log NT}{k} < \mu^{-(t)}_j(\tau)\right).
\end{align*}
In the inequality, we separate the cases $k=0$ and $k \geq 1$ then in the following equality use the fact that $\mathbb{P}\left(192\log NT < \mu^{-(t)}_j(\tau)\right) = 0$ since  $\mu^{-(t)}_j(\tau) \leq 1$. Now, let $k \in [1,t]$ and let us bound the term $$\mathbb{P}\left(\frac{1}{k} \sum_{i=1}^{k} \zeta_j^{(t)}(i) + \sqrt{\frac{192 \left( \frac{1}{k} \sum_{i=1}^{k} \zeta_j^{(t)}(i) \right) \log NT}{k}} + \frac{192\log NT}{k} < \mu^{-(t)}_j(\tau)\right).$$ First of all, taking $\eta = \sqrt{\frac{96 \log NT}{k\mu^{-(t)}_j(\tau)}}$ in the first concentration inequality of Lemma \ref{lem:concentrationZeta} gives,
\begin{align*}
    \mathbb{P}\left( 
    \frac{1}{k} \sum_{i=1}^{k} \zeta_j^{(t)}(i) + \sqrt{\frac{96 \mu^{-(t)}_j(\tau) \log NT}{k}} < \mu^{-(t)}_j(\tau)
    \right) \leq \frac{1}{(NT)^{4}}
\end{align*}
Next, taking $\eta = \frac{1}{2} + \frac{192 \log NT}{k\mu^{-(t)}_j(\tau)}$ in the same concentration inequality gives,

\begin{align*}
    \mathbb{P}\left( 
    \frac{1}{k}\sum_{i=1}^{k} \zeta_j^{(t)}(i) < \frac{1}{2}\mu^{-(t)}_j(\tau) - \frac{192 \log NT}{k}
    \right) 
    &\leq e^{-\frac{k\mu^{-(t)}_j(\tau)}{24} \left(\frac{1}{2} + \frac{192 \log NT}{k\mu^{-(t)}_j(\tau)}\right)^2}\\
    &\leq e^{-\frac{k\mu^{-(t)}_j(\tau)}{24} \left(\frac{1}{4} + \frac{192^2 \log^2 NT}{(k\mu^{-(t)}_j(\tau))^2}\right)}\\
    &= e^{-\frac{1}{96} \left(k\mu^{-(t)}_j(\tau) + \frac{4\times 192^2 \log^2 NT}{k\mu^{-(t)}_j(\tau)}\right)}\\
    &\leq e^{-\frac{2\times 192\log NT}{96}}= \frac{1}{(NT)^{4}}
\end{align*}
The last inequality follows from the fact that $(x+\frac{a}{x}) \geq \sqrt{a}$ for every $x, a > 0$. Finally we have for every $k \geq 1$,

\begin{align*}
    &\mathbb{P}\left(\frac{1}{k} \sum_{i=1}^{k} \zeta_j^{(t)}(i) + \sqrt{\frac{192 \left( \frac{1}{k} \sum_{i=1}^{k} \zeta_j^{(t)}(i) \right) \log NT}{k}} + \frac{192\log NT}{k} < \mu^{-(t)}_j(\tau)\right)\\
    &\leq
    \mathbb{P}\left(\left\{\frac{1}{k} \sum_{i=1}^{k} \zeta_j^{(t)}(i) + \sqrt{\frac{192 \left( \frac{1}{k} \sum_{i=1}^{k} \zeta_j^{(t)}(i) \right) \log NT}{k}} + \frac{192\log NT}{k} < \mu^{-(t)}_j(\tau)\right\} 
    \right.
    \\
    &\left. \quad \quad \cap\; \left\{\sum_{i=1}^{k} \zeta_j^{(t)}(i) \geq \left(\frac{1}{2}\mu^{-(t)}_j(\tau) - \frac{192 \log NT}{k}\right)^+\right\}\right) + \frac{1}{(NT)^{4}}\\\\
    &\leq
    \mathbb{P}\left(\frac{1}{k} \sum_{i=1}^{k} \zeta_j^{(t)}(i) + \sqrt{\frac{192 \left(\frac{1}{2}\mu^{-(t)}_j(\tau) - \frac{192 \log NT}{k} \right)^+ \log NT}{k}} + \frac{192\log NT}{k} < \mu^{-(t)}_j(\tau)\right) + \frac{1}{(NT)^{4}}\\ 
    &=
    \mathbb{P}\left(\frac{1}{k} \sum_{i=1}^{k} \zeta_j^{(t)}(i) + \sqrt{\left(\frac{96 \mu^{-(t)}_j(\tau) \log NT}{k} - \frac{192^2 \log^2 (NT)}{k^2}\right)^+} + \frac{192\log NT}{k} < \mu^{-(t)}_j(\tau)\right) + \frac{1}{(NT)^{4}}\\
    &\leq
    \mathbb{P}\left(\frac{1}{k} \sum_{i=1}^{k} \zeta_j^{(t)}(i) + \sqrt{\left(\frac{96 \mu^{-(t)}_j(\tau) \log NT}{k} - \frac{192^2 \log^2 (NT)}{k^2}\right)^+ + \frac{192^2\log^2 NT}{k^2}} < \mu^{-(t)}_j(\tau)\right) + \frac{1}{(NT)^{4}}\\
    &\leq
    \mathbb{P}\left(\frac{1}{k} \sum_{i=1}^{k} \zeta_j^{(t)}(i) + \sqrt{\frac{96 \mu^{-(t)}_j(\tau) \log NT}{k}} < \mu^{-(t)}_j(\tau)\right) + \frac{1}{(NT)^{4}}\\ 
    &\leq \frac{2}{(NT)^{4}},
\end{align*}
where the third inequality follows by $\sqrt{a+b} \leq \sqrt{a}+\sqrt{b}$ for every $a,b\geq 0$. Hence,

\begin{align*}
    \mathbb{P}\left(\Hat{\omega}_j(t) < \mu^{-(t)}_j(\tau)\right)
    \leq \frac{2}{NT^3}
\end{align*}
Now, for the second concentration bound, we have,

\resizebox{1\linewidth}{!}{
  \begin{minipage}{\linewidth}
  \begin{align*}
    &\mathbb{P}\left(\Hat{\omega}_j(t) > \mu^{+(t)}_j(\tau) + \sqrt{\frac{2266 \mu^{+(t)}_j(\tau) \log NT}{\tilde{n}_j(t)}} + \frac{1364\log NT}{\tilde{n}_j(t)}\right)\\ 
    &= \mathbb{P}\left(\Bar{\omega}_j(t) + \sqrt{\frac{192 \Bar{\omega}_j(t) \log NT}{\tilde{n}_j(t)}} + \frac{192\log NT}{\tilde{n}_j(t)} > \mu^{+(t)}_j(\tau) + \sqrt{\frac{2266 \mu^{+(t)}_j(\tau) \log NT}{\tilde{n}_j(t)}} + \frac{1364\log NT}{\tilde{n}_j(t)}\right)\\
    &= \mathbb{P}\left(\Bar{\zeta}_j(t) + \sqrt{\frac{192 \Bar{\zeta}_j(t) \log NT}{\tilde{n}_j(t)}} > \mu^{+(t)}_j(\tau) + \sqrt{\frac{2266 \mu^{+(t)}_j(\tau) \log NT}{\tilde{n}_j(t)}} + \frac{(1364-192)\log NT}{\tilde{n}_j(t)}\right)\\
    &= \sum_{k=0}^t \mathbb{P}\left(\left\{\Bar{\zeta}_j(t) + \sqrt{\frac{192 \Bar{\zeta}_j(t) \log NT}{\tilde{n}_j(t)}} > \mu^{+(t)}_j(\tau) + \sqrt{\frac{2266 \mu^{+(t)}_j(\tau) \log NT}{\tilde{n}_j(t)}} + \frac{(1364-192)\log NT}{\tilde{n}_j(t)} \right\} \cap \left\{ \tilde{n}_j(t) = k \right\}\right)\\\\
    &\leq \mathbb{P}\left(
    \mu^{+(t)}_j(\tau) + \sqrt{2266 \mu^{+(t)}_j(\tau) \log NT} + (1364 - 192)\log NT < 0
    \right)
    \\&\quad +
    \sum_{k=1}^t \mathbb{P}\left(\frac{1}{k}\sum_{i=1}^k \zeta^{(t)}_j(i) + \sqrt{\frac{192 \left(\frac{1}{k}\sum_{i=1}^k \zeta^{(t)}_j(i)\right) \log NT}{k}} > \mu^{+(t)}_j(\tau) + \sqrt{\frac{2266 \mu^{+(t)}_j(\tau) \log NT}{k}} + \frac{(1364-192)\log NT}{k}\right)\\\\
    &= \sum_{k=1}^t \mathbb{P}\left(\frac{1}{k}\sum_{i=1}^k \zeta^{(t)}_j(i) + \sqrt{\frac{192 \left(\frac{1}{k}\sum_{i=1}^k \zeta^{(t)}_j(i)\right) \log NT}{k}} > \mu^{+(t)}_j(\tau) + \sqrt{\frac{2266 \mu^{+(t)}_j(\tau) \log NT}{k}} + \frac{(1364-192)\log NT}{k}\right).\\
    \end{align*}
  \end{minipage}
}
Let $k \in [1,t]$ and let us bound the term

\resizebox{1\linewidth}{!}{
  \begin{minipage}{\linewidth}
    \begin{align*}
        \mathbb{P}\left(\frac{1}{k}\sum_{i=1}^k \zeta^{(t)}_j(i) + \sqrt{\frac{192 \left(\frac{1}{k}\sum_{i=1}^k \zeta^{(t)}_j(i)\right) \log NT}{k}} > \mu^{+(t)}_j(\tau) + \sqrt{\frac{2266 \mu^{+(t)}_j(\tau) \log NT}{k}} + \frac{(1364-192)\log NT}{k}\right).\\
    \end{align*}
  \end{minipage}
}
Let $\eta = \sqrt{\frac{784\log NT}{k\mu^{+(t)}_j(\tau)}} + \frac{784\log NT}{k\mu^{+(t)}_j(\tau)}$. Then $$\eta \cdot k\mu^{+(t)}_j(\tau) \geq 784\log NT$$ and $$\eta^2 \cdot k\mu^{+(t)}_j(\tau) \geq 784\log NT.$$ Hence, taking $\eta$ in the second concentration inequality of Lemma \ref{lem:concentrationZeta} gives,
\begin{align*}
    \mathbb{P}\left( 
    \sum_{i=1}^{k} \zeta_j^{(t)}(i) > \sqrt{\frac{784 \mu^{+(t)}_j(\tau) \log NT}{k}} + \frac{784\log NT}{k} + \mu^{+(t)}_j(\tau)
    \right)
    &\leq \frac{1}{(NT)^{4}}.
\end{align*}
Next, taking $\eta = 1 + \frac{784 \log NT}{k\mu^{+(t)}_j(\tau)}$ in the same concentration inequality gives,

\begin{align*}
    \mathbb{P}\left( 
    \sum_{i=1}^{k} \zeta_j^{(t)}(i) > 2\mu^{+(t)}_j(\tau) + \frac{784 \log NT}{k}
    \right) 
    &\leq e^{-\frac{k\mu^{+(t)}_j(\tau)}{196} \left(1 + \frac{784 \log NT}{k\mu^{+(t)}_j(\tau)}\right)}\\
    &\leq e^{-\frac{784 \log NT}{196}} = \frac{1}{(NT)^{4}}
\end{align*}
Finally, we have for every $k \geq 1$,

\resizebox{1\linewidth}{!}{
  \begin{minipage}{\linewidth}
    \begin{align*}
        &\mathbb{P}\left(\frac{1}{k}\sum_{i=1}^k \zeta^{(t)}_j(i) + \sqrt{\frac{192 \left(\frac{1}{k}\sum_{i=1}^k \zeta^{(t)}_j(i)\right) \log NT}{k}} > \mu^{+(t)}_j(\tau) + \sqrt{\frac{2266 \mu^{+(t)}_j(\tau) \log NT}{k}} + \frac{(1364-192)\log NT}{k}\right)\\
    &\leq 
    \mathbb{P}\left(\left\{\frac{1}{k}\sum_{i=1}^k \zeta^{(t)}_j(i) + \sqrt{\frac{192 \left(\frac{1}{k}\sum_{i=1}^k \zeta^{(t)}_j(i)\right) \log NT}{k}} > \mu^{+(t)}_j(\tau) + \sqrt{\frac{2266 \mu^{+(t)}_j(\tau) \log NT}{k}} + \frac{(1364-192)\log NT}{k}\right\} \right.\\\\
    &\quad \quad \quad \cap \left.\left\{\frac{1}{k}
    \sum_{i=1}^{k} \zeta_j^{(t)}(i) \leq 2\mu^{+(t)}_j(\tau) + \frac{784 \log NT}{k}
    \right\}  
    \right) + \frac{1}{(NT)^{4}}\\\\
    &\leq 
    \mathbb{P}\left(\frac{1}{k}\sum_{i=1}^k \zeta^{(t)}_j(i) + \sqrt{\frac{192 \left(
    2\mu^{+(t)}_j(\tau) + \frac{784 \log NT}{k}
    \right) \log NT}{k}}> \mu^{+(t)}_j(\tau) + \sqrt{\frac{2266 \mu^{+(t)}_j(\tau) \log NT}{k}} + \frac{(1364-192)\log NT}{k} 
    \right) + \frac{1}{(NT)^{4}}\\
    &= 
    \mathbb{P}\left(\frac{1}{k}\sum_{i=1}^k \zeta^{(t)}_j(i) + \sqrt{\frac{384
    \mu^{+(t)}_j(\tau) \log NT}{k} + \frac{150528 \log^2 NT
    }{k^2}}> \mu^{+(t)}_j(\tau) + \sqrt{\frac{2266 \mu^{+(t)}_j(\tau) \log NT}{k}} + \frac{(1364-192)\log NT}{k} 
    \right) + \frac{1}{(NT)^{4}}\\
    &\leq 
    \mathbb{P}\left(\frac{1}{k}\sum_{i=1}^k \zeta^{(t)}_j(i) + \sqrt{\frac{384
    \mu^{+(t)}_j(\tau) \log NT}{k}} + \sqrt{\frac{150528 \log^2 NT
    }{k^2}}> \mu^{+(t)}_j(\tau) + \sqrt{\frac{2266 \mu^{+(t)}_j(\tau) \log NT}{k}} + \frac{(1364-192)\log NT}{k} 
    \right) + \frac{1}{(NT)^{4}}\\
    &= 
    \mathbb{P}\left(\frac{1}{k}\sum_{i=1}^k \zeta^{(t)}_j(i) > \frac{1364-192- \sqrt{150528}}{784} \frac{784\log NT}{k} + \frac{\sqrt{2266}-\sqrt{384}}{\sqrt{784}}\sqrt{\frac{784
    \mu^{+(t)}_j(\tau) \log NT}{k}}  + \mu^{+(t)}_j(\tau) 
    \right) + \frac{1}{(NT)^{4}}\\
    &\leq 
    \mathbb{P}\left(\frac{1}{k}\sum_{i=1}^k \zeta^{(t)}_j(i) > \frac{784\log NT}{k} + \sqrt{\frac{784
    \mu^{+(t)}_j(\tau) \log NT}{k}}  + \mu^{+(t)}_j(\tau) 
    \right) + \frac{1}{(NT)^{4}}\\
    &\leq \frac{2}{(NT)^{4}},\\\\
    \end{align*}
  \end{minipage}
}

\noindent where the third inequality follows by $\sqrt{a+b} \leq \sqrt{a}+\sqrt{b}$ for every $a,b \geq 0$, and the fourth inequality holds because $\frac{1364-192- \sqrt{150528}}{784} \geq 1$ and $\frac{\sqrt{2266}-\sqrt{384}}{\sqrt{784}} \geq 1$. Hence,
\begin{align*}
    \mathbb{P}\left(\Hat{\omega}_j(t) > \mu^{+(t)}_j(\tau) + \sqrt{\frac{2266 \mu^{+(t)}_j(\tau) \log NT}{\tilde{n}_j(t)}} + \frac{1364\log NT}{\tilde{n}_j(t)}\right)
    \leq \frac{2}{NT^3}
\end{align*}

\end{proof}

\restatelengthepoch*
\begin{proof}
For every $i \in [1, e^{(t)}_j]$, the length of the epoch $l_j^{(t)}(i)$ is such that $$\left| l_j^{(t)}(i) \right| \leq s^{(t)}_j(i+1) - s^{(t)}_j(i).$$ Note that the inequality might be strict if the last epoch $l_j^{(t)}\left(e^{(t)}_j\right)$ is incomplete. Hence,

\begin{align*}
    &\mathbb{P}\left(\bigcup_{\substack{1 \leq i \leq e^{(t)}_j }} \left\{\left| l_j^{(t)}(i) \right| > 5 \log NT \left(1 + \sum_{k \in S\left(s_j^{(t)}(i)\right)} \omega_k(u) + \delta^{(t)}_k\right) \right\} \right)\\\\
    &\leq \mathbb{P}\left(\bigcup_{\substack{1 \leq i \leq e^{(t)}_j }} \left\{\left(s^{(t)}_j(i+1) - s^{(t)}_j(i)\right) > 5 \log NT \left(1 + \sum_{k \in S\left(s_j^{(t)}(i)\right)} \omega_k(u) + \delta^{(t)}_k\right) \right\} \right)\\\\
    &\leq \mathbb{P}\left(\bigcup_{\substack{1 \leq i \leq T}} \left\{\left(s^{(t)}_j(i+1) - s^{(t)}_j(i)\right) > 5 \log NT \left(1 + \sum_{k \in S\left(s_j^{(t)}(i)\right)} \omega_k(u) + \delta^{(t)}_k\right) \right\} \right)\\\\
    &\leq
    \sum_{i=1}^T \mathbb{P}\left(\left(s^{(t)}_j(i+1) - s^{(t)}_j(i)\right) > 5 \log NT \left(1 + \sum_{k \in S\left(s_j^{(t)}(i)\right)} \omega_k(u) + \delta^{(t)}_k\right) \right)
\end{align*}
Let $\alpha = \left(1 + \sum_{k \in S\left(s_j^{(t)}(i)\right)} \omega_k(u) + \delta^{(t)}_k\right).$ To conclude we bound the summands in the above expression. In particular, for every $i \in [1,T]$ and $j \in [N]$ we have,

\begin{align*}
    &\mathbb{P}\left(s^{(t)}_j(i+1) - s^{(t)}_j(i) > 
    5 \log NT \cdot \alpha \right)\\\\
    &= \mathbb{E}\left( \left.
    \mathbb{P}\left[s^{(t)}_j(i+1) - s^{(t)}_j(i) > 5 \log NT \cdot \alpha \right |
    s^{(t)}_j(i), S\left(s^{(t)}_j(i)\right) \right]
    \right)\\\\
    &\leq \mathbb{E}\left( \left.
    \mathbb{P}\left[j\left( s^{(t)}_j(i) \right) \neq 0 \;,\; \dots \;,\; j\left( s_j^{(t)}(i) + \left\lfloor 5 \log NT \cdot \alpha \right\rfloor - 1 \right) \neq 0  \;\right|
    s^{(t)}_j(i), S\left(s^{(t)}_j(i)\right) \right]
    \right)\\\\
    &= \mathbb{E}\left(
    \prod_{s = s_j^{(t)}(i)}^{s_j^{(t)}(i) + \left\lfloor 5 \log NT \cdot \alpha \right\rfloor - 1} \left(1-\frac{1}{1+\sum_{k \in S\left(s^{(t)}_j(i)\right)} \omega_k(\min\{s,t\})} \right)
    \right)\\
    &\leq \mathbb{E}\left(
    \prod_{s = s_j^{(t)}(i)}^{s_j^{(t)}(i) + \left\lfloor 5 \log NT \cdot \alpha \right\rfloor - 1} \left(1-\frac{1}{1+\sum_{k \in S\left(s^{(t)}_j(i)\right)} \omega_k(u) + \delta^{(t)}_k}\right)
    \right)\\
    &= \mathbb{E}\left(
    \left(1-\frac{1}{1+\sum_{k \in S\left(s^{(t)}_j(i)\right)} \omega_k(u) + \delta^{(t)}_k}\right)^{\left\lfloor 5 \log NT \cdot \alpha \right\rfloor}
    \right)\\
    &\leq \mathbb{E}\left(
    \left(1-\frac{1}{1+\sum_{k \in S\left(s^{(t)}_j(i)\right)} \omega_k(u) + \delta^{(t)}_k}\right)^{4 \log NT \left(1 + \sum_{k \in S\left(s^{(t)}_j(i)\right)} \omega_k(u) + \delta^{(t)}_k\right)}
    \right)\\
    &= \mathbb{E}\left(
    \mathrm{exp}\left[{4 \log NT \left(1 + \sum_{k \in S\left(s^{(t)}_j(i)\right)} \omega_k(u) + \delta^{(t)}_k\right) \log{\left(1-\frac{1}{1+\sum_{k \in S\left(s^{(t)}_j(i)\right)} \omega_k(u) + \delta^{(t)}_k}\right)}}\right]
    \right)\\
    &\leq \mathbb{E}\left(
    \mathrm{exp}\left[{-4 \log NT \left(1 + \sum_{k \in S\left(s^{(t)}_j(i)\right)} \omega_k(u) + \delta^{(t)}_k\right) {\left(\frac{1}{1+\sum_{k \in S\left(s^{(t)}_j(i)\right)} \omega_k(u) + \delta^{(t)}_k}\right)}}\right]
    \right)\\
    &\leq \frac{1}{NT^4}.
\end{align*}
The second equality follows by applying the chain rule of conditional probabilities. The second inequality holds because $\omega_k(\min\{s, t\}) \leq \omega_k(u) + \delta^{(t)}_k$ for every $s$ and $u$. The third inequality holds for $NT \geq e$ (so that $\alpha \cdot \log NT \geq 1$). And finally the fourth inequality follows from the fact that $\log(1-x) \leq -x$ for every $x > -1$. The proof follows from a union bound over the $T$ summands.

\end{proof}

\restateregret*
\begin{proof} We now give the proof of Theorem \ref{thm:DReg}. Let us begin by proving that $\Delta$ is indeed a non-stationarity measure (see Definition \ref{def:near-stat}). Let $t \in [1,T-1]$ and consider some $$S^* \in \displaystyle\argmax_{S \subset [N]:|S| \leq K\\} |R(S, \bm{\omega}(t)) - R(S, \bm{\omega}(t+1))|.$$ We have,

\begin{align*}
    &\max_{S \subset [N]:|S| \leq K} |R(S, \bm{\omega}(t)) - R(S, \bm{\omega}(t+1))|\\\\
    &=
    |R(S^*, \bm{\omega}(t)) - R(S^*, \bm{\omega}(t+1))|
    \\\\ &= 
    \left|\frac{\sum_{j \in S^*}r_j \omega_j(t)}{1+\sum_{k \in S^*}\omega_k(t)} - \frac{\sum_{j \in S^*}r_j \omega_j(t+1)}{1+\sum_{k \in S^*}\omega_k(t+1)}\right|
    \\\\ &=
    \left|
    \frac{\sum_{j \in S^*}r_j (\omega_j(t)-\omega_j(t+1)) + \sum_{j,k \in S^*} r_j (\omega_j(t)\omega_k(t+1) - \omega_k(t)\omega_j(t+1))}
    {(1+\sum_{k \in S^*}\omega_k(t))(1+\sum_{k \in S^*}\omega_k(t+1))}\right|
    \\\\&=
    \left|
    \frac{\sum_{j \in S^*}r_j (\omega_j(t)-\omega_j(t+1)) + \sum_{j,k \in S^*} r_j (\omega_j(t)\omega_k(t+1) - \omega_j(t)\omega_k(t) + \omega_j(t)\omega_k(t) - \omega_k(t)\omega_j(t+1))}
    {(1+\sum_{k \in S^*}\omega_k(t))(1+\sum_{k \in S^*}\omega_k(t+1))}\right|
    \\\\
    &\leq
    \sum_{j \in S^*}\left|\omega_j(t)-\omega_j(t+1)\right| 
    + 
    \frac{\sum_{j\in S^*} r_j \omega_j(t)}
    {1+\sum_{k \in S^*}\omega_k(t)}\sum_{k\in S^*}\left|\omega_k(t+1) - \omega_k(t)\right|
    \\
    &\quad +\frac{\sum_{k\in S^*} \omega_k(t)}
    {1+\sum_{k \in S^*}\omega_k(t)}\sum_{j\in S^*}\left|\omega_j(t+1) - \omega_j(t)\right|
    \\\\
    &\leq
    3 \sup_{S \subset [N]:|S| \leq 2K} \sum_{r \in S}|\omega_r(t) - \omega_r(t+1)| \leq \Delta(t)
\end{align*}

Let us now fix a $t \in [1,T]$ such that $\sum_{\tau=1}^{t-1} \Delta(\tau) \leq \rho(t)$ and $\delta^{(t)} \leq \frac{1}{2}$ and show that conditions \eqref{eq:cond1} and \eqref{eq:cond2} hold with probability at least $1-O(\frac{1}{T})$ with $\hat{R}_\tau = R(S(\tau), \bm{\hat{\omega}}(\tau))$ for every $\tau \in [1,T]$. But before moving to conditions \eqref{eq:cond1} and \eqref{eq:cond2}, we remind the following lemma from \cite{mnlucb}  characterizing the change in the expected payoff function with respect to the attraction parameters when evaluated at an optimal solution.

\begin{restatable}{lemma}{restaterevmonotonicity}[Lemma A.3 from \cite{mnlucb} ]
\label{lem:lipschitz}
    Consider the attraction parameters $\bm{\omega}$ and let $S^*$ be the optimal assortment w.r.t $\bm{\omega}$. Then for every $\bm{\omega}' \geq \bm{\omega}$ (element-wise), it holds that $ R(S^*, \bm{\omega}') \geq R(S^*, \bm{\omega})$.
\end{restatable}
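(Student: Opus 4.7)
The plan is to exploit the optimality of $S^*$ at $\bm{\omega}$ to establish a pointwise lower bound $r_j \geq R(S^*,\bm{\omega})$ for every $j \in S^*$, and then to derive the monotonicity by a direct algebraic evaluation of $R(S^*,\bm{\omega}') - R(S^*,\bm{\omega})$. Since $R(S^*,\cdot)$ depends only on the coordinates indexed by $S^*$, it suffices to treat the case $\omega'_j = \omega_j$ for $j \notin S^*$.

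The first step is the auxiliary claim that $r_j \geq R(S^*,\bm{\omega})$ for every $j \in S^*$. I would fix $j \in S^*$, write $B := \sum_{k \in S^*\setminus\{j\}}\omega_k$, and compute the revenue difference in closed form:
\[
R(S^*,\bm{\omega}) - R(S^*\setminus\{j\},\bm{\omega}) \;=\; \frac{\omega_j\bigl(r_j - R(S^*\setminus\{j\},\bm{\omega})\bigr)}{1+B+\omega_j}.
\]
Because $S^*\setminus\{j\}$ is a feasible assortment (it has cardinality at most $K$), the optimality of $S^*$ forces this difference to be nonnegative, giving $r_j \geq R(S^*\setminus\{j\},\bm{\omega})$. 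Rewriting $R(S^*,\bm{\omega})$ as the convex combination $\frac{1+B}{1+B+\omega_j}\,R(S^*\setminus\{j\},\bm{\omega}) + \frac{\omega_j}{1+B+\omega_j}\,r_j$, which reads straight off the logit formula, then upgrades this to $R(S^*,\bm{\omega}) \leq r_j$.

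The second step is the main computation. Setting $\Delta_j := \omega'_j - \omega_j \geq 0$ for $j \in S^*$, I would clear denominators and cancel to obtain
\[
R(S^*,\bm{\omega}') - R(S^*,\bm{\omega}) \;=\; \frac{\sum_{j \in S^*}\bigl(r_j - R(S^*,\bm{\omega})\bigr)\,\Delta_j}{1+\sum_{j \in S^*}\omega'_j}.
\]
Each $\Delta_j \geq 0$ and, by the first step, each $r_j - R(S^*,\bm{\omega}) \geq 0$, so the right-hand side is nonnegative, yielding the desired inequality.

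The main obstacle is the auxiliary claim, since this is the only place where the optimality of $S^*$ is used. The key insight is that the global optimality condition can be replaced by the much weaker local condition that removing any single element of $S^*$ does not improve revenue, and this single-swap condition alone suffices to pin down the pointwise bound $r_j \geq R(S^*,\bm{\omega})$. Once the pointwise bound is in hand, the rest of the argument is purely algebraic and requires no further structural assumption on $\bm{\omega}'$ beyond the coordinate-wise inequality $\bm{\omega}' \geq \bm{\omega}$.
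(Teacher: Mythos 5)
The paper states this lemma only as a citation (Lemma~A.3 of \cite{mnlucb}) and does not reprove it, so there is no in-paper argument to compare against; your self-contained proof is the standard one and is algebraically correct. Both identities you use check out: with $V = 1+\sum_{j\in S^*}\omega_j$, $V' = 1+\sum_{j\in S^*}\omega'_j$, the numerator of $R(S^*,\bm{\omega}')-R(S^*,\bm{\omega})$ after clearing denominators is $V\sum_j r_j\Delta_j - \bigl(\sum_j r_j\omega_j\bigr)\sum_j\Delta_j$, and dividing by $V$ gives exactly $\sum_{j\in S^*}\bigl(r_j-R(S^*,\bm{\omega})\bigr)\Delta_j$ over $V'$; and the convex-combination rewriting of $R(S^*,\bm{\omega})$ in terms of $R(S^*\setminus\{j\},\bm{\omega})$ and $r_j$ is the right way to upgrade the single-removal bound to the pointwise bound $r_j \geq R(S^*,\bm{\omega})$.

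One unflagged subtlety: the deduction $r_j \geq R(S^*\setminus\{j\},\bm{\omega})$ from nonnegativity of $\frac{\omega_j\bigl(r_j-R(S^*\setminus\{j\},\bm{\omega})\bigr)}{1+B+\omega_j}$ requires $\omega_j>0$. If some $j\in S^*$ has $\omega_j=0$, the difference is identically zero and the bound does not follow; and indeed the lemma as stated can then fail for that choice of $S^*$ (e.g.\ $r_1=1$, $r_2=0.1$, $\omega=(1,0)$, $S^*=\{1,2\}$, $\omega'=(1,1)$ gives $R(S^*,\bm{\omega}')=1.1/3 < 1/2 = R(S^*,\bm{\omega})$). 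So the argument tacitly assumes either that all attraction parameters indexed by $S^*$ are strictly positive or that $S^*$ is taken to be a minimal optimal assortment. This is an implicit assumption already present in the cited lemma, not a defect specific to your proof, but it is worth noting that your step is precisely where it enters.
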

Let us now prove the conditions. First, note that $(M_t = \sum_{\tau=1}^{t} R(S(\tau), \bm{\omega}(\tau)) -r(\tau))_{1 \leq t \leq T}$ is a martingale with respect to the filtration $$\mathcal{F}_0=\{\Omega, \emptyset\} \subset \mathcal{F}_1=\sigma(M_1, S(1)) \subset \dots \subset \mathcal{F}_T=\sigma(M_T, S(T)),$$ where $\sigma(M_t, S(t))$ denotes the $\sigma$-algebra generated by the random variables $M_t$ and $S(t)$ and $\Omega$ denotes the probability space. Since $R(S(\tau), \bm{\omega}(\tau)) = \mathbb{E}(r(\tau)|S(\tau))$ for all $\tau$, the common mean of $(M_t)_{t \in [1,T]}$ is $0$. Hence, by Azuma inequality we have,

\begin{align*}
    \mathbb{P}\left(\sum_{\tau=1}^{t} R(S(\tau), \bm{\omega}(\tau)) - r(\tau) > \epsilon \right) \leq e^{-\frac{\epsilon^2}{2t}},
\end{align*}
which implies that with probability at least $1-\frac{1}{T}$ it holds that $\sum_{\tau=1}^{t} R(S(\tau), \bm{\omega}(\tau)) - r(\tau) \leq \sqrt{2t\log(T)}$. Next, let $$\omega_j^{\sf up}(t) := \frac{1+\delta^{(t)}}{1-\delta^{(t)}} \cdot \Hat{\omega}_j(t) + \delta^{(t)}_j.$$ The above fact combined with Lemmas \ref{lem:concentrationUCB} and \ref{lem:lengthepochs} imply that with probability at least $1-\frac{6}{T}$ it holds that, $$\sum_{\tau=1}^{t} R(S(\tau), \bm{\omega}(\tau)) - r(\tau) \leq \sqrt{2t\log(T)},$$ $$\left\{\omega_j(\tau) \leq \omega_j^{\sf up}(t) \quad \forall \tau \leq t \in [1,T] , j \in [N]\right\},$$ $$\left\{\Hat{\omega}_j(t) \leq \mu^{+(t)}_j(\tau)  + \sqrt{\frac{2266 \mu^{+(t)}_j(\tau)  \log NT}{\tilde{n}_j(t)}} + \frac{1364\log NT}{\tilde{n}_j(t)} \quad \forall \tau \leq t \in [1,T], j \in [N] \right\},$$ and $$\left\{ 
\left| l_j^{(t)}(i) \right| \leq 5 \log NT \left(1 + \sum_{k \in S\left(s_j^{(t)}(i)\right)} \omega_k(u) + \delta^{(t)}_k\right)
 \quad \forall i \in [1, e^{(t)}_j], u \leq t \in [1,T], j \in [N]\right\}.$$ Therefore, with probability at least $1-\frac{6}{T}$, we have:
\paragraph{Condition \eqref{eq:cond1}.}

Let $\tau \in \displaystyle\argmin_{u \in [1,t]} R(S^*(u), \bm{\omega}(u))$. We have,

\begin{align*}
    \Hat{R}_t - \min_{u \in [1,t]} R(S^*(u), \bm{\omega}(u)) 
    &= R(S(t), \bm{\Hat{\omega}}(t)) - R(S^*(\tau),\bm{\omega}(\tau))\\
    &\geq R(S^*(\tau), \bm{\Hat{\omega}}(t)) - R(S^*(\tau),\bm{\omega}(\tau))\\
    &\geq R(S^*(\tau), \bm{\Hat{\omega}}(t)) - R(S^*(\tau), \bm{\omega}^{\sf up}(t))\\
    &= \frac{\sum_{k \in S^*(\tau)} r_k \Hat{\omega}_k(t)}{1+ \sum_{k \in S^*(\tau)} \Hat{\omega}_k(t)} - \frac{\sum_{k \in S^*(\tau)} r_k \omega^{\sf up}_k(t)}{1+ \sum_{k \in S^*(\tau)} \omega^{\sf up}_k(t)}\\
    &\geq \frac{\sum_{k \in S^*(\tau)} r_k (\Hat{\omega}_k(t) - \omega^{\sf up}_k(t))}{1+ \sum_{k \in S^*(\tau)} \Hat{\omega}_k(t)}\\
    &= \frac{\frac{-2\delta^{(t)}}{1-\delta^{(t)}}\sum_{k \in S^*(\tau)} r_k \Hat{\omega}_k(t) - \sum_{k \in S^*(\tau)}r_k  \delta^{(t)}_k}{1+ \sum_{k \in S^*(\tau)} \Hat{\omega}_k(t)}\\
    &\geq - 2 \frac{\delta^{(t)}}{1-\delta^{(t)}} - \delta^{(t)}\\
    &\geq - 5 \delta^{(t)} \\
    &\geq - \sum_{u=1}^{t-1}\Delta(u).
\end{align*}
The first inequality follows from the fact that $S(t)$ is optimal for the UCB $\hat{\bm{\omega}}(t)$. The second inequality follows from the fact that $S^*(t)$ is the optimal assortment under $\bm{\omega}(t)$ by applying Lemma \ref{lem:lipschitz}. The third inequality holds because $\hat{\bm{\omega}}(t) \leq \bm{\omega}^{\sf up}(t)$ coordinate-wise. The fourth inequality is because $\sum_{k \in S^*(\tau)} \delta^{(t)}_k \leq \delta^{(t)}$ as $|S^*(\tau)| \leq K$ and $r_k \leq 1$ for all $k$. And the next inequality follows from the fact that $\delta^{(t)} \leq \frac{1}{2}$.

\paragraph{Condition \eqref{eq:cond2}.}
 
We have,

$$\frac{1}{t}\sum_{\tau=1}^{t} \Hat{R}_\tau - r(\tau) = \frac{1}{t}\left(\sum_{\tau=1}^{t} \Hat{R}_\tau - R(S(\tau), \bm{\omega}(\tau))\right) + \frac{1}{t}\left(\sum_{\tau=1}^{t} R(S(\tau), \bm{\omega}(\tau)) -r(\tau)\right).$$ The second term is upper-bounded by $\sqrt{\frac{2\log T}{t}}$. For the first term, we have for every $\tau \in [1,t]$,

\resizebox{1\linewidth}{!}{
  \begin{minipage}{\linewidth}
  \begin{align*}
    &\Hat{R}_{\tau} - R(S(\tau), \bm{\omega}(\tau))\\\\
    =\; & R(S(\tau), \bm{\Hat{\omega}}(\tau)) - R(S(\tau),\bm{\omega}(\tau))\\\\
    \leq\; & R(S(\tau), \bm{\omega}^{\sf up}(\tau)) - R(S(\tau),\bm{\omega}(\tau))\\\\
    =\; & \frac{\sum_{k \in S(\tau)} r_k \omega^{\sf up}_k(\tau)}{1+ \sum_{k \in S(\tau)} \omega^{\sf up}_k(\tau)} - \frac{\sum_{k \in S(\tau)} r_k \omega_k(\tau)}{1+ \sum_{k \in S(\tau)} \omega_k(\tau)}\\\\
    \leq\; & \frac{\sum_{k \in S(\tau)} (\omega^{\sf up}_k(\tau) - \omega_k(\tau))}{1+ \sum_{k \in S(\tau)} \omega_k(\tau)}\\\\
    =\; & \frac{\sum_{k \in S(\tau)} \left(\frac{1+\delta^{(\tau)}}{1-\delta^{(\tau)}}\Hat{\omega}_k(\tau) - \omega_k(\tau)\right)}{1+ \sum_{k \in S(\tau)} \omega_k(\tau)} 
    + 
    \frac{\sum_{k \in S(\tau)} \delta^{(\tau)}_k}{1+ \sum_{k \in S(\tau)} \omega_k(\tau)}\\\\
    \leq\; & \frac{\sum_{k \in S(\tau)} \left(\frac{1+\delta^{(\tau)}}{1-\delta^{(\tau)}}\Hat{\omega}_k(\tau) - \omega_k(\tau)\right)}{1+ \sum_{k \in S(\tau)} \omega_k(\tau)} 
    + 
    \delta^{(\tau)}
    \\\\
    \leq\; & \frac{\sum_{k \in S(\tau)} \left(\frac{1+\delta^{(\tau)}}{1-\delta^{(\tau)}} 
    \mu^{+(\tau)}_k(\tau)
    - \omega_k(\tau)\right)}{1+ \sum_{k \in S(\tau)} \omega_k(\tau)}
    +
    \frac{1+\delta^{(\tau)}}{1-\delta^{(\tau)}} \cdot \frac{\sum_{k \in S(\tau)}
    \sqrt{\frac{2266 \mu^{+(\tau)}_k(\tau) \log NT}{\tilde{n}_k(\tau)}} + \frac{1364\log NT}{\tilde{n}_k(\tau)}}{1+ \sum_{k \in S(\tau)} \omega_k(\tau)}
    + 
    \delta^{(\tau)}\\\\
    \leq\; & \frac{\sum_{k \in S(\tau)} \left(\frac{1+\delta^{(\tau)}}{1-\delta^{(\tau)}} 
    \mu^{+(\tau)}_k(\tau)
    - \omega_k(\tau)\right)}{1+ \sum_{k \in S(\tau)} \omega_k(\tau)}
    +
    3 \frac{\sum_{k \in S(\tau)}
    \sqrt{\frac{2266 \mu^{+(\tau)}_k(\tau) \log NT}{\tilde{n}_k(\tau)}} + \frac{1364\log NT}{\tilde{n}_k(\tau)}}{1+ \sum_{k \in S(\tau)} \omega_k(\tau)}
    + 
    \delta^{(\tau)}.\\\\
    \end{align*}
  \end{minipage}
}
The first inequality follows from the fact that $S(\tau)$ is the optimal assortment for $\bm{\Hat{\omega}}(\tau)$ combined with Lemma \ref{lem:lipschitz} as $\bm{\Hat{\omega}}(\tau) \leq \bm{\omega}^{\sf up}(\tau)$. The second inequality is because $\bm{\omega}(\tau) \leq \bm{\omega}^{\sf up}(\tau)$. The third inequality is because and $\sum_{k \in S(\tau)} \delta^{(\tau)}_k \leq \delta^{(\tau)}$. The fourth inequality follows from the concentration bound we supposed in the beginning. And the last inequality holds because $\delta^{(\tau)} \leq \delta^{(t)} \leq \frac{1}{2}$ implying that $\frac{1+\delta^{(\tau)}}{1-\delta^{(\tau)}} \leq 3$. We bound each term in the above separately.

For the first term, we have,

\begin{align*}
    &\quad \frac{\sum_{k \in S(\tau)} \left(\frac{1+\delta^{(\tau)}}{1-\delta^{(\tau)}} 
    \mu^{+(\tau)}_k(\tau)
    - \omega_k(\tau)\right)}{1+ \sum_{k \in S(\tau)} \omega_k(\tau)}\\\\
    &=
    \frac{\sum_{k \in S(\tau)} \left(\frac{(1+\delta^{(\tau)})^2}{(1-\delta^{(\tau)})^2} 
    - 1\right)\omega_k(\tau)}{1+ \sum_{k \in S(\tau)} \omega_k(\tau)} + \left(\frac{1+\delta^{(\tau)}}{1-\delta^{(\tau)}}\right)^2 \cdot \frac{\sum_{k \in S(\tau)} \delta^{(\tau)}_k\omega_k(\tau)}{1+ \sum_{k \in S(\tau)} \omega_k(\tau)}\\\\
    &\leq
    \frac{4\delta^{(\tau)}}{(1-\delta^{(\tau)})^2} \cdot \frac{\sum_{k \in S(\tau)}  \omega_k(\tau)}{1+ \sum_{k \in S(\tau)} \omega_k(\tau)} + 9 \cdot \frac{\sum_{k \in S(\tau)} \delta^{(\tau)}_k\omega_k(\tau)}{1+ \sum_{k \in S(\tau)} \omega_k(\tau)}\\\\
    &\leq
    \frac{4\delta^{(\tau)}}{(\frac{1}{2})^2} \cdot \frac{\sum_{k \in S(\tau)}  \omega_k(\tau)}{1+ \sum_{k \in S(\tau)} \omega_k(\tau)} + 9 \cdot \frac{\sum_{k \in S(\tau)} \delta^{(\tau)}_k\omega_k(\tau)}{1+ \sum_{k \in S(\tau)} \omega_k(\tau)}\\\\
    &\leq
    16 \delta^{(\tau)} + 9 \delta^{(\tau)} \\
    &= 25 \delta^{(\tau)}\\
    \end{align*}
The first equality is just by replacing $\mu^{+(\tau)}_k(\tau)$ by its expression. The first inequality is by the fact that $$\frac{(1+x)^2}{(1-x)^2} - 1= \frac{4x}{(1-x)^2}$$ and that $\frac{1+\delta^{(\tau)}}{1-\delta^{(\tau)}} \leq 3$. The second inequality is by $\delta^{(\tau)} \leq \frac{1}{2}$ and the next inequality is because $\sum_{k \in S(\tau)} \delta^{(\tau)}_k \leq \delta^{(\tau)}$ and $\omega_k(\tau) \leq 1$ for every $k$ and $\tau$.

Now for the second term we have,

\begin{align*}
    &\quad \; 3 \frac{\sum_{k \in S(\tau)}
    \sqrt{\frac{2266 \mu^{+(\tau)}_k(\tau) \log NT}{\tilde{n}_k(\tau)}} + \frac{1364\log NT}{\tilde{n}_k(\tau)}}{1+ \sum_{k \in S(\tau)} \omega_k(\tau)}\\
    &=
    3 \frac{\sum_{k \in S(\tau)}
    \sqrt{\frac{2266 \frac{1+\delta^{(\tau)}}{1-\delta^{(\tau)}}\left(\omega_k(\tau)+\delta^{(\tau)}_k\right) \log NT}{\tilde{n}_k(\tau)}} + \frac{1364\log NT}{\tilde{n}_k(\tau)}}{1+ \sum_{k \in S(\tau)} \omega_k(\tau)}\\
    &\leq
    3 \frac{2\sum_{k \in S(\tau)}
    \sqrt{\frac{2266 \left(\omega_k(\tau) + \delta^{(\tau)}_k \right) \log NT}{\tilde{n}_k(\tau)}} + \frac{1364\log NT}{\tilde{n}_k(\tau)}}{1+ \sum_{k \in S(\tau)} \omega_k(\tau)}\\
    &=
    6\sqrt{2266 \log NT} \cdot \frac{\sum_{k \in S(\tau)}
    \sqrt{\frac{\omega_k(\tau) + \delta^{(\tau)}_k}{\tilde{n}_k(\tau)}}}{1+ \sum_{k \in S(\tau)} \omega_k(\tau)}
    + 
    4092 \log NT \frac{\sum_{k \in S(\tau)}
    \frac{1}{\tilde{n}_k(\tau)}}{1+ \sum_{k \in S(\tau)} \omega_k(\tau)}
    \\
    &\leq
    6\sqrt{2266 \log NT} \cdot \sqrt{\frac{\sum_{k \in S(\tau)} \omega_k(\tau) + \delta^{(\tau)}_k}{1+ \sum_{k \in S(\tau)} \omega_k(\tau)} \cdot \frac{\sum_{k \in S(\tau)}
    \frac{1}{\tilde{n}_k(\tau)}}{1+ \sum_{k \in S(\tau)} \omega_k(\tau)}}
    \\
    & \quad + 4092 \log NT \frac{\sum_{k \in S(\tau)}
    \frac{1}{\tilde{n}_k(\tau)}}{1+ \sum_{k \in S(\tau)} \omega_k(\tau)}
    \\\\
    &\leq
    6\sqrt{2266 \log NT} \cdot \sqrt{\frac{\delta^{(\tau)} + \sum_{k \in S(\tau)} \omega_k(\tau)}{1+ \sum_{k \in S(\tau)} \omega_k(\tau)} \cdot \frac{\sum_{k \in S(\tau)}
    \frac{1}{\tilde{n}_k(\tau)}}{1+ \sum_{k \in S(\tau)} \omega_k(\tau)}}
    \\
    & \quad + 4092 \log NT \frac{\sum_{k \in S(\tau)}
    \frac{1}{\tilde{n}_k(\tau)}}{1+ \sum_{k \in S(\tau)} \omega_k(\tau)}
    \\\\
    &\leq
    6\sqrt{2266 \log NT} \cdot \sqrt{\frac{\sum_{k \in S(\tau)}
    \frac{1}{\tilde{n}_k(\tau)}}{1+ \sum_{k \in S(\tau)} \omega_k(\tau)}} + 4092 \log NT \frac{\sum_{k \in S(\tau)}
    \frac{1}{\tilde{n}_k(\tau)}}{1+ \sum_{k \in S(\tau)} \omega_k(\tau)}.
    \\
\end{align*}
The first inequality holds by the fact that $\frac{1+\delta^{(\tau)}}{1-\delta^{(\tau)}} \leq 4$. The second is by Cauchy-Schwarz inequality. The third inequality is because $\sum_{k \in S(\tau)} \delta^{(\tau)}_k \leq \delta^{(\tau)}$ and the last inequality is because $\delta^{(\tau)} \leq \delta^{(t)} \leq 1$. Hence,

\begin{align*}
    &\sum_{\tau=1}^t \Hat{R}_{\tau} - R(S(\tau), \bm{\omega}(\tau))\\ 
    &\leq 
    26 t \delta^{(t)} + 6\sqrt{2266 \log NT} \cdot \sum_{\tau=1}^t \sqrt{\frac{\sum_{k \in S(\tau)}
    \frac{1}{\tilde{n}_k(\tau)}}{1+ \sum_{k \in S(\tau)} \omega_k(\tau)}} + 4092 \log NT \sum_{\tau=1}^t \frac{\sum_{k \in S(\tau)}
    \frac{1}{\tilde{n}_k(\tau)}}{1+ \sum_{k \in S(\tau)} \omega_k(\tau)}\\
    &\leq
    26 t \delta^{(t)} + 6\sqrt{2266 \log NT} \cdot \sqrt{t \cdot \sum_{\tau=1}^t \frac{\sum_{k \in S(\tau)}
    \frac{1}{\tilde{n}_k(\tau)}}{1+ \sum_{k \in S(\tau)} \omega_k(\tau)}} + 4092 \log NT \sum_{\tau=1}^t \frac{\sum_{k \in S(\tau)}
    \frac{1}{\tilde{n}_k(\tau)}}{1+ \sum_{k \in S(\tau)} \omega_k(\tau)}.\\
\end{align*}
The first inequality follows from the fact that $\delta^{(\tau)} \leq \delta^{(t)}$ for $\tau \leq t$, and the second one is an application of Cauchy-Schwarz inequality. Next fix some time step $u \in [1,t]$, we have,

\begin{align*}
    \sum_{\tau=1}^t \frac{\sum_{k \in S(\tau)}
    \frac{1}{\tilde{n}_k(\tau)}}{1+ \sum_{k \in S(\tau)} \omega_k(\tau)}
    &\leq
    \sum_{\tau=1}^t \frac{\sum_{k \in S(\tau)}
    \frac{1}{\tilde{n}_k(\tau)}}{1+ \sum_{k \in S(\tau)} \omega_k(u) - \delta^{(t)}_k}\\
    &=
    \sum_{j=1}^N\sum_{i=1}^{e_j^{(t)}} \frac{\left| l_j^{(t)}(i) \right| \cdot
    \frac{1}{\max\{1, i-1\}}}{1+ \sum_{k \in S\left(s_j^{(t)}(i)\right)} \omega_k(u) - \delta^{(t)}_k}\\
    &\leq
    \sum_{j=1}^N\sum_{i=1}^{e_j^{(t)}} \frac{5\log NT \left( 1 + \sum_{k \in S\left(s_j^{(t)}(i)\right)} \omega_k(u) + \delta^{(t)}_k\right)
    \frac{1}{\max\{1, i-1\}}}{1+ \sum_{k \in S\left(s_j^{(t)}(i)\right)} \omega_k(u) - \delta^{(t)}_k}\\
\end{align*}
The first inequality follows by the fact that $1 + \sum_{k \in S(\tau)}\omega_k(\tau) \geq 1 + \sum_{k \in S(\tau)}\omega_k(u) - \delta^{(t)}_k \geq 0$ for every $\tau, u \in [1,t]$. And equality holds because of the following: the sum consists of a term $\frac{\frac{1}{\tilde{n}_k(\tau)}}{1+ \sum_{k \in S(\tau)} \omega_k(u) - \delta^{(t)}_k}$ for each time step $\tau \in [1,t]$ and each item $k$ offered at $\tau$. Rearranging these terms, for each item $j$ and each epoch $i \in [1, e^{(t)}_j]$ it was proposed at, we have a term $\frac{\frac{1}{\tilde{n}_j(\tau)}}{1+ \sum_{k \in S(\tau)} \omega_k(u) - \delta^{(t)}_k}$ for each $\tau$ belonging to epoch $l^{(t)}_j(i)$, the equality follows by noticing that the assortment offered within epoch $l^{(t)}_j(i)$ is fixed to $S(s^{(t)}_j(i))$ for all time rounds of the epoch, and that $\tilde{n}_k(\tau)$ is fixed and equal to $\frac{1}{\max\{1, i-1\}}$ for all time rounds of the epoch. The last inequality follows from the concentration bound on the length of epochs. Next, note that,

\begin{align*}
    4 - \frac{1 + \sum_{k \in S\left(s_j^{(t)}(i)\right)} \omega_k(u) + \delta^{(t)}_k}{1+ \sum_{k \in S\left(s_j^{(t)}(i)\right)} \omega_k(u) - \delta^{(t)}_k}
    &=
    \frac{3 + 3 \sum_{k \in S\left(s_j^{(t)}(i)\right)} \omega_k(u) - 5 \sum_{k \in S\left(s_j^{(t)}(i)\right)} \delta^{(t)}_k}{1+ \sum_{k \in S\left(s_j^{(t)}(i)\right)} \omega_k(u) - \delta^{(t)}_k}\\
    &\geq
    \frac{3 - 5 \delta^{(t)}}{1+ \sum_{k \in S\left(s_j^{(t)}(i)\right)} \omega_k(u) - \delta^{(t)}_k}\\
    &\geq
    \frac{3- \frac{5}{2}}{1+ \sum_{k \in S\left(s_j^{(t)}(i)\right)} \omega_k(u) - \delta^{(t)}_k} \geq 0
\end{align*}
Hence,

\begin{align*}
    \sum_{\tau=1}^t \frac{\sum_{k \in S(\tau)}
    \frac{1}{\tilde{n}_k(\tau)}}{1+ \sum_{k \in S(\tau)} \omega_k(\tau)}
    &\leq
    20\log NT \cdot \sum_{j=1}^N\sum_{i=1}^{e_j^{(t)}}
    \frac{1}{\max\{1, i-1\}}\\
    &\leq
    20\log NT \cdot (N \log T + 2)\\
    &\leq 40N \log^2 NT
\end{align*}
For $N\log T \geq 2$, note that the number of epochs $e^{(t)}_j \leq T$. Therefore,

\begin{align*}
    \frac{1}{t} \sum_{\tau=1}^t \Hat{R}(\tau) - R(\tau) 
    &\leq
    26 \delta^{(t)} + 1807 (\log{NT})^{\frac{3}{2}}\sqrt{\frac{N}{t}} + 163680 (\log{NT})^{3} \frac{N}{t}.
\end{align*}
And finally,
\begin{align*}
    \frac{1}{t}\sum_{\tau=1}^{t} \Hat{R}(\tau) - r(\tau) 
    &\leq
    26 \delta^{(t)} + 1807 (\log{NT})^{\frac{3}{2}}\sqrt{\frac{N}{t}} + 163680 (\log{NT})^{3} \frac{N}{t} + \sqrt{\frac{2\log T}{t}}\\
    &\leq \sum_{\tau=1}^{t-1}\Delta(\tau) + \rho(t)
\end{align*}
\end{proof}

\section{Lower bounds: Omitted proofs}
\label{apx:lowerbounds}

\restatedependenceL*
\noindent Our proof for Theorem \ref{thm:dependenceL} uses the results of \cite{chen2017note} for stationary environments. In this paper, Chen and Wang give an optimal lower bound of $\Tilde{\Omega}(\sqrt{NT})$ for MNL-Bandit in stationary environments. Following \cite{chen2017note}, we suppose $K \leq \frac{N}{4}$. This assumption is easily verified in practice. In online retail for example, the number of items we can display to the customers at each round is usually small compared to the total number of items.

\vspace{3mm}
\begin{proof}
    Fix $T, L$, $N$ and $K \leq \frac{N}{4}$ and let $\mathcal{A}$ be a polynomial time algorithm for MNL-Bandit in non-stationary environments. If $L=1$, one can simply choose a (stationary) adversarial instance given by Theorem 1 of \cite{chen2017note} and the result holds for this case as the number of switches in a stationary environment is $L=1$. Otherwise, if $L \geq 2$, we partition the decision horizon into $L$ intervals of $\left\lceil \frac{T}{L} \right\rceil$ time steps each (except possibly the last interval which might have less time steps). We construct an adversarial instance recursively such that $\mathcal{A}$ suffers an expected regret of at least $C \cdot \min(\sqrt{N\left\lceil \frac{T}{L} \right\rceil}, \left\lceil \frac{T}{L} \right\rceil)$ in each one of the first $L-1$ intervals for some universal constant $C > 0$. In particular, suppose we fixed the adversarial instance for the first $l$ ($\leq L-2$) intervals. For the next $\left\lceil \frac{T}{L} \right\rceil$ time steps of the $l+1$-th interval, Theorem 1 in \cite{chen2017note} implies the existence of a (stationary) adversarial instance such that algorithm $\mathcal{A}$ (more precisely the randomized policy that $\mathcal{A}$ plays in the $l+1$-th interval conditioned on the part we have fixed of our adversarial instance) suffers a regret of at least $C \cdot \min(\sqrt{N\left\lceil \frac{T}{L} \right\rceil}, \left\lceil \frac{T}{L} \right\rceil)$ over the $l+1$-th interval for some universal constant $C > 0$. We choose such instance for the $l+1$-th interval and continue our construction. We complete our instance in the last interval ($L$-th interval) arbitrarily. Our construction has at most $L$ switches and the regret suffered by algorithm $\mathcal{A}$ over the time horizon $T$ is at least $(L-1) \cdot C \min(\sqrt{N\left\lceil \frac{T}{L} \right\rceil}, \left\lceil \frac{T}{L} \right\rceil) \geq \frac{C}{2} \cdot \min(\sqrt{NLT}, T)$.
\end{proof}

\restatedependenceDelta*
\noindent To prove Theorem \ref{thm:dependenceDelta}, we divide the time horizon into windows of convenient length and use the results of \cite{chen2017note} to construct a stationary adversarial instance for each window. With a good choice of these adversarial instances, our final instance is such that $\deltaIk \geq \frac{K}{2} \cdot \deltaI$.

\vspace{3mm}
\begin{proof}
    Fix $T$, $N$, $K \leq \frac{N}{4}$ and $\Delta \in [\frac{1}{N}, \frac{T}{N}]$ and let $\mathcal{A}$ be a polynomial time algorithm for MNL-Bandit in non-stationary environments. We partition the decision horizon into $\left\lceil\frac{T}{M}\right\rceil$ intervals of length $M$ (except perhaps the last interval which might have less time steps) for some $M$ to be decided later. Let $M_l$ denote the length of the $l$-th interval and let $\epsilon \in (0,1)$. We construct an adversarial instance against algorithm $\mathcal{A}$ recursively. In particular, suppose we fixed our adversarial instance for the first $l$ ($\leq \left\lceil\frac{T}{M}\right\rceil-1$) intervals. For the $(l+1)$-th interval we do the following: for every $\eta \in (0,\frac{1}{2})$, \cite{chen2017note} shows the existence of an adversarial strategy $\pi(\eta)$ against $\mathcal{A}$ where exactly $K$ items have an attraction parameter $\frac{1+\eta}{K}$ while every other item has an attraction parameter $\frac{1}{K}$, and against which $\mathcal{A}$ collects a regret of at least 
    $$
    \frac{\eta}{9}\left(\frac{2M_{l+1}}{3} - M_{l+1} \sqrt{\frac{126 \cdot M_{l+1} \cdot \eta^2}{N}}\right)
    $$
    over the $M_{l+1}$ rounds of $l+1$-th interval. If $l$ is even, we choose $\pi(\epsilon)$ for the $l+1$-th interval and continue our construction. If $l$ is odd, we choose $\pi(\frac{\epsilon}{2})$. Then with the choices, $$\epsilon = \min \left(\frac{1}{17} \sqrt{\frac{N}{M}}, \frac{1}{2} \Delta\cdot\frac{M}{T}\right) \quad (\leq \frac{1}{2})$$ and 
    $$
    M = \left\lceil N^{\frac{1}{3}} \left(\frac{T}{\Delta}\right)^{\frac{2}{3}} \right\rceil,
    $$
    the attraction parameters in the constructed instance change at most $\left\lceil\frac{T}{M}\right\rceil - 1$ times (between the intervals), each time there is a change, at most $2K$ elements have their attraction parameter changed by at most $\frac{\epsilon}{K}$ (in absolute value). The variation $\deltaIk$ of the constructed instance is therefore at most 
    $$
    \frac{T}{M}\cdot 2K \cdot \frac{\epsilon}{K} = \frac{2T}{M}\epsilon \leq \Delta.
    $$
    Also, by construction, between every two intervals (an odd and an even interval) at least $K$ items have their attraction parameters changed by at least $\frac{\epsilon}{2}$ (in absolute value). Since the change in norm $L_{\infty}$ (largest change happening in a single parameter)  between two intervals is at most $\epsilon$ this implies that 
    $$
    \deltaIk \geq \frac{K}{2} \deltaI.
    $$
    Now, the expected regret suffered by $\mathcal{A}$ over the whole time horizon is at least,
    \begin{align*}
        \sum_{l=1}^{\left\lceil\frac{T}{M}\right\rceil} \frac{1}{9} \cdot\frac{\epsilon}{2}\left(\frac{2M_l}{3} - M_l \sqrt{\frac{126M_l\epsilon^2}{N}}\right)
        &\geq
        \sum_{l=1}^{\left\lceil\frac{T}{M}\right\rceil} \frac{1}{9} \cdot\frac{\epsilon}{2}\left(\frac{2M_l}{3} - M_l \sqrt{\frac{126M\epsilon^2}{N}}\right)\\
        &=
        \frac{1}{9} \cdot\frac{\epsilon}{2}\left(\frac{2T}{3} - T \sqrt{\frac{126M\epsilon^2}{N}}\right)\\
        &=
        \frac{\epsilon T}{18}\left(\frac{2}{3} -  \sqrt{126} \left(\epsilon \sqrt{\frac{M}{N}}\right)\right)\\
        &\geq
        \frac{\epsilon T}{18}\left(\frac{2}{3} - \frac{\sqrt{126}}{17}\right)\\
        &\geq
        C \cdot \epsilon T\\
        &\geq
        C \cdot T \cdot \min \left(\frac{1}{17} \sqrt{\frac{N}{M}}, \frac{1}{2} \Delta\cdot\frac{M}{T}\right)\\
        &\geq
        C \cdot T \cdot \min \left(\frac{1}{17} N^{\frac{1}{3}} T^{-\frac{1}{3}} \Delta^{\frac{1}{3}}, \frac{1}{2} N^{\frac{1}{3}} \Delta^{\frac{1}{3}} T^{-\frac{1}{3}}\right)\\
        &\geq
        \frac{C}{17} N^{\frac{1}{3}} T^{\frac{2}{3}} \Delta^{\frac{1}{3}}\\
    \end{align*}
    where $C > 0.0003$ is a absolute constant independent of the parameters of the problem. The first inequality holds because $M_l \leq M$ for every $l$ and the following equality is because $\sum_{l=1}^{\left\lceil\frac{T}{M}\right\rceil} M_l = T$.
\end{proof}

\end{document}